\documentclass{article}

\PassOptionsToPackage{numbers,compress}{natbib}
\usepackage[preprint]{neurips_2026}

\usepackage{graphicx} 
\usepackage[utf8]{inputenc} 
\usepackage{enumitem} 
\usepackage[T1]{fontenc}    
\usepackage{hyperref}       
\hypersetup{hidelinks}
\usepackage{url}            
\usepackage{booktabs}       
\usepackage{amsfonts}       
\usepackage{nicefrac}       
\usepackage{microtype}      
\usepackage{xcolor}         
\usepackage{comment}

\usepackage{algorithm}
\usepackage{algpseudocode}
\usepackage{amssymb} 
\usepackage{amsmath}
\usepackage{amsthm}
\newtheorem{theorem}{Theorem}
\newtheorem{proposition}{Proposition}
\newtheorem{corollary}{Corollary}

\newtheorem{lemma}{Lemma}
\usepackage{multirow}
\usepackage{subcaption} 
\usepackage{pifont}
\newcommand{\cmark}{\ding{51}}
\newcommand{\xmark}{\ding{55}}

\usepackage{longtable}
\usepackage{float}
\usepackage{wrapfig}
\usepackage{caption}
\title{SPACE: Sample-cloud Predictive Adaptive Conformal Ellipsoids for Multivariate Time-Series Forecasting}

\author{
Baishi Li$^{1}$ \qquad
Kelvin J.L. Koa$^{2}$ \qquad
Ke-Wei Huang$^{1,2}$ \\[4pt]
$^{1}$Department of Information Systems and Analytics,
National University of Singapore\\
$^{2}$Asian Institute of Digital Finance,
National University of Singapore\\[2pt]
\texttt{baishili@u.nus.edu, kelvin.koa@u.nus.edu, dishkw@nus.edu.sg}
}

\begin{document}

\maketitle

\begin{abstract}
Modern probabilistic time-series forecasters often express uncertainty through forecast samples. While typically converted into nominal prediction regions using empirical quantiles, these model-implied sets lack formal coverage guarantees and frequently deviate from nominal targets under distribution shift. Existing multivariate conformal methods can calibrate these regions online, but they typically estimate geometry from historical residuals using fixed or accumulating look-back windows. This reliance on the past limits their ability to exploit the instantaneous dependence structure of current predictions and leaves them vulnerable to stale-regime contamination. To address this, we propose \textsc{Space}, a conformal wrapper for sample-generating multivariate forecasters. \textsc{Space} constructs ellipsoidal joint prediction regions by estimating time-local covariance geometry directly from the current forecast sample cloud, calibrating the region's radius via a dynamic backward window-selection scheme. Across diverse multivariate datasets, probabilistic forecasters, and conformal baselines, \textsc{Space} consistently brings realized joint and rolling coverage closer to the nominal target, achieving superior coverage-efficiency tradeoffs relative to competing wrappers.

\end{abstract}

\section{Introduction}
\label{sec:intro}
While modern deep forecasting architectures---spanning diffusion \cite{rasul2021timegrad,li2024tmdm,ye2025nsdiff}, autoregressive \cite{salinas2020deepar,rangapuram2018deepstate}, copula \cite{tactis2,salinas2019gpcopula}, and flow-based models \cite{rasul2020cnf,kollovieh2024tsflow}---excel at generating multivariate sample clouds, converting these rich representations into statistically calibrated joint prediction regions under non-stationarity remains an open challenge. Building on foundational conformal prediction work \cite{vovk2005algorithmic,lei2018distribution,angelopoulos2023gentle}, recent methods have made major strides in sequential coverage calibration \cite{xu2021enbpi,gibbs2021aci,angelopoulos2023conformal}, often by using historical residual covariance to shape multivariate prediction sets \cite{xu2024multidimspci,messoudi2022ellipsoidal}. This approach is natural and effective, but it can adapt slowly when the instantaneous uncertainty geometry changes abruptly. In this paper, we build on these foundations by showing that robustness to sudden shifts can be improved by separating two roles that are often coupled: geometry estimation and conformal radius calibration. Our method shapes joint regions using the forecaster's current predictive sample covariance, while calibrating the radius using an adaptively selected, same-regime historical window.

The practical consequences of delayed geometric adaptation become visible under non-stationary dynamics. As illustrated in Figure~\ref{fig:intro_problem}(a), raw sample-based regions from modern forecasters can be substantially miscalibrated at the joint level across diverse datasets, often deviating due to model misspecification and finite-sample error \cite{romano2019cqr,marx2023distributionmatching}. Furthermore, as shown in Figure~\ref{fig:intro_problem}(b), this realized coverage often drifts away from the nominal $90\%$ target over time. Many multivariate conformal baselines correct this coverage gap by relying on past residuals to estimate both region geometry and calibration thresholds. After abrupt regime changes, this coupling creates a difficult trade-off: expanding the radius may recover coverage but sacrifice efficiency, while maintaining a tight region with outdated geometry preserves efficiency at the cost of local coverage. Even online adaptive methods, such as Adaptive Conformal Inference (ACI) \cite{gibbs2021aci,tibshirani2019weighted}, adjust only the scalar miscoverage rate, leaving the geometric shape inherited from historical residuals unchanged. The closest prior work, MultiDimSPCI \cite{xu2024multidimspci}, constructs ellipsoidal regions from historical residual covariance within a fixed-length sequential window. We address this limitation by ensuring that both the region's geometry and its calibration window are chosen adaptively in response to the current forecast state.

\begin{figure*}[t]
    \centering
    \begin{minipage}[t]{0.35\linewidth}
        \centering
        \includegraphics[width=\linewidth]{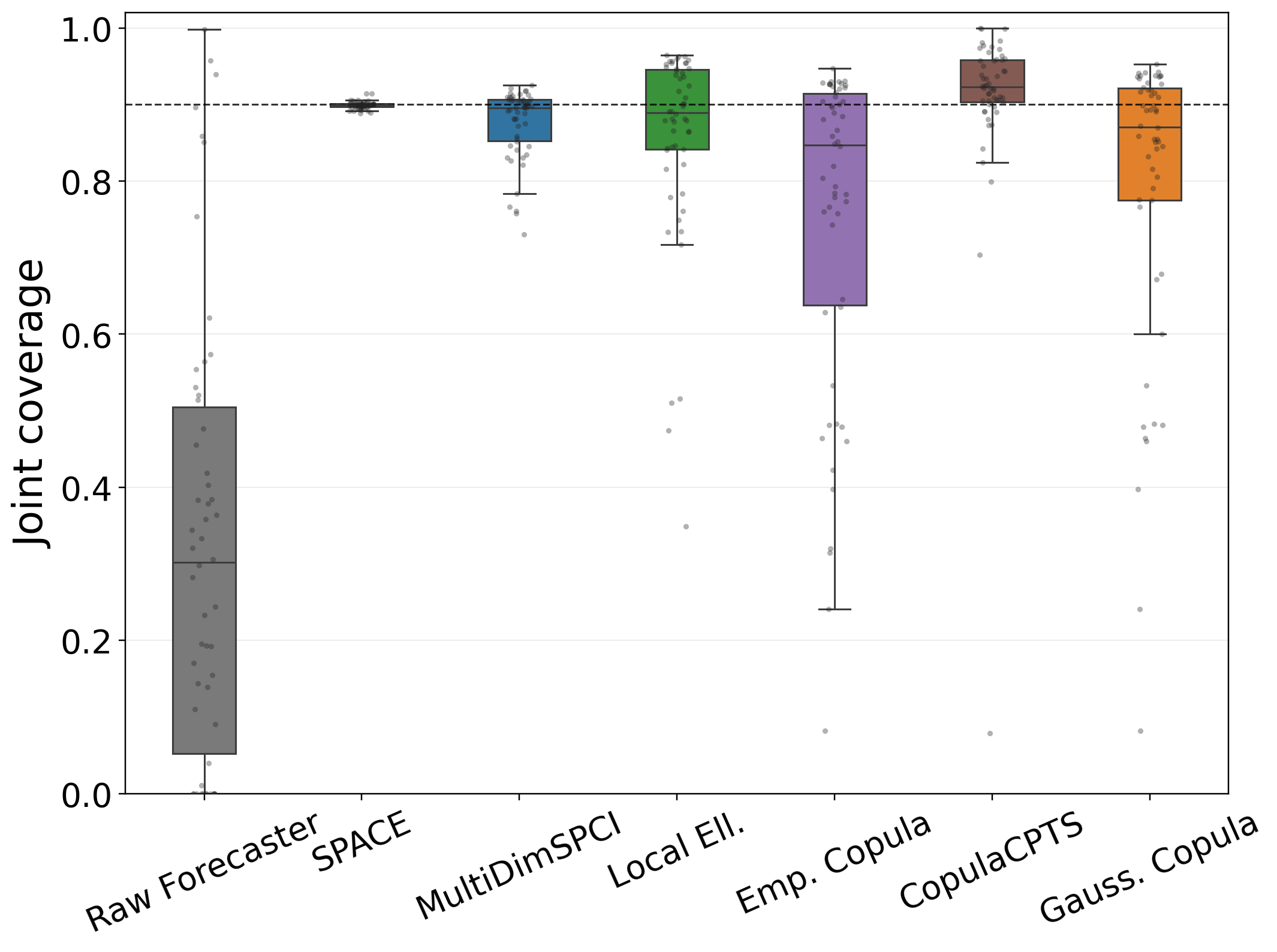}
        {\tiny (a)}
    \end{minipage}\hfill
    \begin{minipage}[t]{0.64\linewidth}
        \centering
        \includegraphics[width=\linewidth]{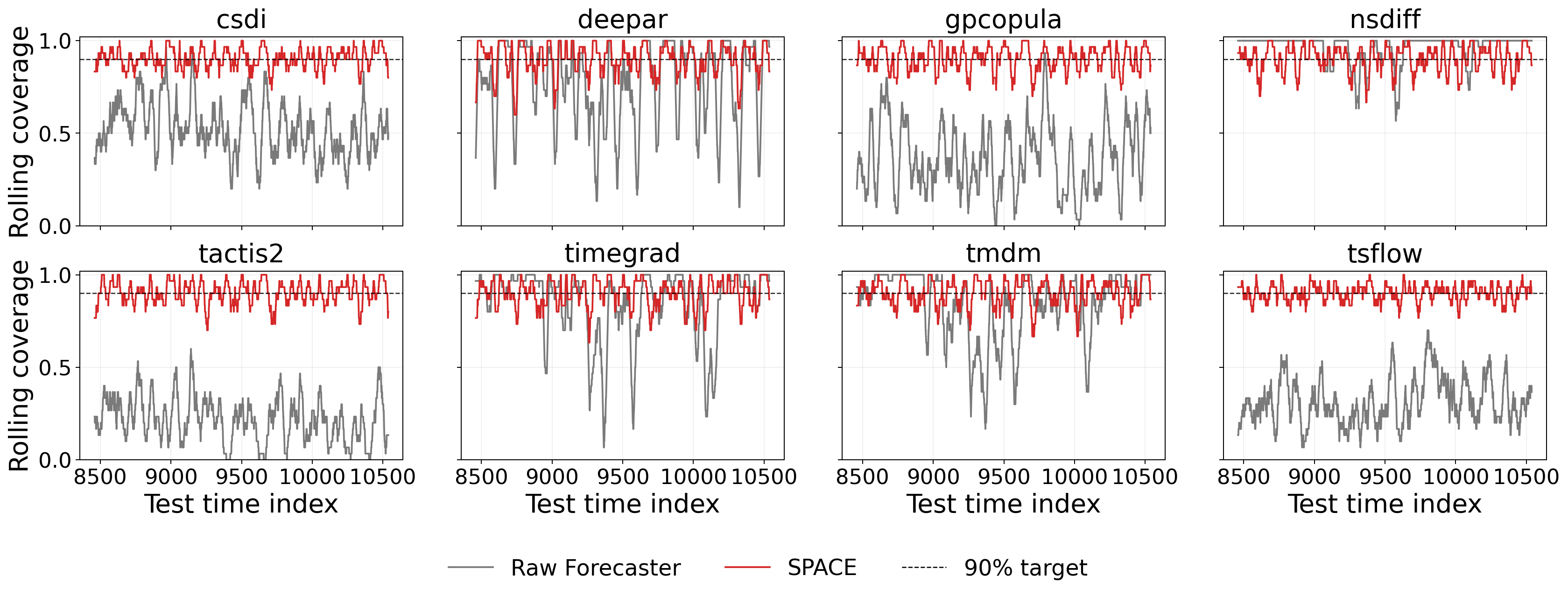}
        {\tiny (b)}
    \end{minipage}
    \caption{Joint coverage at the 90\% target. Panel (a) shows the distribution of empirical joint coverage. The x-axis evaluates different constructions: \texttt{Raw Forecaster} denotes native sample-based regions without conformal calibration, \textsc{Space} is our method, and the rest are competing conformal baselines. The y-axis shows empirical joint coverage, with the dotted line marking the 90\% target; better methods concentrate near this line. Panel (b) shows rolling joint coverage on the \texttt{weather} dataset across forecasters. Raw rolling coverage (gray) is unstable and systematically off-target, whereas \textsc{Space} (red) stays much closer to the desired level over time.}
    \label{fig:intro_problem}
\end{figure*}

To address these limitations, we introduce \textsc{Space} (Sample-cloud Predictive Adaptive Conformal Ellipsoids), a post-hoc conformal wrapper that directly exploits the probabilistic capabilities of the underlying forecaster. Our primary contributions are summarized as follows:

\begin{itemize}[topsep=0pt, itemsep=1pt]
    \item \textbf{Decoupling geometry from calibration.} We formalize a principle for multivariate conformal forecasting under non-stationarity: the forecaster's predictive samples should determine the local \textit{shape} of the joint region, while conformal calibration scores should determine its \textit{radius}. Based on this principle, we introduce \textsc{Space}, a post-hoc wrapper that constructs ellipsoidal regions using the time-local covariance of the current predictive sample cloud, reducing the adaptation delay associated with residual-based conformal geometries.
    \item \textbf{Regime-local adaptive calibration.} We propose a backward same-regime testing procedure for calibrating the radius of the forecast-conditioned ellipsoid. By adaptively selecting the longest recent calibration window whose conformity scores remain statistically compatible with the current regime, the method reduces the influence of outdated dynamics while balancing local coverage and statistical efficiency in sequential prediction.
    \item \textbf{Theoretical guarantees and error decomposition.} We establish finite-sample joint coverage under an idealized same-regime assumption and derive a sequential local coverage-gap bound that separates the effects of sample-based geometry estimation, adaptive window selection, and finite calibration sample size. This analysis clarifies how predictive sample size and calibration-window quality influence local coverage error.
    \item \textbf{Extensive empirical validation.} We evaluate \textsc{Space} across 7 multivariate datasets spanning energy, finance, weather, and traffic domains, 8 diverse probabilistic forecasters (including diffusion, flow, and autoregressive architectures), and 5 competing conformal baselines (MultiDimSPCI, local ellipsoidal CP, and copula-based CP). Across these diverse benchmark pairs, \textsc{Space} achieves a median realized joint coverage gap of just 0.2 percentage points from the nominal target. Furthermore, it reduces the overall mean coverage gap by over 80\% relative to the strongest baseline (MultiDimSPCI) while consistently improving local calibration stability under distribution shift.    
\end{itemize}

\section{Related Work}
\label{sec:related}

\paragraph{Sample-generating probabilistic forecasters.}
Modern forecasters—including autoregressive \cite{salinas2020deepar}, diffusion \cite{rasul2021timegrad,tashiro2021csdi,li2024tmdm,ye2025nsdiff}, copula \cite{salinas2019gpcopula,tactis2}, and flow-based models \cite{kollovieh2024tsflow}—encode joint dependencies through predictive sample clouds. However, these native sample regions remain model-implied rather than coverage-calibrated. We treat these models as black boxes, converting their raw predictive samples into formally calibrated joint prediction regions without altering the underlying architecture.

\paragraph{Sequential, localized, and online conformal prediction.}
A growing literature studies conformal prediction beyond the exchangeable setting. Methods such as EnbPI \cite{xu2021enbpi}, Adaptive Conformal Inference (ACI) \cite{gibbs2021aci}, and Conformal PID \cite{angelopoulos2023conformal} robustly adapt calibration thresholds or miscoverage levels online. Recent extensions have improved online tracking \cite{bhatnagar2023online}, rolling risk control \cite{angelopoulos2023conformal}, covariate shift and localized conformal prediction \cite{tibshirani2019weighted,barber2023conformal,guan2023localized}. However, these methods do not specify a forecast-conditioned geometry for \textbf{conformalized multivariate joint regions}.

\paragraph{Multivariate and ellipsoidal conformal prediction.} Prior work has developed conformal prediction for cross-sectional multivariate regression using structured dependence via copulas \cite{messoudi2021copula,sun2024copulacpts} and ellipsoidal sets \cite{messoudi2022ellipsoidal,johnstone2025exact,feldman2023conformal}. For multivariate time series, MultiDimSPCI \cite{xu2024multidimspci} is the closest benchmark to our work. It constructs sequential ellipsoidal regions by using historical residual covariance and handles temporal dependence through sequential calibration within a fixed-length window. \textsc{Space} differs by extracting the ellipsoidal geometry from the current predictive sample cloud---rather than from a historical surrogate---and by adaptively selecting the calibration window via same-regime testing. Another approach, FCP \cite{lee2026fcp}, targets multivariate time series by learning a context-conditioned residual flow, whereas \textsc{Space} requires no auxiliary generative model fitting.

\paragraph{Positioning of \textsc{Space}.}
\textsc{Space} is a conformal wrapper that simultaneously (i) operates on sample-generating multivariate forecasters without auxiliary model fitting, (ii) derives its multivariate geometry from the current predictive sample cloud rather than historical residuals, and (iii) adapts its calibration window to the current regime. Table~\ref{tab:positioning} summarizes these structural differences.

\begin{table}[t]
\centering
\caption{Positioning of \textsc{Space} relative to representative conformal approaches for multivariate and sequential prediction.}
\label{tab:positioning}
\small
\setlength{\tabcolsep}{4pt}
\resizebox{\linewidth}{!}{%
\begin{tabular}{lccccc}
\toprule
Method 
& Sequential TS 
& Joint multivariate set 
& Geometry source 
& Requires auxiliary CP model 
& Adaptive calibration window \\
\midrule
Adaptive TS CP \cite{xu2021enbpi,gibbs2021aci,angelopoulos2023conformal}
& \cmark & \xmark 
& Scalar / marginal residuals 
& \xmark 
& \xmark$^*$ \\
MultiDimSPCI \cite{xu2024multidimspci}
& \cmark & \cmark 
& Historical residual covariance 
& \xmark 
& \xmark \\
Copula-based CP \cite{messoudi2021copula,sun2024copulacpts}
& Mixed & \cmark 
& Fitted copula dependence 
& \cmark 
& \xmark \\
FCP \cite{lee2026fcp}
& \cmark & \cmark 
& Learned residual flow 
& \cmark 
& \xmark \\
\textsc{Space} (ours)
& \cmark & \cmark 
& Current predictive samples 
& \xmark 
& \cmark \\
\bottomrule
\multicolumn{6}{l}{\footnotesize $^*$These methods perform adaptive online scalar updates, but do not adaptively select contiguous historical calibration windows.}
\end{tabular}%
}
\end{table}

\section{Method: \textsc{Space}}
\label{sec:method}

We consider a probabilistic multivariate forecaster that outputs a set of $M$ predictive samples (e.g., generated trajectories from a conditional generative model) for a target $y_t \in \mathbb{R}^d$, where $d$ spans multiple variables, forecast horizons, or both. We denote this predictive sample cloud by
\begin{equation}
\widehat{\mathcal{Y}}_t = \{\hat y_t^{(m)}\}_{m=1}^M,
\qquad \hat y_t^{(m)} \in \mathbb{R}^d.
\label{eq:sample_cloud}
\end{equation}
Given a target miscoverage level $\alpha \in (0,1)$, our goal is to construct a joint prediction set
\begin{equation}
\mathcal{C}_t(\alpha) \subseteq \mathbb{R}^d
\quad \text{such that} \quad
\Pr\!\bigl(y_t \in \mathcal{C}_t(\alpha)\bigr) \approx 1-\alpha,
\label{eq:prediction_goal}
\end{equation}
while maximizing statistical efficiency.

Operating strictly causally over a chronological stream $\{(\widehat{\mathcal{Y}}_t, y_t)\}_{t=1}^T$, \textsc{Space} constructs this set sequentially. At prediction time $t$, it uses only the current sample cloud $\widehat{\mathcal{Y}}_t$ and realized past data ($s<t$). The method consists of two core components: First, \textsc{Space} extracts a time-local multivariate geometry directly from $\widehat{\mathcal{Y}}_t$ to shape the region (Section~\ref{subsec:local_geometry}). Second, it calibrates the ellipsoid's radius via a sequential same-regime procedure that dynamically selects the optimal historical calibration window (Section~\ref{subsec:dynamic_window}).

\subsection{Sample-derived local geometry}
\label{subsec:local_geometry}
For each time step $t$, we first construct a median-based center $\mu_t$ from $\widehat{\mathcal{Y}}_t=\{\hat y_t^{(m)}\}_{m=1}^M$, and form predictive residual samples
$r_t^{(m)} = \hat y_t^{(m)} - \mu_t$. To account for varying marginal scales, we compute the empirical covariance of the standardized residuals:
\begin{equation}
\widehat C_t
=
\operatorname{Cov}\!\bigl(\tilde r_t^{(1)},\dots,\tilde r_t^{(M)}\bigr),
\label{eq:samplecov}
\end{equation}
where $\tilde r_t^{(m)} = D^{-1} r_t^{(m)}$ and $D = \operatorname{diag}(s_1,\dots,s_d)$ with $s_j>0$ being the coordinate-wise scale estimate fitted from the wrapper-training segment. Mapping this covariance back to the original scale gives the local multivariate geometry at time $t$:
\begin{equation}
    \widehat\Sigma_t = D \widehat C_t D.
\label{eq:sigmahat}
\end{equation}
Crucially, as we establish in Proposition~\ref{prop:sample_cov_concentration}, this sample-derived covariance acts as a statistically consistent proxy that robustly concentrates around the forecaster's true conditional geometry.

We define the radial Mahalanobis-type nonconformity score at time $t$ as
\begin{equation}
e_t
=
\sqrt{(y_t-\mu_t)^\top \widehat\Sigma_t^{+}(y_t-\mu_t)}.
\label{eq:radial_score}
\end{equation}
Here, $\widehat\Sigma_t^{+}$ denotes the stabilized pseudo-inverse of the local covariance matrix $\widehat\Sigma_t$. Since $\widehat\Sigma_t$ is symmetric positive semidefinite, it admits a spectral decomposition $\widehat\Sigma_t = U_t \Lambda_t U_t^\top$, where $\Lambda_t=\operatorname{diag}(\lambda_{t,1},\dots,\lambda_{t,d})$ contains the nonnegative eigenvalues. The stabilized pseudo-inverse is formed by inverting only eigenvalues above an adaptive tolerance $\rho_t$:
\begin{equation}
\widehat\Sigma_t^{+}
=
U_t \operatorname{diag}\!\left(
\frac{\mathbf{1}\{\lambda_{t,i}>\rho_t\}}{\lambda_{t,i}}
\right) U_t^\top.
\label{eq:sigma_pinv}
\end{equation}
The score in \eqref{eq:radial_score} therefore measures how far the realized target lies from the center of the forecast cloud relative to its implied local covariance geometry.

\subsection{Dynamic same-regime calibration window selection}
\label{subsec:dynamic_window}
At forecast time $t$, let $\mathcal P_t=\{t-p,\dots,t-1\}$ denote the recent probe block of length $p$. \textsc{Space} searches backward over candidate calibration lengths
\begin{equation}
L \in \{L_{\min},\,L_{\min}+h,\,L_{\min}+2h,\dots,L_{\max}\},
\label{eq:candidate_lengths}
\end{equation}
where $h$ is the backward extension step size. For each candidate length $L$, define the full candidate block $\mathcal C_t^{\mathrm{whole}}(L)=\{t-p-L,\dots,t-p-1\}$ and the newly added increment $\mathcal B_t^{\mathrm{inc}}(L)=\{t-p-L,\dots,t-p-L+h-1\}$.

While the radial score $e_t$ captures overall ellipsoidal distance, it may miss regime shifts that manifest purely through coordinate-wise extremes or directional changes. Therefore, for the conformal uniformity test, we deploy a bundle of three complementary scalar diagnostics. For any observed time \(s\), let \(\tilde r_s=D^{-1}(y_s-\mu_s)\) denote the realized standardized residual:
\begin{equation}
s_{s,1}=e_s,\qquad
s_{s,2}=\|\tilde r_s\|_\infty,\qquad
s_{s,3}=|v_1^\top \tilde r_s|,
\label{eq:diagnostic_bundle}
\end{equation}
where $v_1$ is the top eigenvector of the training residual covariance in standardized space. Thus, $s_{s,1}$ captures radial size, $s_{s,2}$ captures extremeness, and $s_{s,3}$ captures variation along a dominant direction.

\paragraph{Conformal same-regime uniformity test.}
For each diagnostic $j \in \{1,2,3\}$ and candidate block $\mathcal A \in \{\mathcal C_t^{\mathrm{whole}}(L),\,\mathcal B_t^{\mathrm{inc}}(L)\}$, \textsc{Space} compares the probe scores $\{s_{u,j}:u\in\mathcal P_t\}$ with the calibration scores $\{s_{v,j}:v\in\mathcal A\}$ using randomized conformal p-values:
\begin{equation}
U_{u,j}(\mathcal A)=\frac{1+\#\{v \in \mathcal A : s_{v,j} > s_{u,j}\}+\xi_{u,j}\,\#\{v \in \mathcal A : s_{v,j} = s_{u,j}\}}{|\mathcal A| + 1},\qquad\xi_{u,j}\sim \mathrm{Unif}(0,1).
\label{eq:probe_pvalue}
\end{equation}
The finite-sample validity of this test is formalized in Assumption~(A2): clean same-regime blocks have controlled false-rejection probability, while contaminated extensions have controlled false-acceptance probability. Proposition~\ref{prop:window_recovery} shows that these local test guarantees imply recovery of the oracle same-regime window with high probability. Accordingly, $\mathcal A$ passes the test if, for each diagnostic $j$:
\begin{equation}
\mathrm{KS}_{t,L,j}^{\,\mathcal A}
=
\sup_{x \in [0,1]}
\left| \widehat F_{t,L,j}^{\,\mathcal A}(x) - x \right|
<
\tau_{t,L,j}^{\,\mathcal A},
\qquad j=1,2,3,
\label{eq:same_regime_test}
\end{equation}
where $\widehat F_{t,L,j}^{\,\mathcal A}(x)$ is the empirical CDF of the probe p-values, $\mathrm{KS}_{t,L,j}^{\,\mathcal A}$ is the Kolmogorov--Smirnov discrepancy, and $\tau_{t,L,j}^{\,\mathcal A}$ is the familywise-corrected Dvoretzky--Kiefer--Wolfowitz (DKW) threshold.

\paragraph{Sequential extension rule.}
Starting from $L_{\min}$, \textsc{Space} extends the calibration window from $L$ to $L+h$ only if both $\mathcal C_t^{\mathrm{whole}}(L)$ and $\mathcal B_t^{\mathrm{inc}}(L)$ pass \eqref{eq:same_regime_test}. This dual-check guards against dilution, since a stale older increment could be masked by the previously accepted history if only the full block were tested. The selected calibration length $L_t$ is the last accepted length before the first rejection. 

\subsection{Prediction Set}
\label{subsec:prediction_set}

Given $L_t$, \textsc{Space} calibrates the radius using the same-regime score history and the probe block. Let
\begin{equation}
\mathcal H_t(L_t)
=
\{e_s : s \in \mathcal C_t^{\mathrm{whole}}(L_t) \cup \mathcal P_t\}
\label{eq:selected_score_history}
\end{equation}
denote the final calibration score set available at time $t$. The radius is defined as
\begin{equation}
r_t
=
Q_{1-\alpha_t}\!\bigl(\mathcal H_t(L_t)\bigr),
\label{eq:radius_quantile}
\end{equation}
where $Q_{1-\alpha_t}(\cdot)$ is the empirical quantile at level $1-\alpha_t$, and $\alpha_t$ is updated online using adaptive conformal inference (ACI) \cite{gibbs2021aci}. The final prediction set at time $t$ is the ellipsoid
\begin{equation}
\mathcal C_t(\alpha)
=
\left\{
y \in \mathbb{R}^d :
(y-\mu_t)^\top \widehat\Sigma_t^{+}(y-\mu_t)
\le r_t^2
\right\}.
\label{eq:prediction_set}
\end{equation}

Figure~\ref{fig:geometry_weather_csdi_d11,12} visualizes the local prediction geometry used by our method. The ellipses are not fixed in either orientation or aspect ratio: instead, they rotate, elongate, or contract over time to match the local structure of the forecast sample cloud. This behavior is a direct consequence of our time-local covariance estimated from the contemporaneous generative sample cloud. 

\begin{figure}[t]
    \centering
    \includegraphics[width=0.5\linewidth]{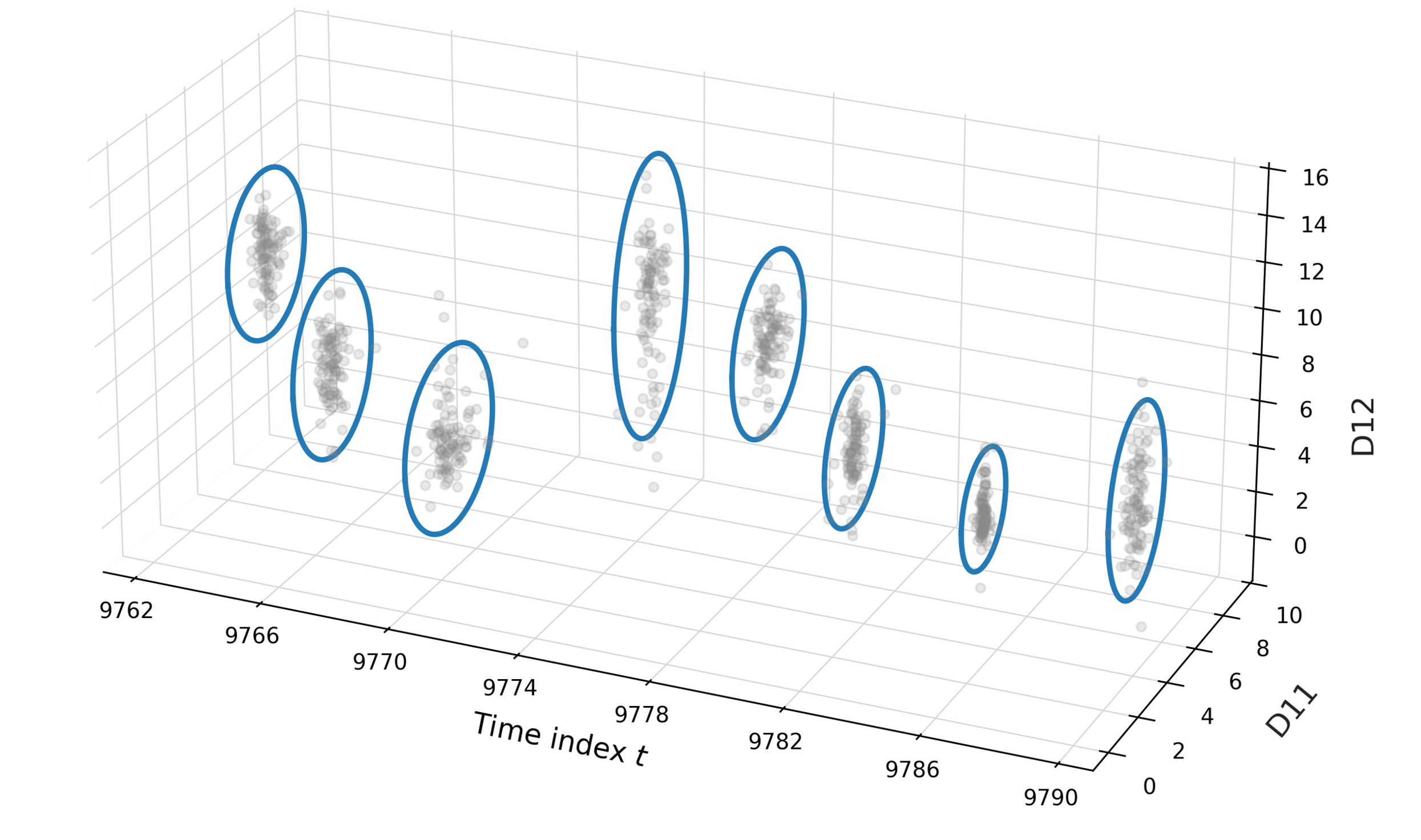}
    \caption{Illustration of local prediction geometry over time on the \texttt{Weather--CSDI} stream, shown on randomly selected target dimensions ($d_{11}$ and $d_{12}$) and time indices. Each ellipse is the 2D projection of \textsc{Space}'s ellipsoidal prediction region, while the gray points denote the forecast sample cloud.}
    \label{fig:geometry_weather_csdi_d11,12}
\end{figure}

By rigorously decoupling the sample-based shape estimation from the same-regime scale calibration, \textsc{Space} transcends empirical heuristics to achieve the formal finite-sample coverage guarantees and theoretical error decomposition presented next in Section~\ref{sec:theory}.

\paragraph{Computational Efficiency.}
Despite performing a dynamic backward search at every time step, \textsc{Space} introduces minimal operational overhead. The backward search relies strictly on one-dimensional empirical CDF comparisons (KS test) over the pre-computed scalar $s_{t,j}$, thereby bypassing repeated matrix inversions. Algorithm~\ref{alg:space} in Appendix \ref{app:appendix_algo} summarizes the full execution.

\section{Theoretical Analysis}
\label{sec:theory}

We analyze through: (i) finite-sample concentration of the sample-cloud covariance, (ii) recovery of a same-regime calibration window, and most importantly --- (iii) a sequential coverage-error decomposition. The four sets of formal assumptions and other proofs are deferred to Appendix~\ref{app:theory_proofs}.

\paragraph{Assumptions (informal).}
Let $\mathcal F_t$ be the filtration generated by realized targets, forecast clouds, and auxiliary randomness up to time $t$. We define $\mathbb P_t(\cdot) = \mathbb P(\cdot \mid \mathcal F_{t-1})$ as the probability conditional on the immediate past. For same-regime properties, we use $\mathbb Q_t(\cdot) = \mathbb P(\cdot \mid \mathcal F_{\tau_t})$, conditioning only on information prior to the most recent change point $\tau_t$. Our per-time coverage bounds are thus conditional; unconditional variants follow by integrating over the past.

\textbf{(A1) Sample-cloud regularity and stable score geometry:} conditional on the past, the $M$ forecast samples are i.i.d. sub-Gaussian draws from the forecaster-implied law, their standardized population covariance is well defined, and the resulting Mahalanobis conformal set is locally stable under small perturbations of the sample-cloud covariance \cite{wainwright2019high, xu2024multidimspci}. This assumption provides the regularity needed for the $L_{\Sigma}$ term below and rules out discontinuities caused by atoms in the ideal score distribution or eigenvalues crossing the pseudo-inverse threshold.\footnote{Assumption A1 is natural for generative architectures whose output samples are obtained by passing light-tailed latent variables through sufficiently regular maps. For example, if a standard Gaussian latent variable is transformed by an $L$-Lipschitz generator $f_\theta$, Gaussian concentration implies sub-Gaussian output behavior up to the Lipschitz scale \cite{vershynin2018high}. Spectral normalization is one common way to encourage such Lipschitz control in neural generators \cite{miyato2018spectral}.}
\textbf{(A2) Same-regime test calibration and shift detectability:} clean candidate blocks are rejected by the same-regime test with probability at most $\delta_{\mathrm{uni}}$, while every contaminated extension has at least one diagnostic/block comparison whose false-acceptance probability is at most $\beta_{\mathrm{det}}$ \cite{xie2012change, bhatnagar2023online}. For the KS-style version used by \textsc{Space}, a sufficient DKW-type form is $\beta_{\mathrm{det}}=2\exp(-2|\mathcal P_t|\Delta_{\mathrm{sep}}^2)$ under a population separation margin $\Delta_{\mathrm{sep}}>0$.
\textbf{(A3) Piecewise augmented exchangeability:} within each stationary regime, the augmented forecast--target objects $\mathcal Z_t=(Y_t,\widehat{\mathcal Y}_t)$ are conditionally exchangeable, and the ideal conformal scores used for coverage inherit this same-regime exchangeability \cite{barber2023conformal, gibbs2021aci}.
\textbf{(A4) Adaptive-level validity:} the adaptive miscoverage level $\alpha_t$ is causal and does not use the oracle test score. Formally, its effect is controlled by a tracking term $\varepsilon_t^{\mathrm{ACI}}$ together with a rank-validity condition for the oracle same-regime conformal predictor using $\alpha_t$, up to the usual finite calibration-sample error \cite{gibbs2021aci, angelopoulos2023conformal}.

\paragraph{Step 1: Geometry Concentration.}
First, we establish that the empirical sample-cloud covariance $\widehat C_t$ reliably estimates the forecaster-implied conditional geometry.
\begin{proposition}[Sample-cloud covariance concentration]
\label{prop:sample_cov_concentration}
Under (A1), for any fixed prediction time $t$ and any $\delta\in(0,1)$, there exists a universal constant $c_0>0$ such that, conditional on $\mathcal F_{t-1}$,
\begin{equation}
\mathbb P_t\!\left(
\left\|\widehat C_t-C_t^\star\right\|_{\mathrm{op}}
\le
\eta_{M,\delta}
\right)
\ge 1-\delta,
\qquad
\eta_{M,\delta}
=
c_0K^2\left(
\sqrt{\frac{d+\log(1/\delta)}{M}}
+
\frac{d+\log(1/\delta)}{M}
\right).
\label{eq:sample_cov_concentration}
\end{equation}
\end{proposition}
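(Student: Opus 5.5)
Conditional on $\mathcal F_{t-1}$, Assumption (A1) makes the samples $\hat y_t^{(1)},\dots,\hat y_t^{(M)}$ i.i.d. sub-Gaussian. I treat the center $\mu_t$ and the scale matrix $D$ as fixed. $D$ is fitted on the wrapper-training segment and so is $\mathcal F_{t-1}$-measurable. The median center $\mu_t$ is handled below. The standardized vectors $x^{(m)} := D^{-1}\hat y_t^{(m)}$ are then i.i.d. with sub-Gaussian norm at most $K$, where $K$ absorbs $\max_j s_j^{-1}$ times the sub-Gaussian constant. Their population covariance is the target $C_t^\star$.

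The plan is the standard $\varepsilon$-net argument for covariance estimation (Vershynin; Wainwright, Thm.~6.5). First I reduce to centered, isotropic-direction statements. Write $\bar x = M^{-1}\sum_m x^{(m)}$ and $\nu=\mathbb E x^{(m)}$, and split
\begin{equation*}
\widehat C_t - C_t^\star = \Bigl(\tfrac1M\textstyle\sum_m (x^{(m)}-\nu)(x^{(m)}-\nu)^\top - C_t^\star\Bigr) - (\bar x-\nu)(\bar x-\nu)^\top .
\end{equation*}
This split holds because the empirical covariance is invariant to the center used to form residuals. Hence the median shift $\mu_t$ cancels, provided $\operatorname{Cov}$ in \eqref{eq:samplecov} denotes the usual mean-centered covariance. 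If instead it denotes the second moment about $\mu_t$, the extra rank-one term is bounded the same way as the mean term, and I would add a short remark to that effect.

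Next I fix a $1/4$-net $\mathcal N$ of the unit sphere $S^{d-1}$ with $|\mathcal N|\le 9^d$. The bound $\|A\|_{\mathrm{op}}\le 2\max_{u\in\mathcal N}|u^\top A u|$ then holds for symmetric $A$. For each fixed $u$, the variables $(u^\top(x^{(m)}-\nu))^2-u^\top C_t^\star u$ are i.i.d. centered sub-exponential with $\psi_1$-norm $\lesssim K^2$. Bernstein's inequality therefore gives
\[
\mathbb P_t\Bigl(|u^\top A u|\ge cK^2\bigl(\sqrt{s/M}+s/M\bigr)\Bigr)\le 2e^{-s}.
\]
I take $s = C(d+\log(1/\delta))$ and union-bound over $\mathcal N$. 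The factor $9^d$ is absorbed because $d\log 9$ is dominated by $s$. The first term is then at most $c_0K^2(\sqrt{\cdot}+\cdot)$ with probability $1-\delta/2$.

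For the rank-one mean term, I note that $\|(\bar x-\nu)(\bar x-\nu)^\top\|_{\mathrm{op}}=\|\bar x-\nu\|_2^2$. The vector $\bar x-\nu$ is sub-Gaussian with parameter $K/\sqrt M$. A net argument for the norm gives $\|\bar x-\nu\|_2^2\lesssim K^2 (d+\log(1/\delta))/M$ with probability $1-\delta/2$, which fits inside the second term of $\eta_{M,\delta}$. A final union bound and adjustment of $c_0$ complete the proof.

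The main obstacle is bookkeeping rather than any deep step. There are two points to settle. The first is how $K$ is defined relative to $D$: the cleanest choice is to define $K$ directly as the sub-Gaussian constant of the standardized samples, which is what (A1) asserts. The second is making sure the median center does not introduce a dependence between the samples that would break the i.i.d. structure. The center-invariance observation above handles this second point. It is the step I would check first, against the exact definition of $\operatorname{Cov}$ used in \eqref{eq:samplecov}.
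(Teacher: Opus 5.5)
Your proposal is correct and follows essentially the same route as the paper's proof. Both use the decomposition $\widehat C_t - C_t^\star = \bigl(\frac1M\sum_m \xi_m\xi_m^\top - C_t^\star\bigr) - \bar\xi\bar\xi^\top$ with mean-zero $\xi_m$, a $1/4$-net of size at most $9^d$ with Bernstein's inequality for the sub-exponential quadratic forms, a separate net bound $\|\bar\xi\|_2^2 \lesssim K^2(d+\log(1/\delta))/M$, and a $\delta/2+\delta/2$ union bound. One caution about your fallback remark: (A1b) explicitly defines $\widehat C_t$ as the mean-centered empirical covariance, so that case never arises. As stated, the fallback would not work: a second moment taken about the median carries a population mean-versus-median offset, a bias that does not shrink with $M$, so it cannot be bounded like the $\|\bar\xi\|_2^2$ term.
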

\textbf{Proof intuition:} \textit{This follows from the standard
operator-norm concentration bound for sample covariance matrices of
sub-Gaussian vectors (see \cite{vershynin2018high}).}

\paragraph{Remark on Statistical Rate and Generative Capacity.}
The convergence rate $\eta_{M,\delta}$ matches the standard dimension dependence for unstructured sub-Gaussian covariance estimation. Unlike classical settings constrained by historical data, the generative sample size $M$ is fully controllable and can theoretically be arbitrarily large, driving the geometry estimation error to zero. Practically, $M$ need not be large: once $M \gtrsim d$, the leading $\sqrt{d/M}$ term decays rapidly. Faster estimation rates would require structural assumptions on $C_t^\star$, such as sparsity or low rank.

\paragraph{Step 2: Window Recovery.}
$\rho_t^{\mathrm{win}}=\mathbb Q_t(\widehat L_t\neq L_t^\star)$ denotes the window-selection error probability, where \(L_t^\star\) is the largest candidate length whose calibration and probe blocks lie in the current regime.

\begin{proposition}[Same-regime window recovery]
\label{prop:window_recovery}
Under (A2), if the same-regime tests have false-rejection probability at most \(\delta_{\mathrm{uni}}\) and false-acceptance probability at most \(\beta_{\mathrm{det}}\), then
\begin{equation}
\rho_t^{\mathrm{win}} \le 6|\mathcal G|\delta_{\mathrm{uni}} + |\mathcal G|\beta_{\mathrm{det}}.
\end{equation}
In the DKW-style separation case, $\beta_{\mathrm{det}} = 2\exp\{-2|\mathcal P_t|\Delta_{\mathrm{sep}}^2\}.$\footnote{The Dvoretzky--Kiefer--Wolfowitz inequality, refined by Massart \cite{dvoretzky1956asymptotic,massart1990tight}, guarantees that the chance of missing a regime shift drops exponentially fast as we collect more recent data ($|\mathcal P_t|$) or as the shift becomes more severe ($\Delta_{\mathrm{sep}}$).} 
\end{proposition}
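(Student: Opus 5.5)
Write $\mathcal G=\{L_{\min},L_{\min}+h,\dots,L_{\max}\}$ for the candidate grid. We bound $\rho_t^{\mathrm{win}}=\mathbb Q_t(\widehat L_t\neq L_t^\star)$ by a union bound over two failure modes of the sequential extension rule. Because the rule stops at the first rejection, $\widehat L_t\neq L_t^\star$ can happen in only two ways:
\begin{itemize}
\item[(E1)] \emph{Premature stop.} Some length $L<L_t^\star$ in $\mathcal G$ is reached and rejected, so $\widehat L_t<L_t^\star$.
\item[(E2)] \emph{Overshoot.} Every length up to $L_t^\star$ is accepted, and the extension from $L_t^\star$ to $L_t^\star+h$ is also accepted, so $\widehat L_t>L_t^\star$.
\end{itemize}
The boundary case $L_t^\star=L_{\max}$ makes (E2) empty. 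If the current regime is shorter than $L_{\min}$, we adopt the convention that $L_t^\star=L_{\min}$ and read (E2) as acceptance of the first contaminated extension. Then
\[
\{\widehat L_t\neq L_t^\star\}\subseteq E_1\cup E_2 .
\]

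\textbf{Bounding (E1).} By the definition of $L_t^\star$, for every $L\le L_t^\star$ both the whole block $\mathcal C_t^{\mathrm{whole}}(L)$ and the increment $\mathcal B_t^{\mathrm{inc}}(L)$ lie, together with the probe block $\mathcal P_t$, in the current regime. So each is a clean block. A rejection at length $L$ means at least one of the six comparisons (2 blocks $\times$ 3 diagnostics) fails the KS test \eqref{eq:same_regime_test}. Interpreting (A2)'s $\delta_{\mathrm{uni}}$ as the per-comparison false-rejection probability under $\mathbb Q_t$, a union bound over the six comparisons and over at most $|\mathcal G|$ lengths gives
\[
\mathbb Q_t(E_1)\le 6|\mathcal G|\,\delta_{\mathrm{uni}} .
\]
Here I will use (A3) to justify that, conditional on $\mathcal F_{\tau_t}$, the probe and clean-block scores are exchangeable. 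The randomized p-values \eqref{eq:probe_pvalue} are then exactly uniform, so the familywise DKW threshold $\tau$ delivers the $\delta_{\mathrm{uni}}$ level. No independence across lengths is needed, since only a union bound is used.

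\textbf{Bounding (E2).} The extension beyond $L_t^\star$ adds an increment reaching into the previous regime, so it is a contaminated extension. By (A2), at least one diagnostic/block comparison has false-acceptance probability at most $\beta_{\mathrm{det}}$. Acceptance requires all six comparisons to pass, so acceptance has probability at most $\beta_{\mathrm{det}}$. Only one contaminated step is strictly needed. To stay safe about which step is first contaminated, which is unclear when regime boundaries do not align with the grid, I union bound over all $L\in\mathcal G$. This gives
\[
\mathbb Q_t(E_2)\le |\mathcal G|\,\beta_{\mathrm{det}},
\]
and summing the two bounds yields the stated inequality.

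\textbf{DKW specialization.} Under a population separation $\Delta_{\mathrm{sep}}$ between the probe p-value law and $\mathrm{Unif}(0,1)$, the KS statistic computed from $|\mathcal P_t|$ probe p-values falls below the threshold only if the empirical CDF deviates from its true CDF by roughly $\Delta_{\mathrm{sep}}$. The threshold is absorbed into the margin, as (A2) implicitly does. Massart's DKW bound then gives
\[
\beta_{\mathrm{det}}\le 2\exp\!\left(-2|\mathcal P_t|\Delta_{\mathrm{sep}}^2\right).
\]

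\textbf{Main obstacle.} The delicate step is the conditioning. The tests at different $L$ reuse the same probe scores and overlapping blocks, and $L_t^\star$ is defined relative to $\tau_t$. The argument must make sure each per-comparison guarantee in (A2) holds under $\mathbb Q_t$ even though the sequence of tests is adaptively stopped. I plan to handle this by bounding each event on the fixed grid first, before any stopping, so that only marginal per-test guarantees are ever invoked. This is why the union bound carries the factor $|\mathcal G|$ rather than a sharper constant.
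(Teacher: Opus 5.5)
Your proof is correct and is essentially the paper's argument: one union bound over the at most $6|\mathcal G|$ clean (length, block, diagnostic) tests with $L\le L_t^\star$ controls premature stopping, and a second union bound over the at most $|\mathcal G|$ lengths $L>L_t^\star$, each of which has a detecting pair by (A2c), controls overshoot; the paper phrases this as a good event $\mathcal E_t$ on which $\widehat L_t=L_t^\star$. Two small fixes are needed: define $E_1$ with $L\le L_t^\star$, because a rejection at $L_t^\star$ itself also gives $\widehat L_t<L_t^\star$ and otherwise lies in neither event (your bound already counts it), and drop the appeal to (A3) and exact uniformity, since (A2b) supplies $\delta_{\mathrm{uni}}$ directly and that detour would not go through as stated, because the probe p-values share one calibration block and are dependent, so DKW does not apply to them directly.
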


\paragraph{Proof intuition.} Because the window extends sequentially, a false rejection at any clean length prematurely halts the search before reaching the oracle length $L_t^\star$. A union bound over the $6|\mathcal G|$ clean tests (2 block types $\times$ 3 diagnostics) controls this premature stopping. Conversely, Assumption (A2) ensures that any invalid length $L>L_t^\star$ is detected and rejected with probability at least $1-\beta_{\mathrm{det}}$. A combined union bound over these candidate lengths yields the final recovery probability \cite{xie2012change,bhatnagar2023online}.

\paragraph{Step 3: Coverage Gap Decomposition.}
Let $\mathcal H_t^\star = \mathcal C_t^{\mathrm{whole}}(L_t^\star)\cup \mathcal P_t, n_t^\star=|\mathcal H_t^\star|,$ and \(\mathcal C_t^{\mathrm{ACI}}(\alpha)\) denote the final prediction set when the nominal target level is \(\alpha\), allowing
the algorithm to use the adaptive internal level \(\alpha_t\). The per-time coverage error can be decomposed into window-selection error, adaptive-level error, finite calibration-sample error, and sample-geometry perturbation error.

\begin{theorem}
\label{thm:per_time_coverage}
Assume (A1)--(A4). For any fixed prediction time \(t\) and any
\(\delta_t\in(0,1)\), suppose
\(\eta_{M,\delta_t/(n_t^\star+1)}\le \eta_0\). Then
\begin{equation}
\left|
\mathbb Q_t\!\left(
Y_t\in\mathcal C_t^{\mathrm{ACI}}(\alpha)
\right)
-
(1-\alpha)
\right|
\le
\rho_t^{\mathrm{win}}
+
\varepsilon_t^{\mathrm{ACI}}
+
\frac{1}{n_t^\star+1}
+
L_\Sigma \eta_{M,\delta_t/(n_t^\star+1)}
+
\delta_t ,
\label{eq:per_time_gap_bound}
\end{equation}
where \(L_{\Sigma}\) is the score-geometry stability constant from (A1), and $\eta_{M,\delta}$ is the covariance concentration rate from Proposition~\ref{prop:sample_cov_concentration}. The window-selection term $\rho_t^{\mathrm{win}}$ satisfies Proposition~\ref{prop:window_recovery}.
\end{theorem}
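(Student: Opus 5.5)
The plan is a chain of coupling and triangle-inequality steps. Each step replaces the actual set $\mathcal C_t^{\mathrm{ACI}}(\alpha)$ with a more idealized set, and the probability that the two differ is charged to one of the terms in \eqref{eq:per_time_gap_bound}. I will use three intermediate objects:
\begin{itemize}
\item $\mathcal C_t^{(1)}$: the same ellipsoid, but with the radius calibrated on the oracle history $\mathcal H_t^\star$ instead of $\mathcal H_t(\widehat L_t)$.
\item $\mathcal C_t^{(2)}$: the set built from the population geometry $\Sigma_t^\star = D C_t^\star D$ in place of $\widehat\Sigma_t$, both in the test score and in all calibration scores in $\mathcal H_t^\star$.
\item $\mathcal C_t^{(3)}$: the oracle same-regime split-conformal set with population scores at level $1-\alpha_t$.
\end{itemize}
The target difference $\mathbb Q_t(Y_t\in \mathcal C_t^{\mathrm{ACI}}(\alpha))-(1-\alpha)$ then telescopes into four differences plus the oracle gap $\mathbb Q_t(Y_t\in\mathcal C_t^{(3)})-(1-\alpha)$.

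\textbf{Window term.} On the event $\{\widehat L_t=L_t^\star\}$, the sets $\mathcal C_t^{\mathrm{ACI}}(\alpha)$ and $\mathcal C_t^{(1)}$ coincide. Hence $|\mathbb Q_t(Y_t\in\mathcal C_t^{\mathrm{ACI}})-\mathbb Q_t(Y_t\in\mathcal C_t^{(1)})|\le \mathbb Q_t(\widehat L_t\ne L_t^\star)=\rho_t^{\mathrm{win}}$, which Proposition~\ref{prop:window_recovery} controls.

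\textbf{Geometry term.} This step replaces $\widehat\Sigma_t$ by $\Sigma_t^\star$. I apply Proposition~\ref{prop:sample_cov_concentration} at level $\delta_t/(n_t^\star+1)$ at each of the $n_t^\star+1$ times: the $n_t^\star$ calibration times and the test time. Each application is made conditionally on the past of that time and then integrated under $\mathbb Q_t$. A union bound gives an event $E$ of probability at least $1-\delta_t$ on which every $\|\widehat C_s-C_s^\star\|_{\mathrm{op}}\le\eta:=\eta_{M,\delta_t/(n_t^\star+1)}\le\eta_0$.

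On $E$, the stability part of (A1) applies. It says that within the $\eta_0$-neighbourhood the coverage probability of the Mahalanobis conformal set is $L_\Sigma$-Lipschitz in the operator-norm perturbation of the covariance. The hypothesis $\eta\le\eta_0$ is used exactly to stay inside this neighbourhood, where no eigenvalue crosses $\rho_t$ and no score atoms appear. This yields $|\mathbb Q_t(Y_t\in\mathcal C_t^{(1)})-\mathbb Q_t(Y_t\in\mathcal C_t^{(2)})|\le L_\Sigma\eta+\mathbb Q_t(E^c)\le L_\Sigma\eta+\delta_t$.

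\textbf{Oracle conformal and ACI terms.} By (A3), the population scores on $\mathcal H_t^\star$ together with the test score are exchangeable under $\mathbb Q_t$, since they all lie in the current regime by the definition of $L_t^\star$. For a fixed level, the standard rank argument gives coverage in $[1-\alpha,\,1-\alpha+1/(n_t^\star+1)]$, and continuity of the scores (no atoms, from (A1)) makes the upper gap hold. For the adaptive level $\alpha_t$, (A4) provides exactly the rank-validity condition for the oracle predictor, up to this same $1/(n_t^\star+1)$ finite-sample error, together with the tracking term: $|\mathbb Q_t(Y_t\in\mathcal C_t^{(3)})-(1-\alpha)|\le\varepsilon_t^{\mathrm{ACI}}+1/(n_t^\star+1)$. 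Summing the pieces gives \eqref{eq:per_time_gap_bound}.

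\textbf{Main obstacle.} The hard part is the geometry step. The difficulty is that every calibration score, not only the test score, uses its own noisy $\widehat\Sigma_s$, and the conditioning is mismatched: Proposition~\ref{prop:sample_cov_concentration} is stated under $\mathbb P_s$, while the theorem is under $\mathbb Q_t$. I would handle this by the tower property, since $\mathcal F_{\tau_t}\subseteq\mathcal F_{s-1}$. The Lipschitz transfer itself I would take directly from (A1) as stated rather than derive from eigenvalue perturbation bounds.
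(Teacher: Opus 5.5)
Your proposal is correct and follows essentially the same route as the paper's proof. The paper likewise couples the actual set to the oracle-window set on $\{\widehat L_t=L_t^\star\}$ (cost $\rho_t^{\mathrm{win}}$), passes from empirical to ideal scores via (A1c) on the union-bounded concentration event (cost $L_\Sigma\eta_{M,\delta_t/(n_t^\star+1)}+\delta_t$, using the tower property from $\mathcal F_{s-1}$ down to $\mathcal F_{\tau_t}$), and finishes with the oracle rank bound plus ACI tracking from (A3)--(A4). The only slip is bookkeeping: your $\mathcal C_t^{(2)}$ and $\mathcal C_t^{(3)}$ describe the same set, so the chain has three differences plus the oracle gap, not four, which is exactly the paper's decomposition.
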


\paragraph{Proof intuition.} 
The proof compares the empirical prediction set with an oracle same-regime set. The coverage gap decomposes into additive penalties for window-selection failure, ACI tracking, finite calibration size, covariance estimation, and the concentration failure probability. The term $\rho_t^{\mathrm{win}}$ accounts for selecting the wrong window without conditioning on data-dependent search paths \cite{vovk2005algorithmic, lei2018distribution}. Corollary~\ref{cor:overall_gap} follows by averaging over \(t\) and applying the tower property.

\begin{corollary}[Expected empirical average coverage]
\label{cor:overall_gap}
Under the conditions of Theorem~\ref{thm:per_time_coverage}, assume the per-time right-hand-side terms are deterministic or hold almost surely under the outer expectation. Let $\widehat{\mathrm{Cov}}_T = T^{-1}\sum_{t=1}^T \mathbf 1\{Y_t\in\mathcal C_t^{\mathrm{ACI}}(\alpha)\}$, and define $\bar\rho_T^{\mathrm{win}} = T^{-1}\sum_{t=1}^T\rho_t^{\mathrm{win}}$, $\bar\varepsilon_T^{\mathrm{ACI}} = T^{-1}\sum_{t=1}^T\varepsilon_t^{\mathrm{ACI}}$, $\bar\eta_T = T^{-1}\sum_{t=1}^T \eta_{M,\delta_t/(n_t^\star+1)}$, and $\bar\delta_T = T^{-1}\sum_{t=1}^T\delta_t$. Then
\begin{equation}
\left| \mathbb E[\widehat{\mathrm{Cov}}_T] - (1-\alpha) \right| \le \bar\rho_T^{\mathrm{win}} + \bar\varepsilon_T^{\mathrm{ACI}} + \frac1T\sum_{t=1}^T\frac{1}{n_t^\star+1} + L_{\Sigma}\bar\eta_T + \bar\delta_T .
\label{eq:overall_gap_bound}
\end{equation}
If these terms are random, interpret the averaged right-hand-side terms as their outer expectations.
\end{corollary}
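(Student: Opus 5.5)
The plan is to sum the per-time bounds of Theorem~\ref{thm:per_time_coverage} after passing from the conditional laws $\mathbb Q_t$ to the unconditional expectation. By linearity,
\[
\mathbb E[\widehat{\mathrm{Cov}}_T]-(1-\alpha)=\frac1T\sum_{t=1}^T\Bigl(\mathbb E\bigl[\mathbf 1\{Y_t\in\mathcal C_t^{\mathrm{ACI}}(\alpha)\}\bigr]-(1-\alpha)\Bigr).
\]
So it suffices to bound each summand by the expectation of the corresponding right-hand side of \eqref{eq:per_time_gap_bound}. The average of those bounds is exactly \eqref{eq:overall_gap_bound}, after writing $\frac1T\sum_t L_\Sigma\eta_{M,\delta_t/(n_t^\star+1)}=L_\Sigma\bar\eta_T$ and similarly for the other terms.

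For each fixed $t$, I apply the tower property with respect to the $\sigma$-field $\mathcal F_{\tau_t}$ that defines $\mathbb Q_t$:
\[
\mathbb E\bigl[\mathbf 1\{Y_t\in\mathcal C_t^{\mathrm{ACI}}(\alpha)\}\bigr]=\mathbb E\bigl[\mathbb Q_t(Y_t\in\mathcal C_t^{\mathrm{ACI}}(\alpha))\bigr].
\]
Jensen's inequality for the absolute value, $|\mathbb E X|\le\mathbb E|X|$, then gives
\[
\bigl|\mathbb E[\mathbf 1\{\cdot\}]-(1-\alpha)\bigr|\le\mathbb E\bigl|\mathbb Q_t(\cdot)-(1-\alpha)\bigr|.
\]
Theorem~\ref{thm:per_time_coverage} bounds the integrand almost surely by $\rho_t^{\mathrm{win}}+\varepsilon_t^{\mathrm{ACI}}+(n_t^\star+1)^{-1}+L_\Sigma\eta_{M,\delta_t/(n_t^\star+1)}+\delta_t$. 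If these quantities are deterministic, as the first hypothesis allows, the outer expectation leaves them unchanged. Otherwise they are replaced by their outer expectations, which is the stated interpretation. Finally, the triangle inequality on the average over $t$ moves the absolute value inside the sum, and the result follows.

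The main obstacle is that $\tau_t$, and hence the conditioning $\sigma$-field, changes with $t$. This means the indicators are not a martingale-difference sequence with respect to one fixed filtration. The argument avoids the issue because it never needs a common filtration: each term is handled by its own tower step, and expectation is linear across $t$. I also need the hypothesis $\eta_{M,\delta_t/(n_t^\star+1)}\le\eta_0$ to hold for every $t$ (almost surely), so that the theorem applies at each time. Random $n_t^\star$ or $\delta_t$ that are $\mathcal F_{\tau_t}$-measurable are measurable with respect to the conditioning $\sigma$-field, so they pass through the tower step without trouble.
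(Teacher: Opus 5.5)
Your proposal is correct and follows essentially the same route as the paper: write each $\mathbb E[\mathbf 1\{Y_t\in\mathcal C_t^{\mathrm{ACI}}(\alpha)\}]$ as $\mathbb E[\mathbb Q_t(Y_t\in\mathcal C_t^{\mathrm{ACI}}(\alpha))]$ by the tower property, move the absolute value inside the time average and the outer expectation, apply Theorem~\ref{thm:per_time_coverage} termwise (taking outer expectations if the terms are random), and average. Your remark that each $t$ gets its own conditioning step, so no common filtration is needed despite $\tau_t$ varying with $t$, is a useful clarification that the paper leaves implicit.
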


\section{Experiments}
\label{sec:experiments}

\subsection{Experimental setup and metrics}
\label{subsec:exp_setup}

We evaluate \textsc{Space} on multivariate forecasting streams produced by eight heterogeneous probabilistic forecasters (TimeGrad \cite{rasul2021timegrad}, CSDI \cite{tashiro2021csdi}, TSFlow \cite{kollovieh2024tsflow}, DeepAR \cite{salinas2020deepar}, GP-Copula \cite{salinas2019gpcopula}, NsDiff \cite{ye2025nsdiff}, TMDM \cite{li2024tmdm}, and TACTiS-2 \cite{tactis2}) across seven real-world datasets summarized in Table~\ref{tab:dataset_summary}. We then compare the performance of \textsc{Space} against five conformal baselines (MultiDimSPCI \cite{xu2024multidimspci}, Local Ellipsoid \cite{messoudi2022ellipsoidal}, CopulaCPTS \cite{sun2024copulacpts}, Empirical Copula and Gaussian Copula \cite{messoudi2021copula}). The probabilistic forecasters are trained with a 7:1:2 train/validation/test split on the original dataset. Each conformal wrapper is evaluated with a 6:2:2 train/calibration/test split on the sample streams. 


In downstream decision settings, predictive uncertainty should be close to the nominal target, rather than merely exceed it \cite{zhao2021calibrating,kuleshov2022calibrated,deng2025uncertainty}. We therefore report both the coverage gap and the rolling coverage gap. The coverage gap is the absolute deviation between empirical joint coverage and the target level $1-\alpha$, and therefore penalizes both under- and over-coverage. To evaluate efficiency, we report the mean log-volume of the prediction regions, denoted \textsc{LogVol}, following prior multivariate conformal work that measures sharpness through region size or volume \cite{xu2024multidimspci,lee2026fcp,yang2026cp4gen}. This section focuses on the target coverage level $1-\alpha=0.90$. The full results at different target coverage levels and other implementation details are deferred to Appendix~\ref{app:appendix_result_table} and~\ref{app:space_impl}.

\subsection{Aggregate comparison against conformal wrappers}
\label{subsec:wrapper_results}

We first compare \textsc{Space} with five strong conformal wrappers from prior work. Table~\ref{tab:wrapper_summary} provides the wrapper-level comparison. The full dataset-forecaster level comparison is in Appendix~\ref{app:appendix_result_table}.

\begin{table}[t]
\captionsetup{aboveskip=\baselineskip}
\centering

\caption{Aggregated comparison over all dataset--forecaster pairs. Mean Gap and Median Gap summarize the pair-level absolute coverage gap from the target. Std. LogVol denotes the pair-level log-volume after standardization within each dataset and target level, so it measures size on a comparable scale across datasets. For Mean Gap and Mean Std. LogVol, the reported standard deviation is the across-pair standard deviation within each wrapper. Mean Std. LogVol and Median Std. LogVol are computed only over pairs whose coverage gap is at most $0.05$. Valid Ratio reports the fraction of dataset--forecaster pairs satisfying this coverage-gap threshold.}
\label{tab:wrapper_summary}

\small
\setlength{\tabcolsep}{2.5pt}
\begin{tabular}{lccccc}
\toprule
Wrapper & Mean Gap $\downarrow$ & Median Gap $\downarrow$ & Mean Std. LogVol $\downarrow$ & Median Std. LogVol $\downarrow$ & Valid Ratio \\
\midrule
SPACE            & \textbf{0.003 {\scriptsize $\pm$ 0.004}} & \textbf{0.002} & -0.111 {\scriptsize $\pm$ 1.043} & -0.334 & \textbf{1.000} \\
MultiDimSPCI     & 0.032 {\scriptsize $\pm$ 0.041} & 0.013 & -0.435 {\scriptsize $\pm$ 1.024} & \textbf{-0.804} & 0.759 \\
Local Ellipsoid  & 0.083 {\scriptsize $\pm$ 0.111} & 0.054 & \textbf{-0.580 {\scriptsize $\pm$ 0.878}} & -0.753 & 0.463 \\
Empirical Copula & 0.165 {\scriptsize $\pm$ 0.205} & 0.053 & 0.031 {\scriptsize $\pm$ 0.630} & 0.012 & 0.481 \\
Gaussian Copula  & 0.130 {\scriptsize $\pm$ 0.184} & 0.042 & 0.003 {\scriptsize $\pm$ 0.618} & -0.037 & 0.593 \\
CopulaCPTS       & 0.057 {\scriptsize $\pm$ 0.112} & 0.034 & 1.021 {\scriptsize $\pm$ 0.789} & 1.037 & 0.593 \\
\bottomrule
\end{tabular}

\end{table}

\textsc{Space} achieves the best Mean Gap and Median Gap among all compared wrappers, with the smallest standard deviation. This is the main wrapper-level result: the mean coverage gap of \textsc{Space} is $0.3$ percentage points from the nominal target. The efficiency results are more nuanced. \textsc{Space} does not attain the smallest LogVol overall; MultiDimSPCI has the lowest median volume. However, this smaller volume comes with noticeably worse calibration and a lower validity ratio. In particular, the mean coverage gap of \textsc{Space} is only about one tenth of that of the strongest baseline, MultiDimSPCI, and \textsc{Space} is the only method with a validity ratio of $1.00$. This indicates that \textsc{Space} is the most consistent wrapper for bringing realized coverage close to the nominal target across diverse datasets and forecasters.

The key point is therefore coverage--efficiency trade-off at the target level. \textsc{Space} achieves much better calibration while remaining competitive in median log-volume, which is the more relevant comparison for a conformal wrapper whose purpose is to deliver regions that are both reliable and usable.

\subsection{Rolling coverage under drift and regime change}
\label{subsec:rolling_results}

Rolling metrics show whether a wrapper stays close to the nominal target locally over time. This is a pattern that could be overlooked by aggregated mean or median measures. We use a rolling window length $w=30$ in all evaluations. Figure~\ref{fig:rolling_boxplots} reports boxplots for Mean Rolling Gap, P90 Rolling Gap, and Frac.\ Bad Windows, where lower values indicate better local calibration stability. Mean Rolling Gap averages the absolute deviation between rolling joint coverage and the target level $1-\alpha$ over all valid sliding windows. P90 Rolling Gap reports the $90$th percentile of these deviations, capturing severe local miscalibration. Frac.\ Bad Windows is the proportion of windows whose rolling coverage gap exceeds $\delta=0.10$. Table~\ref{tab:rolling_summary} complements the figure by providing the dataset-level breakdown.

 \begin{figure*}[t]
    \centering
    \includegraphics[width=\linewidth]{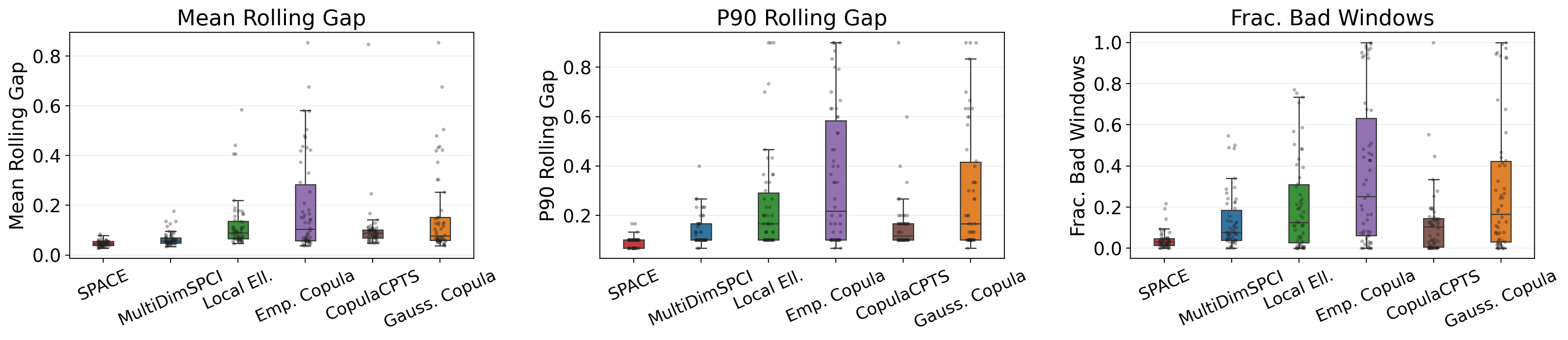}
    \caption{Distributional boxplots of rolling coverage diagnostics across wrapper benchmark pairs. The center line denotes the median. The three panels report Mean Rolling Gap, P90 Rolling Gap, and Frac.\ Bad Windows, respectively, with lower values indicating better local calibration stability. Table~\ref{tab:rolling_summary} gives the corresponding dataset-level breakdown.}
    \label{fig:rolling_boxplots}
\end{figure*}

\begin{table}[t]
\captionsetup{aboveskip=\baselineskip, belowskip=\baselineskip}
\centering

\caption{Dataset-level rolling coverage diagnostics for conformal wrappers at the $90\%$ target. Lower is better for all metrics. These metrics evaluate local calibration stability under drift and regime change. \textsc{Space} performs best or near-best on all datasets.}
\label{tab:rolling_summary}
\tiny
\begin{tabular}{llcccccc}
\toprule
Dataset & Metric & SPACE & MultiDimSPCI & Local Ell. & Emp. Copula & Gauss. Copula & CopulaCPTS\\
\midrule
\multirow{3}{*}{electricity} & Mean Rolling Gap $\downarrow$ & \textbf{0.041} & 0.109 & 0.278 & 0.326 & 0.125 & 0.082 \\
 & P90 Rolling Gap $\downarrow$ & \textbf{0.086} & 0.214 & 0.557 & 0.556 & 0.257 & 0.100 \\
 & Frac. Bad Windows $\downarrow$ & 0.021 & 0.360 & 0.584 & 0.689 & 0.298 & \textbf{0.010} \\
\midrule
\multirow{3}{*}{etth1} & Mean Rolling Gap $\downarrow$ & \textbf{0.032} & 0.044 & 0.074 & 0.048 & 0.049 & 0.072 \\
 & P90 Rolling Gap $\downarrow$ & \textbf{0.067} & 0.092 & 0.133 & 0.096 & 0.096 & 0.112 \\
 & Frac. Bad Windows $\downarrow$ & \textbf{0.006} & 0.014 & 0.074 & 0.020 & 0.010 & 0.041 \\
\midrule
\multirow{3}{*}{ettm1} & Mean Rolling Gap $\downarrow$ & 0.054 & \textbf{0.051} & 0.083 & 0.062 & 0.063 & 0.107 \\
 & P90 Rolling Gap $\downarrow$ & 0.108 & \textbf{0.100} & 0.146 & 0.112 & 0.108 & 0.204 \\
 & Frac. Bad Windows $\downarrow$ & 0.065 & 0.047 & 0.067 & 0.054 & \textbf{0.038} & 0.185 \\
\midrule
\multirow{3}{*}{ettm2} & Mean Rolling Gap $\downarrow$ & \textbf{0.045} & 0.055 & 0.070 & 0.067 & 0.066 & 0.065 \\
 & P90 Rolling Gap $\downarrow$ & \textbf{0.096} & 0.100 & 0.146 & 0.112 & 0.108 & 0.117 \\
 & Frac. Bad Windows $\downarrow$ & \textbf{0.029} & 0.061 & 0.118 & 0.091 & 0.072 & 0.077 \\
\midrule
\multirow{3}{*}{exchange} & Mean Rolling Gap $\downarrow$ & \textbf{0.050} & 0.067 & 0.156 & 0.257 & 0.250 & 0.211 \\
 & P90 Rolling Gap $\downarrow$ & \textbf{0.100} & 0.163 & 0.263 & 0.444 & 0.436 & 0.313 \\
 & Frac. Bad Windows $\downarrow$ & \textbf{0.066} & 0.162 & 0.225 & 0.460 & 0.435 & 0.296 \\
\midrule
\multirow{3}{*}{traffic} & Mean Rolling Gap $\downarrow$ & \textbf{0.055} & 0.069 & 0.103 & 0.467 & 0.467 & 0.072 \\
 & P90 Rolling Gap $\downarrow$ & \textbf{0.100} & 0.157 & 0.205 & 0.676 & 0.676 & 0.105 \\
 & Frac. Bad Windows $\downarrow$ & 0.039 & 0.174 & 0.281 & 0.950 & 0.950 & \textbf{0.038} \\
\midrule
\multirow{3}{*}{weather} & Mean Rolling Gap $\downarrow$ & \textbf{0.052} & 0.073 & 0.121 & 0.198 & 0.167 & 0.111 \\
 & P90 Rolling Gap $\downarrow$ & \textbf{0.096} & 0.142 & 0.288 & 0.488 & 0.408 & 0.204 \\
 & Frac. Bad Windows $\downarrow$ & \textbf{0.051} & 0.136 & 0.173 & 0.441 & 0.344 & 0.166 \\
\bottomrule
\end{tabular}

\end{table}

Across datasets, \textsc{Space} shows its clear advantage over the best baseline in mean rolling gap, especially on the higher-dimensional, more nonstationary settings, where maintaining stable coverage over time is most difficult. In particular, the separation between \textsc{Space} and the best competing method is largest on datasets such as \texttt{electricity} and \texttt{traffic}, which have richer joint dependence structure and stronger temporal instability; this suggests that \textsc{Space} is especially effective when the calibration problem is genuinely multivariate and the underlying coverage behavior drifts over time. By contrast, on smaller and more stationary datasets, the gap between methods narrows substantially. In \texttt{ettm1}, \textsc{Space} and the best baseline perform similarly. A more detailed analysis in Appendix~\ref{app:covdrift_performance} shows that \textsc{Space}’s gains are more pronounced in datasets with stronger covariance drift and more challenging rolling coverage behavior.

\section{Conclusion}

In this paper, we introduced \textsc{Space}, a post-hoc conformal wrapper that bridges the gap between the rich uncertainty representations of modern probabilistic forecasters and the rigorous calibration guarantees of conformal prediction. By decoupling the estimation of multivariate geometry---extracted directly from the predictive sample cloud---from the sequential calibration of the region's radius, \textsc{Space} overcomes the adaptation delays inherent in historical residual-based methods. Our theoretical analysis establishes formal finite-sample coverage guarantees under stated assumptions, and extensive experiments demonstrate that \textsc{Space} consistently tightens realized coverage toward the nominal target across diverse forecasting architectures (e.g., diffusion, flow, and autoregressive models) under non-stationarity. Ultimately, this work demonstrates that predictive sample clouds contain valuable geometric signals that can be exploited for calibrated predictive inference. Natural extensions of this framework include generalizing the set geometry beyond ellipsoids to capture non-convex predictive topologies.

\section{Limitations}
\label{sec:limitations}

This work has two main limitations. First, \textsc{Space} depends on the quality of the predictive sample cloud produced by the generative forecasters. The conformal calibration can correct the overall scale of the prediction region, but the local covariance geometry is still estimated from generated samples. Therefore, if the forecaster produces biased or insufficiently diverse samples, the calibrated regions may remain valid only at the cost of reduced efficiency. The second limitation concerns adaptation under difficult regime dynamics. When changes are gradual, recent scores may not show a sharp enough diagnostic signal; when changes are frequent, little same-regime history may be available. In either case, \textsc{Space} may select a window that is not fully representative of the current regime, leading to coverage deviations. This challenge is shared by time-series uncertainty calibration methods without external change indicators, where adaptation must be inferred from observed errors.



\newpage
{
\small
\bibliographystyle{abbrvnat}
\bibliography{refs}

@inproceedings{zhou2021informer,
  title={Informer: Beyond efficient transformer for long sequence time-series forecasting},
  author={Zhou, Haoyi and Zhang, Shanghang and Peng, Jieqi and Zhang, Shuai and Li, Jianxin and Xiong, Hui and Zhang, Wancai},
  booktitle={Proceedings of the AAAI conference on artificial intelligence},
  volume={35},
  pages={11106--11115},
  year={2021}
}

@inproceedings{lai2018modeling,
  title={Modeling long- and short-term temporal patterns with deep neural networks},
  author={Lai, Guokun and Chang, Wei-Cheng and Yang, Yiming and Liu, Hanxiao},
  booktitle={The 41st international ACM SIGIR conference on research \& development in information retrieval},
  pages={95--104},
  year={2018}
}

@article{wu2021autoformer,
  title={Autoformer: Decomposition transformers with auto-correlation for long-term series forecasting},
  author={Wu, Haixu and Xu, Jiehui and Wang, Jianmin and Long, Mingsheng},
  journal={Advances in Neural Information Processing Systems},
  volume={34},
  pages={22419--22430},
  year={2021}
}

@article{matteson2014nonparametric,
  title={A nonparametric approach for multiple change point analysis of multivariate data},
  author={Matteson, David S and James, Nicholas A},
  journal={Journal of the American Statistical Association},
  volume={109},
  number={505},
  pages={334--345},
  year={2014},
  publisher={Taylor \& Francis}
}

@article{dvoretzky1956asymptotic,
  title={Asymptotic minimax character of the sample distribution function and of the classical multinomial estimator},
  author={Dvoretzky, Aryeh and Kiefer, Jack and Wolfowitz, Jacob},
  journal={The Annals of Mathematical Statistics},
  pages={642--669},
  year={1956}
}

@article{massart1990tight,
  title={The tight constant in the Dvoretzky-Kiefer-Wolfowitz inequality},
  author={Massart, Pascal},
  journal={The Annals of Probability},
  pages={1269--1283},
  year={1990}
}

@article{cai2010optimal,
  title={Optimal rates of convergence for covariance matrix estimation},
  author={Cai, T Tony and Zhang, Cun-Hui and Zhou, Harrison H},
  journal={The Annals of Statistics},
  volume={38},
  number={4},
  pages={2118--2144},
  year={2010},
  publisher={Institute of Mathematical Statistics}
}

@inproceedings{
miyato2018spectral,
title={Spectral Normalization for Generative Adversarial Networks},
author={Takeru Miyato and Toshiki Kataoka and Masanori Koyama and Yuichi Yoshida},
booktitle={International Conference on Learning Representations},
year={2018}
}

@book{vershynin2018high,
  title={High-dimensional probability: An introduction with applications in data science},
  author={Vershynin, Roman},
  volume={47},
  year={2018},
  publisher={Cambridge University Press}
}

@article{gibbs2021aci,
  title={Adaptive conformal inference under distribution shift},
  author={Gibbs, Isaac and Candes, Emmanuel},
  journal={Advances in Neural Information Processing Systems},
  volume={34},
  pages={1660--1672},
  year={2021}
}

@book{wainwright2019high,
  title={High-dimensional statistics: A non-asymptotic viewpoint},
  author={Wainwright, Martin J},
  volume={48},
  year={2019},
  publisher={Cambridge University Press}
}

@article{xie2012change,
  title={Change-point detection for high-dimensional time series with missing data},
  author={Xie, Yao and Huang, Jiaji and Willett, Rebecca},
  journal={IEEE Journal of Selected Topics in Signal Processing},
  volume={7},
  number={1},
  pages={12--27},
  year={2013},
  publisher={IEEE}
}

@book{basseville1993detection,
  title={Detection of abrupt changes: theory and application},
  author={Basseville, Michele and Nikiforov, Igor V and others},
  volume={104},
  year={1993},
  publisher={Prentice Hall, Englewood Cliffs}
}

@article{barber2023conformal,
  title={Conformal prediction beyond exchangeability},
  author={Barber, Rina Foygel and Candes, Emmanuel J and Ramdas, Aaditya and Tibshirani, Ryan J},
  journal={The Annals of Statistics},
  volume={51},
  number={2},
  pages={816--845},
  year={2023},
  publisher={Institute of Mathematical Statistics}
}

@article{guan2023localized,
  title={Localized conformal prediction: A generalized inference framework for conformal prediction},
  author={Guan, Leying},
  journal={Biometrika},
  volume={110},
  number={1},
  pages={33--50},
  year={2023},
  publisher={Oxford University Press}
}

@inproceedings{bhatnagar2023online,
  title={Improved online conformal prediction via strongly adaptive online learning},
  author={Bhatnagar, Aadyot and Wang, Huan and Xiong, Caiming and Bai, Yu},
  booktitle={International Conference on Machine Learning},
  pages={2337--2363},
  year={2023},
  organization={PMLR}
}

@article{angelopoulos2023conformal,
  title={Conformal {PID} control for time series prediction},
  author={Angelopoulos, Anastasios and Candes, Emmanuel and Tibshirani, Ryan J},
  journal={Advances in Neural Information Processing Systems},
  volume={36},
  pages={23047--23074},
  year={2023}
}

@inproceedings{johnstone2025exact,
  title     = {Exact and Approximate Conformal Inference for Multi-Output Regression},
  author    = {Johnstone, Chancellor and Ndiaye, Eugene},
  booktitle = {Proceedings of the Fourteenth Symposium on Conformal and Probabilistic Prediction with Applications},
  organization={PMLR},
  volume    = {266},
  pages     = {153--172},
  year      = {2025}
}

@article{feldman2023conformal,
  title={Calibrated multiple-output quantile regression with representation learning},
  author={Feldman, Shai and Bates, Stephen and Romano, Yaniv},
  journal={Journal of Machine Learning Research},
  volume={24},
  number={24},
  pages={1--48},
  year={2023}
}

@inproceedings{rasul2021timegrad,
  title={Autoregressive denoising diffusion models for multivariate probabilistic time series forecasting},
  author={Rasul, Kashif and Seward, Calvin and Schuster, Ingmar and Vollgraf, Roland},
  booktitle={International Conference on Machine Learning},
  pages={8857--8868},
  year={2021},
  organization={PMLR}
}

@inproceedings{li2024tmdm,
  title={Transformer-modulated diffusion models for probabilistic multivariate time series forecasting},
  author={Li, Yuxin and Chen, Wenchao and Hu, Xinyue and Chen, Bo and Sun, Baolin and Zhou, Mingyuan},
  booktitle={The Twelfth International Conference on Learning Representations},
  year={2024}
}

@inproceedings{ye2025nsdiff,
  title = 	 {Non-stationary Diffusion For Probabilistic Time Series Forecasting},
  author =       {Ye, Weiwei and Xu, Zhuopeng and Gui, Ning},
  booktitle = 	 {Proceedings of the 42nd International Conference on Machine Learning},
  pages = 	 {72112--72130},
  year = 	 {2025},
  volume = 	 {267},
  organization={PMLR}
}

@article{salinas2020deepar,
  title={{DeepAR}: Probabilistic forecasting with autoregressive recurrent networks},
  author={Salinas, David and Flunkert, Valentin and Gasthaus, Jan and Januschowski, Tim},
  journal={International journal of forecasting},
  volume={36},
  number={3},
  pages={1181--1191},
  year={2020},
  publisher={Elsevier}
}

@inproceedings{tactis2,
title={{TACT}iS-2: Better, Faster, Simpler Attentional Copulas for Multivariate Time Series},
author={Arjun Ashok and {\'E}tienne Marcotte and Valentina Zantedeschi and Nicolas Chapados and Alexandre Drouin},
booktitle={The Twelfth International Conference on Learning Representations},
year={2024}
}

@article{salinas2019gpcopula,
  title={High-dimensional multivariate forecasting with low-rank Gaussian copula processes},
  author={Salinas, David and Bohlke-Schneider, Michael and Callot, Laurent and Medico, Roberto and Gasthaus, Jan},
  journal={Advances in Neural Information Processing Systems},
  volume={32},
  year={2019}
}

@inproceedings{kollovieh2024tsflow,
title={Flow Matching with Gaussian Process Priors for Probabilistic Time Series Forecasting},
author={Marcel Kollovieh and Marten Lienen and David L{\"u}dke and Leo Schwinn and Stephan G{\"u}nnemann},
booktitle={The Thirteenth International Conference on Learning Representations},
year={2025}
}

@inproceedings{xu2021enbpi,
  title = 	 {Conformal prediction interval for dynamic time-series},
  author =       {Xu, Chen and Xie, Yao},
  booktitle = 	 {Proceedings of the 38th International Conference on Machine Learning},
  pages = 	 {11559--11569},
  year = 	 {2021},
  volume = 	 {139},
  organization={PMLR},
  month = 	 {18--24 Jul}
}

@inproceedings{xu2024multidimspci,
  title = 	 {Conformal prediction for multi-dimensional time series by ellipsoidal sets},
  author =       {Xu, Chen and Jiang, Hanyang and Xie, Yao},
  booktitle = 	 {Proceedings of the 41st International Conference on Machine Learning},
  pages = 	 {55076--55099},
  year = 	 {2024},
  volume = 	 {235},
  organization={PMLR},
  month = 	 {21--27 Jul}
}

@article{tashiro2021csdi,
  title={{CSDI}: Conditional score-based diffusion models for probabilistic time series imputation},
  author={Tashiro, Yusuke and Song, Jiaming and Song, Yang and Ermon, Stefano},
  journal={Advances in Neural Information Processing Systems},
  volume={34},
  pages={24804--24816},
  year={2021}
}

@article{messoudi2021copula,
  title={Copula-based conformal prediction for multi-target regression},
  author={Messoudi, Soundouss and Destercke, S{\'e}bastien and Rousseau, Sylvain},
  journal={Pattern Recognition},
  volume={120},
  pages={108101},
  year={2021},
  publisher={Elsevier}
}

@inproceedings{sun2024copulacpts,
title={Copula Conformal prediction for multi-step time series prediction},
author={Sophia Huiwen Sun and Rose Yu},
booktitle={The Twelfth International Conference on Learning Representations},
year={2024}
}

@inproceedings{lee2026fcp,
title={Flow-based Conformal Prediction for Multi-dimensional Time Series},
author={Junghwan Lee and Chen Xu and Yao Xie},
booktitle={The Fourteenth International Conference on Learning Representations},
year={2026}
}

@book{vovk2005algorithmic,
  title={Algorithmic learning in a random world},
  author={Vovk, Vladimir and Gammerman, Alexander and Shafer, Glenn},
  year={2005},
  publisher={Springer}
}

@article{lei2018distribution,
  title={Distribution-free predictive inference for regression},
  author={Lei, Jing and G{'}Sell, Max and Rinaldo, Alessandro and Tibshirani, Ryan J and Wasserman, Larry},
  journal={Journal of the American Statistical Association},
  volume={113},
  number={523},
  pages={1094--1111},
  year={2018},
  publisher={Taylor \& Francis}
}

@article{angelopoulos2023gentle,
  title   = {A Gentle Introduction to Conformal Prediction and Distribution-Free Uncertainty Quantification},
  author  = {Angelopoulos, Anastasios N. and Bates, Stephen},
  journal = {arXiv preprint arXiv:2107.07511},
  year    = {2021}
}

@article{romano2019cqr,
  title={Conformalized quantile regression},
  author={Romano, Yaniv and Patterson, Evan and Candes, Emmanuel},
  journal={Advances in Neural Information Processing Systems},
  volume={32},
  year={2019}
}

@article{tibshirani2019weighted,
  title={Conformal prediction under covariate shift},
  author={Tibshirani, Ryan J and Foygel Barber, Rina and Candes, Emmanuel and Ramdas, Aaditya},
  journal={Advances in Neural Information Processing Systems},
  volume={32},
  year={2019}
}

@article{rangapuram2018deepstate,
  title={Deep state space models for time series forecasting},
  author={Rangapuram, Syama Sundar and Seeger, Matthias W and Gasthaus, Jan and Stella, Lorenzo and Wang, Yuyang and Januschowski, Tim},
  journal={Advances in Neural Information Processing Systems},
  volume={31},
  year={2018}
}

@inproceedings{rasul2020cnf,
title={Multivariate Probabilistic Time Series Forecasting via Conditioned Normalizing Flows},
author={Kashif Rasul and Abdul-Saboor Sheikh and Ingmar Schuster and Urs M Bergmann and Roland Vollgraf},
booktitle={International Conference on Learning Representations},
year={2021}
}

@inproceedings{marx2023distributionmatching,
  title     = {Calibration by Distribution Matching: Trainable Kernel Calibration Metrics},
  author    = {Marx, Charlie and Zalouk, Sofian and Ermon, Stefano},
  booktitle = {Advances in Neural Information Processing Systems},
  volume    = {36},
  pages     = {25910--25928},
  year      = {2023}
}

@inproceedings{kuleshov2022calibrated,
  title={Calibrated and sharp uncertainties in deep learning via density estimation},
  author={Kuleshov, Volodymyr and Deshpande, Shachi},
  booktitle={International Conference on Machine Learning},
  pages={11683--11693},
  year={2022},
  organization={PMLR}
}

@article{yang2026cp4gen,
  title={Conformal Prediction for Generative Models via Adaptive Cluster-Based Density Estimation},
  author={Yang, Qidong and Zhu, Qianyu Julie and Giezendanner, Jonathan and Marzouk, Youssef and Bates, Stephen and Wang, Sherrie},
  journal={arXiv preprint arXiv:2601.22298},
  year={2026}
}

@inproceedings{messoudi2022ellipsoidal,
  title={Ellipsoidal conformal inference for multi-target regression},
  author={Messoudi, Soundouss and Destercke, S{\'e}bastien and Rousseau, Sylvain},
  booktitle={Conformal and Probabilistic Prediction with Applications},
  pages={294--306},
  year={2022},
  organization={PMLR}
}

@article{zhao2021calibrating,
  title={Calibrating predictions to decisions: A novel approach to multi-class calibration},
  author={Zhao, Shengjia and Kim, Michael and Sahoo, Roshni and Ma, Tengyu and Ermon, Stefano},
  journal={Advances in Neural Information Processing Systems},
  volume={34},
  pages={22313--22324},
  year={2021}
}

@inproceedings{deng2025uncertainty,
  title     = {Uncertainty Quantification for Conditional Treatment Effect Estimation under Dynamic Treatment Regimes},
  author    = {Deng, Leon and Xiong, Hong and Wu, Feng and Kapoor, Sanyam and Gosh, Soumya and Shahn, Zach and Lehman, Li-wei},
  booktitle = {Proceedings of the 4th Machine Learning for Health Symposium},
  organization={PMLR},
  volume    = {259},
  pages     = {248--266},
  year      = {2025}
}
}

\newpage
\appendix

\section{Proofs for Section~\ref{sec:theory}}
\label{app:theory_proofs}

\subsection{Assumptions.}
We work under the following four assumptions. They are stated in the same order
and with the same meaning as in Section~\ref{sec:theory}. Let \(\mathcal F_t\)
be the natural filtration generated by realized targets, forecast sample clouds,
and all auxiliary randomization up to time \(t\). Let \(\tau_t<t\) denote the
last change point before prediction time \(t\). Let \(\mathbb Q_t(\cdot)\)
denote the probability measure conditional on the pre-regime information
\(\mathcal F_{\tau_t}\) and the deterministic window-design parameters, but
marginal over the realized augmented objects inside the current regime. Let
\(\mathbb E_{\mathbb Q_t}[\cdot]\) be the corresponding expectation. 

\begin{enumerate}
\item[(A1)] \textbf{Conditional sample-cloud regularity and stable score geometry.}
We assume that, at each prediction time $t$, the forecast sample cloud is conditionally i.i.d. and sufficiently light-tailed after standardization. We also assume that the Mahalanobis conformal set is locally stable with respect to covariance-estimation error, in a pathwise sense conditional on the realized covariance estimates. These regularity conditions are standard in high-dimensional probability and are adapted from the foundational framework established by Vershynin \cite{vershynin2018high}. More formally:

\emph{(A1a) Conditional sample-cloud law.}
Conditional on \(\mathcal F_{t-1}\),
\begin{equation}
\hat y_t^{(1)},\ldots,\hat y_t^{(M)}
\overset{\mathrm{i.i.d.}}{\sim}
\widehat P_t(\cdot\mid\mathcal F_{t-1}).
\label{eq:theory_sample_cloud_law}
\end{equation}
Let \(D=\operatorname{diag}(s_1,\ldots,s_d)\) be fixed before time \(t\), with
\[
0<s_{\min}\le s_j\le s_{\max}<\infty .
\]
For an \(\mathcal F_{t-1}\)-measurable center \(\mu_t^\star\), define
\[
\xi_t^{(m)}
=
D^{-1}\bigl(\hat y_t^{(m)}-\mu_t^\star\bigr),
\qquad m=1,\ldots,M.
\]

\emph{(A1b) Sub-Gaussian residuals and covariance estimation target.}
Conditional on \(\mathcal F_{t-1}\), the vectors
\(\xi_t^{(1)},\ldots,\xi_t^{(M)}\) are mean-zero, \(K\)-sub-Gaussian, and have
conditional covariance
\begin{equation}
C_t^\star
=
\operatorname{Cov}\!\left(\xi_t^{(m)}\mid\mathcal F_{t-1}\right).
\label{eq:population_covariance}
\end{equation}
The empirical covariance \(\widehat C_t\) in \eqref{eq:samplecov} is the
centered empirical covariance of the standardized sample cloud. Hence it is
unchanged by replacing the ideal center \(\mu_t^\star\) with the median-based
center \(\mu_t\) used in the method.

\emph{(A1c) Pathwise stability of the conformal set under covariance perturbation.}
For any deterministic or previsible same-regime calibration index set
\(\mathcal H_t\), and in particular for the oracle set \(\mathcal H_t^\star\),
let \(\mathcal C_t^\star(\alpha_t)\) denote the conformal set constructed from
ideal scores, and let \(\mathcal C_t(\alpha_t)\) denote the conformal set
constructed from empirical scores.

For such a set \(\mathcal H_t\), define the conditional law
\[
\widetilde{\mathbb Q}_{t,\mathcal H_t}(\cdot)
=
\mathbb P\!\left(
\cdot
\mid
\mathcal F_{\tau_t},
(\widehat C_s)_{s\in\mathcal H_t\cup\{t\}}
\right).
\]
There exist constants \(\eta_0>0\) and \(L_\Sigma<\infty\) such that, on the
event
\[
\max_{s\in\mathcal H_t\cup\{t\}}
\|\widehat C_s-C_s^\star\|_{\mathrm{op}}
\le
\eta
\le
\eta_0,
\]
we have
\begin{equation}
\left|
\widetilde{\mathbb Q}_{t,\mathcal H_t}\!\left(
Y_t\in\mathcal C_t(\alpha_t)
\right)
-
\widetilde{\mathbb Q}_{t,\mathcal H_t}\!\left(
Y_t\in\mathcal C_t^\star(\alpha_t)
\right)
\right|
\le
L_\Sigma\eta .
\label{eq:score_geometry_stability}
\end{equation}
This condition is implied, for example, by local stability of the stabilized
pseudo-inverse map together with an anti-concentration condition for the ideal
radial score distribution.

\item[(A2)] \textbf{Same-regime test calibration and contaminated-window detectability.}
A2 is a finite-sample version of two standard requirements in change-point and sequential goodness-of-fit testing: size control under homogeneous samples and power under separated alternatives. Fix a prediction time \(t\), and let \(\tau_t\) denote the most recent regime change before time \(t\). Thus, the current regime contains the observations
indexed by
\[
\{\tau_t+1,\ldots,t-1\}.
\]
Let \(\mathbb Q_t\) denote the probability law, or the relevant conditional
probability law, governing the data and random test statistics at time \(t\).
This assumption states that same-regime candidate blocks are not rejected too
often, while any candidate window that crosses a regime boundary is detected
with high probability. The assumption is stated directly at the level of the
finite-sample test errors used by the window-selection algorithm. This
calibration-versus-detectability form is standard in change-point analysis:
homogeneous samples require size control, while contaminated samples require a
separation condition large enough to overcome sampling noise
\cite{basseville1993detection,xie2012change}.

More formally:

\emph{(A2a) Oracle same-regime window.}
Let
\[
\mathcal G=\{L_{\min},L_{\min}+h,\ldots,L_{\max}\}
\]
be the finite grid of candidate window lengths, where \(L_{\min}\) and
\(L_{\max}\) are the minimum and maximum candidate lengths, respectively, and
\(h\) is the grid step size. Let \(\mathcal P_t\) denote the probe block at time
\(t\), namely the recent block used as the reference block for testing whether
a candidate block belongs to the same regime. Assume that \(\mathcal P_t\) lies
inside the current regime and that there exists at least one same-regime
candidate length. For each \(L\in\mathcal G\), let
\(\mathcal C_t^{\mathrm{whole}}(L)\) denote the whole candidate block of length
\(L\) used by the algorithm. Define the oracle same-regime length
\(L_t^\star\) as the largest candidate length whose whole candidate block and
probe block are both contained in the current regime:
\begin{equation}
L_t^\star
=
\max
\Bigl\{
L\in\mathcal G:
\mathcal C_t^{\mathrm{whole}}(L)\cup\mathcal P_t
\subseteq
\{\tau_t+1,\ldots,t-1\}
\Bigr\}.
\label{eq:oracle_same_regime_length}
\end{equation}

For each candidate block
\[
\mathcal A
\in
\{\mathcal C_t^{\mathrm{whole}}(L),\mathcal B_t^{\mathrm{inc}}(L)\},
\]
where \(\mathcal B_t^{\mathrm{inc}}(L)\) denotes the incremental block
associated with candidate length \(L\), and for each diagnostic
\(j\in\{1,2,3\}\), let \(\mathrm{KS}_{t,L,j}^{\mathcal A}\) denote the
empirical KS-type discrepancy in \eqref{eq:same_regime_test}. Here \(j\) indexes
the diagnostic being tested, \(L\) indexes the candidate length, and
\(\mathcal A\) indexes the tested candidate block. Let
\(\tau_{t,L,j}^{\mathcal A}\) denote the corresponding rejection threshold for
this KS-type test. The thresholds are deterministic functions of the
window-design parameters, sample sizes, and nominal test levels, or are
measurable with respect to information outside the realized test statistics.
For the KS-type diagnostics used here, such thresholds can be justified by the
Dvoretzky--Kiefer--Wolfowitz inequality and Massart's sharp constant, which give
finite-sample uniform bounds for empirical distribution functions
\cite{dvoretzky1956asymptotic,massart1990tight}.

\emph{(A2b) Clean-block calibration.}
For every clean tested block, meaning every tested block \(\mathcal A\) whose
union with the probe block \(\mathcal P_t\) lies entirely inside the current
regime,
\[
\mathcal A\cup\mathcal P_t
\subseteq
\{\tau_t+1,\ldots,t-1\},
\]
the probability of false rejection is controlled:
\begin{equation}
\mathbb Q_t
\left(
\mathrm{KS}_{t,L,j}^{\mathcal A}
>
\tau_{t,L,j}^{\mathcal A}
\right)
\le
\delta_{\mathrm{uni}}.
\label{eq:clean_block_test_control}
\end{equation}
Here \(\delta_{\mathrm{uni}}\) is the per-test, or union-adjusted, upper bound
on the false rejection probability. This is the finite-sample size-control
requirement for the same-regime test. When \(\mathcal A\) and \(\mathcal P_t\)
are drawn from the same regime, the population discrepancy is zero, and the
empirical discrepancy is controlled by standard empirical-process
concentration; the constant \(\delta_{\mathrm{uni}}\) can also absorb a union
bound over the finite set of candidate lengths, blocks, and diagnostics
\cite{dvoretzky1956asymptotic,massart1990tight}.

\emph{(A2c) Contaminated-window detectability.}
For every \(L>L_t^\star\), the candidate length \(L\) is longer than the oracle
same-regime length and hence its associated candidate window necessarily
crosses a regime boundary. The assumption requires that at least one tested pair
\[
(\mathcal A,j),
\qquad
\mathcal A
\in
\{\mathcal C_t^{\mathrm{whole}}(L),\mathcal B_t^{\mathrm{inc}}(L)\},
\quad
j\in\{1,2,3\},
\]
detects this contamination. Specifically, the probability that this
contaminated tested pair fails to reject is bounded by
\(\beta_{\mathrm{det}}\):
\begin{equation}
\mathbb Q_t
\left(
\mathrm{KS}_{t,L,j}^{\mathcal A}
\le
\tau_{t,L,j}^{\mathcal A}
\right)
\le
\beta_{\mathrm{det}}.
\label{eq:contaminated_block_test_control}
\end{equation}
Here \(\beta_{\mathrm{det}}\) is the upper bound on the missed-detection
probability. This is the corresponding finite-sample power requirement. It is a
local identifiability condition: if a candidate window crosses a regime
boundary, then at least one of the diagnostics must see a population
distributional shift large enough to be detected with probability at least
\(1-\beta_{\mathrm{det}}\), as in standard abrupt-change and high-dimensional
change-point formulations \cite{basseville1993detection,xie2012change}.

A sufficient, but not necessary, condition for
\eqref{eq:contaminated_block_test_control} is the following. For some tested
pair \((\mathcal A,j)\), let \(H_{t,L,j}^{\mathcal A}\) denote the population
distribution function, or population probability-integral-transform target, of
the diagnostic values comparing the candidate block \(\mathcal A\) with the
probe block \(\mathcal P_t\). Let \(\mathrm{Id}(x)=x\) denote the identity
distribution function on \([0,1]\), which corresponds to the ideal same-regime
case after probability-integral transformation. Let
\(\widehat F_{t,L,j}^{\mathcal A}\) denote the empirical distribution function
estimated from the probe block, and let \(\Delta_{\mathrm{sep}}>0\) denote the
separation margin between the population discrepancy and the test threshold.
Suppose
\[
\|H_{t,L,j}^{\mathcal A}-\mathrm{Id}\|_\infty
\ge
\tau_{t,L,j}^{\mathcal A}
+
\Delta_{\mathrm{sep}},
\]
where \(\|\cdot\|_\infty\) denotes the uniform sup-norm distance, and suppose
that the empirical distribution function satisfies the concentration bound
\[
\mathbb Q_t
\left(
\|\widehat F_{t,L,j}^{\mathcal A}-H_{t,L,j}^{\mathcal A}\|_\infty
>
\Delta_{\mathrm{sep}}
\right)
\le
2\exp\{-2|\mathcal P_t|\Delta_{\mathrm{sep}}^2\}.
\]
Here \(|\mathcal P_t|\) is the number of observations in the probe block. Then
\eqref{eq:contaminated_block_test_control} holds with
\[
\beta_{\mathrm{det}}
=
2\exp\{-2|\mathcal P_t|\Delta_{\mathrm{sep}}^2\}.
\]
Indeed, the population separation margin \(\Delta_{\mathrm{sep}}\) ensures that
the empirical KS statistic can fall below the threshold only if the empirical
process deviates from its population target by more than
\(\Delta_{\mathrm{sep}}\). The displayed exponential bound is therefore exactly
the DKW/Massart-type tail probability for this failure event
\cite{dvoretzky1956asymptotic,massart1990tight}.

\item[(A3)] \textbf{Piecewise augmented exchangeability.}
This assumption imposes the standard exchangeability condition used in
conformal prediction, but only locally within the current regime. Specifically,
conditional on the pre-regime information \(\mathcal F_{\tau_t}\), where
\(\tau_t\) is the most recent regime change before time \(t\), the observations
inside the current regime are treated as exchangeable. This is the classical
condition under which conformal methods obtain finite-sample rank validity and
distribution-free coverage \cite{vovk2005algorithmic,lei2018distribution,angelopoulos2023gentle}.
Here the condition is imposed piecewise, after conditioning on the regime start,
which is natural in nonstationary settings where exchangeability is expected to
hold only locally rather than over the entire time series.

More formally, for each \(s=\tau_t+1,\ldots,t\), define the augmented
forecast-target object
\[
\mathcal Z_s=(Y_s,\widehat{\mathcal Y}_s),
\]
where \(Y_s\) is the realized target and \(\widehat{\mathcal Y}_s\) is the
corresponding forecast object, such as a predictive distribution, trajectory
forecast, or prediction set. The augmented objects
\[
\mathcal Z_{\tau_t+1},\ldots,\mathcal Z_t
\]
are conditionally exchangeable given \(\mathcal F_{\tau_t}\). That is, for any
finite permutation \(\pi\) of \(\{\tau_t+1,\ldots,t\}\),
\[
(\mathcal Z_{\tau_t+1},\ldots,\mathcal Z_t)
\mid \mathcal F_{\tau_t}
\overset{d}{=}
(\mathcal Z_{\pi(\tau_t+1)},\ldots,\mathcal Z_{\pi(t)})
\mid \mathcal F_{\tau_t}.
\]
This is the regime-local analogue of the exchangeability assumption commonly
used in split conformal prediction and related finite-sample conformal
procedures \cite{vovk2005algorithmic,lei2018distribution,romano2019cqr}.

The diagnostics and ideal conformity scores are assumed to be generated from
these augmented objects by common measurable permutation-equivariant maps,
possibly together with auxiliary tie-breaking random variables that are
independent of the augmented objects conditional on \(\mathcal F_{\tau_t}\).
Permutation equivariance means that if the augmented objects are permuted, the
resulting diagnostics or scores are permuted in the same way, rather than
changing their joint distribution. Therefore, by the standard closure property
of exchangeability under common measurable equivariant transformations, the
ideal score collection for any same-regime index set \(\mathcal H_t\),
\[
\{e_s^\star:s\in\mathcal H_t\}\cup\{e_t^\star\},
\]
is conditionally exchangeable under \(\mathbb Q_t\). This is the exact property
needed in the conformal coverage proof: the test score \(e_t^\star\) has a
uniform rank among the same-regime calibration scores, up to the usual
tie-breaking convention \cite{vovk2005algorithmic,angelopoulos2023gentle}.

\item[(A4)] \textbf{ACI adaptation error and rank validity.}
Let \(\mathcal C_{t,\mathrm{or}}^\star(\alpha_t)\) denote the oracle
same-regime conformal set constructed from the ideal scores on
\(\mathcal H_t^\star\) using the adaptive miscoverage level
\(\alpha_t\in[0,1]\), and let \(n_t^\star=|\mathcal H_t^\star|\) be the number
of oracle same-regime calibration scores. This assumption separates the effect
of adaptive level selection from the usual conformal rank-validity argument.
Such a separation is standard in adaptive conformal inference: the adaptive
procedure controls or tracks the target miscoverage level over time, while the
conformal coverage argument relies on the rank of the test score among the
calibration scores \cite{gibbs2021aci,angelopoulos2023conformal,bhatnagar2023online}.

\emph{(A4a) Mean tracking error.}
The adaptive level remains close to the nominal level in expectation:
\begin{equation}
\mathbb E_{\mathbb Q_t}\!\left[|\alpha_t-\alpha|\right]
\le
\varepsilon_t^{\mathrm{ACI}} .
\label{eq:aci_tracking_error}
\end{equation}
Here \(\alpha\) is the target miscoverage level, \(\alpha_t\) is the
data-adaptive level selected by the ACI update rule at time \(t\), and
\(\varepsilon_t^{\mathrm{ACI}}\) summarizes the finite-sample adaptation or
tracking error of this update. This type of condition is the standard way to
abstract the performance of an adaptive conformal update under distribution
shift: rather than requiring \(\alpha_t=\alpha\) exactly at every time point,
one allows the online update to deviate from the target level by a controlled
amount \cite{gibbs2021aci,angelopoulos2023conformal}. Strongly adaptive
online-learning formulations provide one route for bounding such tracking errors
over nonstationary sequences \cite{bhatnagar2023online}.

\emph{(A4b) Rank validity under adaptivity.}
Conditional on \(\alpha_t\), the rank of the test ideal score among the
\(n_t^\star+1\) oracle ideal scores
\[
\{e_s^\star:s\in\mathcal H_t^\star\}\cup\{e_t^\star\}
\]
is uniformly distributed. Equivalently, the oracle ideal conformal predictor
satisfies the finite-sample rank bound
\begin{equation}
\left|
\mathbb Q_t\!\left(
Y_t\in \mathcal C_{t,\mathrm{or}}^\star(\alpha_t)
\right)
-
\left(1-\mathbb E_{\mathbb Q_t}[\alpha_t]\right)
\right|
\le
\frac{1}{n_t^\star+1}.
\label{eq:aci_rank_validity}
\end{equation}
This is the usual finite-sample conformal rank argument, applied after
conditioning on the adaptive level. It is satisfied, for example, when
\(\alpha_t\) is predictable with respect to the score-ranking step, meaning
that it is determined by past information, previous coverage errors, or other
external randomness, but not by the relative rank of \(e_t^\star\) among the
same-regime calibration scores at time \(t\). Under this previsible-level
condition, conditioning on \(\alpha_t\) does not destroy the exchangeability of
the oracle ideal scores, so the test score keeps the usual uniform-rank
property \cite{vovk2005algorithmic,lei2018distribution,angelopoulos2023gentle}.
The additional \(1/(n_t^\star+1)\) term is the standard finite-sample
discretization error from using \(n_t^\star\) calibration scores in conformal
prediction.

Combining \eqref{eq:aci_tracking_error} and \eqref{eq:aci_rank_validity}, we have
\begin{equation}
\left|
\mathbb Q_t\!\left(
Y_t\in \mathcal C_{t,\mathrm{or}}^\star(\alpha_t)
\right)
-
(1-\alpha)
\right|
\le
\varepsilon_t^{\mathrm{ACI}}
+
\frac{1}{n_t^\star+1}.
\label{eq:aci_oracle_coverage_error}
\end{equation}
Thus, A4 is not an additional distributional assumption beyond the local
exchangeability condition in A3. Rather, it records two standard ingredients
needed when conformal prediction is combined with an adaptive level: a bounded
ACI tracking error and the preservation of the conformal rank argument under a
previsible adaptive level.

\end{enumerate}

\subsection{Proof of Proposition~\ref{prop:sample_cov_concentration}}

Our goal is to establish the concentration of the empirical sample-cloud covariance. While this is a foundational step for our method, the proof relies on well-established techniques rather than novel probabilistic machinery. Specifically, we follow the standard high-dimensional probability pipeline detailed in Vershynin \cite{vershynin2018high}.

\begin{proof}
Fix a prediction time $t$ and condition on $\mathcal F_{t-1}$. 
Write $\mathbb P_t(\cdot)=\mathbb P(\cdot\mid\mathcal F_{t-1})$ and 
$\mathbb E_t[\cdot]=\mathbb E[\cdot\mid\mathcal F_{t-1}]$. 
For brevity, drop the time subscript and write
\[
\xi_m = D^{-1}\bigl(\hat y_t^{(m)}-\mu_t^\star\bigr),
\qquad m=1,\ldots,M.
\]
By Assumption~(A1), conditional on $\mathcal F_{t-1}$, the vectors
$\xi_1,\ldots,\xi_M$ are independent, mean-zero, and sub-Gaussian with
sub-Gaussian parameter at most $K$, and
\[
C_t^\star=\mathbb E_t[\xi_1\xi_1^\top].
\]

Let
\[
\bar\xi=\frac1M\sum_{m=1}^M \xi_m,
\qquad
\widehat C_t
=
\frac1M\sum_{m=1}^M
(\xi_m-\bar\xi)(\xi_m-\bar\xi)^\top.
\]
Then
\[
\widehat C_t
=
\frac1M\sum_{m=1}^M \xi_m\xi_m^\top
-
\bar\xi\bar\xi^\top.
\]
Therefore,
\begin{equation}
\|\widehat C_t-C_t^\star\|_{\mathrm{op}}
\le
\left\|
\frac1M\sum_{m=1}^M \xi_m\xi_m^\top
-
C_t^\star
\right\|_{\mathrm{op}}
+
\|\bar\xi\|_2^2.
\label{eq:centered_cov_decomp}
\end{equation}

We first bound the uncentered empirical second-moment term. For any symmetric matrix $A$, if $\mathcal N$ is a $1/4$-net of $\mathcal S^{d-1}$, then
\[
\|A\|_{\mathrm{op}}
\le
2\max_{v\in\mathcal N}|v^\top A v|.
\]
Choose such a net with cardinality $|\mathcal N|\le 9^d$. Applying this to
\[
A=
\frac1M\sum_{m=1}^M \xi_m\xi_m^\top
-
C_t^\star,
\]
we obtain
\begin{equation}
\left\|
\frac1M\sum_{m=1}^M \xi_m\xi_m^\top
-
C_t^\star
\right\|_{\mathrm{op}}
\le
2\max_{v\in\mathcal N}
\left|
\frac1M\sum_{m=1}^M
\langle \xi_m,v\rangle^2
-
\mathbb E_t[\langle \xi_1,v\rangle^2]
\right|.
\label{eq:net_second_moment}
\end{equation}

Fix $v\in\mathcal N$. Since $\xi_m$ is conditionally $K$-sub-Gaussian,
$\langle \xi_m,v\rangle$ is conditionally $K$-sub-Gaussian. Hence
\[
Z_{m,v}
=
\langle \xi_m,v\rangle^2
-
\mathbb E_t[\langle \xi_m,v\rangle^2]
\]
is conditionally centered sub-exponential with sub-exponential norm bounded by
$c_1K^2$, for an absolute constant $c_1>0$. Bernstein's inequality for
independent centered sub-exponential random variables gives, for every $u>0$,
\[
\mathbb P_t\left(
\left|
\frac1M\sum_{m=1}^M Z_{m,v}
\right|
\ge u
\right)
\le
2\exp\left[
-c_2M
\min\left(
\frac{u^2}{K^4},
\frac{u}{K^2}
\right)
\right],
\]
where $c_2>0$ is an absolute constant.

Taking a union bound over $v\in\mathcal N$ yields
\[
\mathbb P_t\left(
\max_{v\in\mathcal N}
\left|
\frac1M\sum_{m=1}^M Z_{m,v}
\right|
\ge u
\right)
\le
2\exp\left[
d\log 9
-
c_2M
\min\left(
\frac{u^2}{K^4},
\frac{u}{K^2}
\right)
\right].
\]
Thus, for a sufficiently large absolute constant $c_3>0$, with
$\mathbb P_t$-probability at least $1-\delta/2$,
\begin{equation}
\left\|
\frac1M\sum_{m=1}^M \xi_m\xi_m^\top
-
C_t^\star
\right\|_{\mathrm{op}}
\le
c_3K^2
\left(
\sqrt{\frac{d+\log(1/\delta)}{M}}
+
\frac{d+\log(1/\delta)}{M}
\right).
\label{eq:uncentered_second_moment_bound}
\end{equation}

It remains to bound the centering term $\|\bar\xi\|_2^2$. For any fixed
$v\in\mathcal S^{d-1}$, the scalar random variable
$\langle \bar\xi,v\rangle$ is conditionally sub-Gaussian with parameter at most
$K/\sqrt M$. Applying the same $1/4$-net argument and a union bound gives that,
with $\mathbb P_t$-probability at least $1-\delta/2$,
\begin{equation}
\|\bar\xi\|_2^2
\le
c_4K^2
\frac{d+\log(1/\delta)}{M},
\label{eq:sample_mean_bound}
\end{equation}
for an absolute constant $c_4>0$.

Combining \eqref{eq:centered_cov_decomp},
\eqref{eq:uncentered_second_moment_bound}, and
\eqref{eq:sample_mean_bound}, and absorbing constants into a universal
constant $c_0>0$, we obtain that, with conditional probability at least
$1-\delta$,
\[
\|\widehat C_t-C_t^\star\|_{\mathrm{op}}
\le
c_0K^2
\left(
\sqrt{\frac{d+\log(1/\delta)}{M}}
+
\frac{d+\log(1/\delta)}{M}
\right).
\]
Since the argument holds conditionally on $\mathcal F_{t-1}$, the same
high-probability bound also holds marginally. This completes the proof.
\end{proof}

\paragraph{Discussion on the Tightness of the Covariance Bound.}
The high-probability bound established in Proposition~\ref{prop:sample_cov_concentration} is statistically tight and cannot be substantially improved without imposing strong structural assumptions on the generative model's output \cite{cai2010optimal}. 

Specifically, estimating an arbitrary $d \times d$ covariance matrix in the operator norm requires $\Omega(\sqrt{d/M})$ error, as established by standard minimax lower bounds via Fano's inequality \cite{cai2010optimal}. In the standard operational regime of \textsc{Space}, where the number of generated samples $M$ is reasonably large relative to the feature dimension $d$ (i.e., $M \ge d$), the linear dimension term $\mathcal{O}(d/M)$ is negligible, and our estimator achieves the optimal $\mathcal{O}(\sqrt{d/M})$ minimax rate.

This bound would only be considered loose if the true conditional covariance $C_t^\star$ possessed known, exploitable structure---such as high sparsity or strict low-rank constraints. In such cases, thresholded or PCA-based estimators could replace $\widehat C_t$ to achieve rates depending on the effective intrinsic dimension rather than the ambient dimension $d$. However, because \textsc{Space} operates on black-box generative models where the latent geometry of the predictive sample cloud is not known \textit{a priori}, the unstructured minimax optimal rate achieved here represents the fundamental statistical limit of the local geometry estimation step.

\subsection{Proof of Proposition~\ref{prop:window_recovery}}

\begin{proof}
Fix a prediction time \(t\). We work under the conditional probability measure
\(\mathbb Q_t\) specified in Assumption~(A2). Let \(J=|\mathcal G|\).

Define the collection of clean tests
\[
\mathcal T_{\mathrm{clean}}
=
\left\{
(L,\mathcal A,j):
L\le L_t^\star,\,
\mathcal A\in
\{\mathcal C_t^{\mathrm{whole}}(L),\mathcal B_t^{\mathrm{inc}}(L)\},\,
j\in\{1,2,3\}
\right\}.
\]
By the definition of \(L_t^\star\) and the nesting of the backward candidate
windows, for every \(L\le L_t^\star\),
\[
\mathcal C_t^{\mathrm{whole}}(L)
\cup
\mathcal B_t^{\mathrm{inc}}(L)
\cup
\mathcal P_t
\subseteq
\{\tau_t+1,\ldots,t-1\}.
\]
Therefore every test in \(\mathcal T_{\mathrm{clean}}\) is a clean-block test.
By the clean-block calibration part of Assumption~(A2),
\[
\mathbb Q_t
\left(
\mathrm{KS}_{t,L,j}^{\mathcal A}
>
\tau_{t,L,j}^{\mathcal A}
\right)
\le
\delta_{\mathrm{uni}}
\]
for every \((L,\mathcal A,j)\in\mathcal T_{\mathrm{clean}}\). Since
\(|\mathcal T_{\mathrm{clean}}|\le 6J\), the union bound gives
\[
\mathbb Q_t
\left(
\exists (L,\mathcal A,j)\in\mathcal T_{\mathrm{clean}}
:
\mathrm{KS}_{t,L,j}^{\mathcal A}
>
\tau_{t,L,j}^{\mathcal A}
\right)
\le
6J\delta_{\mathrm{uni}}.
\]

Next consider any candidate length \(L>L_t^\star\). By the contaminated-window
detectability part of Assumption~(A2), there exists a tested pair
\[
(\mathcal A_L,j_L),
\qquad
\mathcal A_L\in
\{\mathcal C_t^{\mathrm{whole}}(L),\mathcal B_t^{\mathrm{inc}}(L)\},
\quad
j_L\in\{1,2,3\},
\]
such that
\[
\mathbb Q_t
\left(
\mathrm{KS}_{t,L,j_L}^{\mathcal A_L}
\le
\tau_{t,L,j_L}^{\mathcal A_L}
\right)
\le
\beta_{\mathrm{det}}.
\]
The selector accepts candidate length \(L\) only if all block-diagnostic checks
for that length pass. Hence acceptance of \(L\) implies that the detectably
contaminated test \((\mathcal A_L,j_L)\) passes. Therefore,
\[
\mathbb Q_t
\left(
\text{\(\textsc{Space}\) accepts candidate length }L
\right)
\le
\beta_{\mathrm{det}}.
\]
Taking a union bound over all invalid candidate lengths \(L>L_t^\star\), of
which there are at most \(J\), yields
\[
\mathbb Q_t
\left(
\exists L>L_t^\star:
\text{\(\textsc{Space}\) accepts candidate length }L
\right)
\le
J\beta_{\mathrm{det}}.
\]

Let \(\mathcal E_t\) be the event that no clean test is falsely rejected and no
invalid candidate length is accepted. Combining the two union bounds,
\[
\mathbb Q_t(\mathcal E_t^c)
\le
6J\delta_{\mathrm{uni}}
+
J\beta_{\mathrm{det}}.
\]
On \(\mathcal E_t\), every candidate length \(L\le L_t^\star\) passes all
required checks, while every candidate length \(L>L_t^\star\) is not accepted.
Since the selector returns the longest accepted candidate length,
\[
\widehat L_t=L_t^\star
\quad
\text{on }\mathcal E_t.
\]
Therefore,
\[
\mathbb Q_t(\widehat L_t=L_t^\star)
\ge
1
-
6|\mathcal G|\delta_{\mathrm{uni}}
-
|\mathcal G|\beta_{\mathrm{det}}.
\]

If the sufficient empirical-process condition in Assumption~(A2) holds with
\[
\beta_{\mathrm{det}}
=
2\exp\{-2|\mathcal P_t|\Delta_{\mathrm{sep}}^2\},
\]
then
\[
\mathbb Q_t(\widehat L_t=L_t^\star)
\ge
1
-
6|\mathcal G|\delta_{\mathrm{uni}}
-
2|\mathcal G|
\exp\{-2|\mathcal P_t|\Delta_{\mathrm{sep}}^2\}.
\]
This completes the proof.
\end{proof}

\subsection{Proof of Theorem~\ref{thm:per_time_coverage}}

\begin{lemma}[Exchangeability of oracle ideal radial scores]
\label{lem:radial_score_exchangeability}
Let \(\mathcal H_t^\star\) be the oracle same-regime calibration set associated
with \(L_t^\star\). Under Assumption~(A3), the ideal score collection
\[
\{e_s^\star:s\in\mathcal H_t^\star\}\cup\{e_t^\star\}
\]
is conditionally exchangeable under \(\mathbb Q_t\).
\end{lemma}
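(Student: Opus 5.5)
The plan is to reduce the lemma to the closure property of exchangeability recorded in Assumption~(A3), applied to a deterministic sub-index set. The key observation is that \(\mathcal H_t^\star=\mathcal C_t^{\mathrm{whole}}(L_t^\star)\cup\mathcal P_t\) is not data-dependent under \(\mathbb Q_t\). By \eqref{eq:oracle_same_regime_length}, \(L_t^\star\) is determined by the change point \(\tau_t\), the prediction time \(t\), and the deterministic grid \(\mathcal G\) together with \(p\). All of these are fixed given \(\mathcal F_{\tau_t}\) and the window-design parameters, on which \(\mathbb Q_t\) conditions. Hence \(\mathcal H_t^\star\cup\{t\}\) is a fixed subset of \(\{\tau_t+1,\ldots,t\}\). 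This is the step that distinguishes the oracle set from the selected set \(\mathcal H_t(\widehat L_t)\), whose data dependence is handled separately through \(\rho_t^{\mathrm{win}}\).

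Next, write each ideal score as \(e_s^\star=g(\mathcal Z_s,\omega_s)\). Here \(g\) is a common measurable map built from the ideal center \(\mu_s^\star\), the population geometry \(D C_s^\star D\) with its stabilized pseudo-inverse, and the realized target. The \(\omega_s\) are auxiliary tie-breaking variables, i.i.d.\ and independent of the \(\mathcal Z\)'s given \(\mathcal F_{\tau_t}\). I then check that the map is applied coordinatewise with the same \(g\). Any component of \(g\) that depends on pre-regime quantities, such as the scale matrix \(D\) fit on the wrapper-training segment, is \(\mathcal F_{\tau_t}\)-measurable and therefore fixed under \(\mathbb Q_t\). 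It follows that the map \((\mathcal Z_s,\omega_s)_s\mapsto(e_s^\star)_s\) is permutation-equivariant.

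Now fix a permutation \(\pi\) of \(\mathcal H_t^\star\cup\{t\}\) and extend it by the identity to \(\{\tau_t+1,\ldots,t\}\). By (A3), \((\mathcal Z_s)\) is invariant in law under this extended permutation given \(\mathcal F_{\tau_t}\). Because the \(\omega_s\) are i.i.d.\ and independent of the \(\mathcal Z\)'s, the joint vector \((\mathcal Z_s,\omega_s)_s\) is also invariant in law. Applying the equivariant map gives \((e^\star_{\pi(s)})_s\overset{d}{=}(e^\star_s)_s\). Taking the marginal on the coordinates in \(\mathcal H_t^\star\cup\{t\}\) yields the claimed exchangeability.

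The main obstacle is the first step, namely making sure nothing in the ideal score depends on within-regime data in a non-symmetric way. The ideal center and covariance must be functions of \(\mathcal Z_s\) (through the forecaster law \(\widehat P_s\)) or of \(\mathcal F_{\tau_t}\) only. They must not depend on, say, \(\mathcal F_{s-1}\) information from earlier in the regime, which would break equivariance. I would handle this by taking the ideal score to be defined exactly as in (A3), as generated from the augmented objects. If needed, I would enlarge \(\mathcal Z_s\) to include \(\widehat P_s\) itself, so that \(C_s^\star\) becomes a function of \(\mathcal Z_s\).
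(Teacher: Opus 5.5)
Your proposal is correct and follows essentially the same route as the paper, whose proof simply notes that \(\mathcal H_t^\star\cup\{t\}\) lies inside the current regime by the definition of \(L_t^\star\). It then invokes the concluding clause of (A3), which already asserts conditional exchangeability of the ideal scores over any same-regime index set. Beyond that, you unpack this clause by pushing the exchangeable augmented objects and tie-breakers through a common equivariant map, with permutations of \(\mathcal H_t^\star\cup\{t\}\) extended by the identity. You also make explicit that \(\mathcal H_t^\star\) is non-random under \(\mathbb Q_t\), a point the paper leaves implicit but which is exactly what separates the oracle set from the data-selected one.
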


\begin{proof}
By the definition of \(L_t^\star\), the indices
\(\mathcal H_t^\star\cup\{t\}\) lie in the same regime. Assumption~(A3)
directly gives conditional exchangeability of the ideal score collection for
any same-regime index set. Taking
\(\mathcal H_t=\mathcal H_t^\star\) proves the claim.
\end{proof}

\begin{proof}[Proof of Theorem~\ref{thm:per_time_coverage}]
Fix a prediction time \(t\), and let \(n_t^\star=|\mathcal H_t^\star|\).
Let \(\mathcal C_{t,\mathrm{or}}^\star(\alpha_t)\) denote the oracle
same-regime conformal set constructed from ideal scores on
\(\mathcal H_t^\star\), and let \(\mathcal C_{t,\mathrm{or}}(\alpha_t)\)
denote the corresponding oracle set constructed from empirical scores.
Let \(\mathcal C_t^{\mathrm{ACI}}(\alpha)\) denote the actual set returned by
the algorithm when the nominal target is \(\alpha\), internally using
\(\alpha_t\).

\paragraph{Step 1: Oracle ideal conformal coverage.}
By Lemma~\ref{lem:radial_score_exchangeability} and Assumption~(A4b), conditional
on \(\alpha_t\), the rank of \(e_t^\star\) among the \(n_t^\star+1\) oracle
ideal scores is uniform. Hence the split-conformal rank argument gives
\[
\left|
\mathbb Q_t\!\left(
Y_t\in\mathcal C_{t,\mathrm{or}}^\star(\alpha_t)
\right)
-
\left(1-\mathbb E_{\mathbb Q_t}[\alpha_t]\right)
\right|
\le
\frac{1}{n_t^\star+1}.
\]

\paragraph{Step 2: Covariance perturbation.}
Define
\[
\mathcal G_t^{\mathrm{cov}}
=
\left\{
\max_{s\in\mathcal H_t^\star\cup\{t\}}
\|\widehat C_s-C_s^\star\|_{\mathrm{op}}
\le
\eta_{M,\delta_t/(n_t^\star+1)}
\right\}.
\]
By Proposition~\ref{prop:sample_cov_concentration}, iterated expectation, and
a union bound over the \(n_t^\star+1\) time points,
\[
\mathbb Q_t\!\left((\mathcal G_t^{\mathrm{cov}})^c\right)
\le
\delta_t .
\]
Since \(\eta_{M,\delta_t/(n_t^\star+1)}\le\eta_0\), Assumption~(A1c) applies on
\(\mathcal G_t^{\mathrm{cov}}\). Thus replacing ideal scores by empirical
scores changes the oracle conformal coverage probability by at most
\(L_\Sigma\eta_{M,\delta_t/(n_t^\star+1)}\). Accounting for
\((\mathcal G_t^{\mathrm{cov}})^c\),
\[
\left|
\mathbb Q_t\!\left(
Y_t\in\mathcal C_{t,\mathrm{or}}(\alpha_t)
\right)
-
\mathbb Q_t\!\left(
Y_t\in\mathcal C_{t,\mathrm{or}}^\star(\alpha_t)
\right)
\right|
\le
L_\Sigma\eta_{M,\delta_t/(n_t^\star+1)}
+
\delta_t .
\]

\paragraph{Step 3: Window-selection error.}
Let \(\mathcal E_t=\{\widehat L_t=L_t^\star\}\). By
Proposition~\ref{prop:window_recovery},
\[
\mathbb Q_t(\mathcal E_t^c)
\le
\rho_t^{\mathrm{win}}.
\]
On \(\mathcal E_t\), the selected empirical calibration window equals the
oracle empirical calibration window, so
\[
\mathcal C_t^{\mathrm{ACI}}(\alpha)
=
\mathcal C_{t,\mathrm{or}}(\alpha_t).
\]
Therefore,
\[
\left|
\mathbb Q_t\!\left(
Y_t\in\mathcal C_t^{\mathrm{ACI}}(\alpha)
\right)
-
\mathbb Q_t\!\left(
Y_t\in\mathcal C_{t,\mathrm{or}}(\alpha_t)
\right)
\right|
\le
\mathbb Q_t(\mathcal E_t^c)
\le
\rho_t^{\mathrm{win}}.
\]

\paragraph{Step 4: ACI tracking.}
Combining Steps 1--3,
\[
\begin{aligned}
\left|
\mathbb Q_t\!\left(
Y_t\in\mathcal C_t^{\mathrm{ACI}}(\alpha)
\right)
-
\left(1-\mathbb E_{\mathbb Q_t}[\alpha_t]\right)
\right|
\le\;&
\rho_t^{\mathrm{win}}
+
\frac{1}{n_t^\star+1} \\
&+
L_\Sigma\eta_{M,\delta_t/(n_t^\star+1)}
+
\delta_t .
\end{aligned}
\]
By Assumption~(A4a),
\[
\left|
\left(1-\mathbb E_{\mathbb Q_t}[\alpha_t]\right)
-
(1-\alpha)
\right|
\le
\mathbb E_{\mathbb Q_t}[|\alpha_t-\alpha|]
\le
\varepsilon_t^{\mathrm{ACI}}.
\]
The triangle inequality yields
\[
\left|
\mathbb Q_t\!\left(
Y_t\in\mathcal C_t^{\mathrm{ACI}}(\alpha)
\right)
-
(1-\alpha)
\right|
\le
\rho_t^{\mathrm{win}}
+
\varepsilon_t^{\mathrm{ACI}}
+
\frac{1}{n_t^\star+1}
+
L_\Sigma\eta_{M,\delta_t/(n_t^\star+1)}
+
\delta_t .
\]
This completes the proof.
\end{proof}

\subsection{Proof of Corollary~\ref{cor:overall_gap}}

\begin{proof}
The empirical average coverage is
\[
\widehat{\mathrm{Cov}}_T
=
\frac1T\sum_{t=1}^T
\mathbf 1\{Y_t\in\mathcal C_t^{\mathrm{ACI}}(\alpha)\}.
\]
By the tower property,
\[
\mathbb E[\widehat{\mathrm{Cov}}_T]
=
\frac1T\sum_{t=1}^T
\mathbb E\!\left[
\mathbb Q_t\!\left(
Y_t\in\mathcal C_t^{\mathrm{ACI}}(\alpha)
\right)
\right].
\]
Hence
\[
\begin{aligned}
\left|
\mathbb E[\widehat{\mathrm{Cov}}_T]
-
(1-\alpha)
\right|
&\le
\frac1T\sum_{t=1}^T
\mathbb E\!\left[
\left|
\mathbb Q_t\!\left(
Y_t\in\mathcal C_t^{\mathrm{ACI}}(\alpha)
\right)
-
(1-\alpha)
\right|
\right].
\end{aligned}
\]
Applying Theorem~\ref{thm:per_time_coverage} for each \(t\) gives
\[
\left|
\mathbb E[\widehat{\mathrm{Cov}}_T]
-
(1-\alpha)
\right|
\le
\frac1T\sum_{t=1}^T
\left(
\rho_t^{\mathrm{win}}
+
\varepsilon_t^{\mathrm{ACI}}
+
\frac{1}{n_t^\star+1}
+
L_\Sigma\eta_{M,\delta_t/(n_t^\star+1)}
+
\delta_t
\right),
\]
provided these per-time terms are deterministic or hold almost surely under
the outer expectation. If they are random, the same display holds with outer
expectations applied to each right-hand-side term. Defining
\[
\bar\rho_T^{\mathrm{win}}
=
\frac1T\sum_{t=1}^T\rho_t^{\mathrm{win}},
\qquad
\bar\varepsilon_T^{\mathrm{ACI}}
=
\frac1T\sum_{t=1}^T\varepsilon_t^{\mathrm{ACI}},
\]
\[
\bar\eta_T
=
\frac1T\sum_{t=1}^T
\eta_{M,\delta_t/(n_t^\star+1)},
\qquad
\bar\delta_{T}
=
\frac1T\sum_{t=1}^T\delta_t,
\]
we obtain
\[
\left|
\mathbb E[\widehat{\mathrm{Cov}}_T]
-
(1-\alpha)
\right|
\le
\bar\rho_T^{\mathrm{win}}
+
\bar\varepsilon_T^{\mathrm{ACI}}
+
\frac1T\sum_{t=1}^T\frac1{n_t^\star+1}
+
L_\Sigma\bar\eta_T
+
\bar\delta_{T}.
\]
This completes the proof.
\end{proof}

\section{Algorithm}
\label{app:appendix_algo}

\begin{algorithm}[H]
\caption{Sequential \textsc{Space} wrapper}
\label{alg:space}
\begin{algorithmic}[1]
\Require Predictive sample stream $\{\widehat{\mathcal Y}_t\}_{t=1}^T$, observations revealed sequentially, target miscoverage $\alpha$, training scales $D$, probe length $p$, window range $[L_{\min},L_{\max}]$, step size $h$
\State Initialize internal level $\alpha_t\gets\alpha$
\State Initialize score and diagnostic histories
\For{$t=t_{\mathrm{cal}},\dots,T$}
    \State Compute center $\mu_t$ from $\widehat{\mathcal Y}_t$
    \State Form predictive residual samples $r_t^{(m)}=\hat y_t^{(m)}-\mu_t$
    \State Standardize $\tilde r_t^{(m)}=D^{-1}r_t^{(m)}$
    \State Compute $\widehat C_t=\operatorname{Cov}(\tilde r_t^{(1)},\dots,\tilde r_t^{(M)})$ and set $\widehat\Sigma_t=D\widehat C_tD$
    \State Form the regularized inverse $\widehat\Sigma_t^{+}$
    \State Construct $\mathcal P_t=\{t-p,\dots,t-1\}$ from stored past diagnostics
    \State Select $L_t$ by backward same-regime search over $L\in\{L_{\min},L_{\min}+h,\dots,L_{\max}\}$
    \Statex \hspace{\algorithmicindent} Accept the extension from $L-h$ to $L$ only if both $\mathcal C_t^{\mathrm{whole}}(L)$ and $\mathcal B_t^{\mathrm{inc}}(L)$ pass the same-regime test. Stop at the first rejected extension and set \(L_t\) to the last accepted length
    \State Form $\mathcal H_t(L_t)=\{e_s:s\in\mathcal C_t^{\mathrm{whole}}(L_t)\cup\mathcal P_t\}$
    \State Compute $r_t=Q_{1-\alpha_t}(\mathcal H_t(L_t))$
    \State Output
    \[
    \mathcal C_t(\alpha)=
    \left\{
    y\in\mathbb R^d:
    (y-\mu_t)^\top\widehat\Sigma_t^{+}(y-\mu_t)\le r_t^2
    \right\}
    \]
    \State Observe $y_t$ and compute
    \[
    e_t=\sqrt{(y_t-\mu_t)^\top\widehat\Sigma_t^{+}(y_t-\mu_t)}
    \]
    \State Compute $\tilde r_t=D^{-1}(y_t-\mu_t)$ and diagnostics
    \[
    s_{t,1}=e_t,\qquad
    s_{t,2}=\|\tilde r_t\|_\infty,\qquad
    s_{t,3}=|v_1^\top \tilde r_t|
    \]
    \State Store $e_t$ and $(s_{t,1},s_{t,2},s_{t,3})$
    \State Update $\alpha_t$ using ACI
\EndFor
\end{algorithmic}
\end{algorithm}

\section{Result Tables}
\label{app:appendix_result_table}

\subsection{Performance across forecasters}
\label{subsec:backbone_results}

In this subsection, we evaluate \textsc{Space} across heterogeneous probabilistic forecasters. Table~\ref{tab:backbone_summary} averages results over datasets for each forecaster, so it should be read as a robustness check on the intended application of the wrapper.

\begin{table}[t]
\captionsetup{aboveskip=\baselineskip, belowskip=\baselineskip}
\centering

\caption{Forecaster summary under \textsc{Space} at the $90\%$ target. Forecasters are GP-Copula \cite{salinas2019gpcopula}, TimeGrad \cite{rasul2021timegrad}, CSDI \cite{tashiro2021csdi}, TSFlow \cite{kollovieh2024tsflow}, DeepAR \cite{salinas2020deepar}, NsDiff \cite{ye2025nsdiff}, TMDM \cite{li2024tmdm}, and TACTiS-2 \cite{tactis2}. Coverage columns report average empirical coverage, while Gap columns report the average absolute coverage gaps. Gap reduction is reported as the relative percentage reduction $100 \times (\textsc{RawGap}-\textsc{SpaceGap})/\textsc{RawGap}$.}
\label{tab:backbone_summary}

\small
\begin{tabular}{lccccc}
\toprule
Backbone & Raw Cov & Raw Gap $\downarrow$ & SPACE Cov & SPACE Gap $\downarrow$ & Gap Red. \\
\midrule
CSDI & 0.787 & 0.125 & 0.902 & 0.004 & 96.8\% \\
DeepAR & 0.636 & 0.274 & 0.897 & 0.003 & 99.0\% \\
GP-Copula & 0.400 & 0.500 & 0.898 & 0.003 & 99.4\% \\
NsDiff & 0.824 & 0.189 & 0.899 & 0.003 & 98.4\% \\
TACTiS-2 & 0.376 & 0.524 & 0.903 & 0.004 & 99.3\% \\
TimeGrad & 0.371 & 0.529 & 0.897 & 0.005 & 99.0\% \\
TMDM & 0.644 & 0.308 & 0.899 & 0.002 & 99.4\% \\
TSFlow & 0.509 & 0.403 & 0.901 & 0.002 & 99.4\% \\
\bottomrule
\end{tabular}

\end{table}

\textsc{Space} improves calibration across a wide range of forecasters with very different raw uncertainty behavior. It substantially reduces the gap for clearly under-covered forecasters such as GP-Copula, TACTiS-2, and TimeGrad while also improving on raw forecasters that are already producing competitive results, such as NsDiff. This is consistent with the role of the wrapper: it is designed to pull realized coverage toward the nominal target regardless of whether the raw samples under- or over-cover.

\subsection{Full result tables}
We report the complete numerical results for all evaluated methods at the three target coverage levels in this subsection. These tables complement the aggregate summaries in the main text by providing the full dataset--forecaster level comparisons, including empirical joint coverage gap, and efficiency metrics for each conformal wrapper.

\label{app:full_result_tables}
\begingroup
\setlength{\LTcapwidth}{\textwidth}
\tiny
\setlength{\tabcolsep}{2.0pt}
\renewcommand{\arraystretch}{0.88}
\setlength{\LTleft}{0pt plus 1fill}
\setlength{\LTright}{0pt plus 1fill}
\begin{longtable}{llcccccc|cccccc}
\caption{Detailed conformal wrapper comparison at the $90\%$ target. The table reports coverage gap and log-volume for each dataset--backbone pair across all conformal wrappers. Lower is better for both metrics.}
\label{tab:wrapper_detailed_90}\\
\toprule
& & \multicolumn{6}{c}{Coverage gap $\downarrow$} & \multicolumn{6}{c}{LogVol $\downarrow$} \\
\cmidrule(lr){3-8} \cmidrule(lr){9-14}
Dataset & Backbone
& SPACE & \shortstack{MultiDim\\SPCI} & \shortstack{Local\\Ell.} & \shortstack{Emp.\\Cop.} & CopulaCPTS & \shortstack{Gauss.\\Cop.}
& SPACE & \shortstack{MultiDim\\SPCI} & \shortstack{Local\\Ell.} & \shortstack{Emp.\\Cop.} & CopulaCPTS & \shortstack{Gauss.\\Cop.} \\
\midrule
\endfirsthead

\multicolumn{14}{c}{\tablename~\thetable{} continued from previous page}\\
\toprule
& & \multicolumn{6}{c}{Coverage gap $\downarrow$} & \multicolumn{6}{c}{LogVol $\downarrow$} \\
\cmidrule(lr){3-8} \cmidrule(lr){9-14}
Dataset & Backbone
& SPACE & \shortstack{MultiDim\\SPCI} & \shortstack{Local\\Ell.} & \shortstack{Emp.\\Cop.} & CopulaCPTS & \shortstack{Gauss.\\Cop.}
& SPACE & \shortstack{MultiDim\\SPCI} & \shortstack{Local\\Ell.} & \shortstack{Emp.\\Cop.} & CopulaCPTS & \shortstack{Gauss.\\Cop.} \\
\midrule
\endhead

\midrule
\multicolumn{14}{r}{Continued on next page}\\
\endfoot

\bottomrule
\endlastfoot
\multirow{7}{*}{electricity} & DeepAR & 0.004 & 0.139 & 0.139 & 0.121 & 0.083 & \textbf{0.002} & \textbf{1797.94} & 1743.91 & 1751.16 & 2072.63 & 2816.26 & 2166.29 \\
 & GP-Copula & \textbf{0.002} & 0.142 & 0.151 & 0.254 & 0.072 & 0.030 & \textbf{1827.45} & 1851.09 & 1842.54 & 2082.68 & 2970.49 & 2186.00 \\
 & NsDiff & \textbf{0.000} & 0.079 & 0.426 & 0.585 & 0.099 & 0.124 & \textbf{3436.61} & 1916.57 & 1879.74 & 2046.48 & 3047.53 & 2201.14 \\
 & TACTiS-2 & \textbf{0.002} & 0.134 & 0.183 & 0.080 & 0.081 & 0.028 & \textbf{1867.36} & 1790.10 & 1791.68 & 2134.99 & 2848.78 & 2187.38 \\
 & TimeGrad & \textbf{0.003} & 0.073 & 0.384 & 0.580 & 0.075 & 0.299 & \textbf{3955.57} & 2066.06 & 2034.33 & 2126.94 & 3086.63 & 2236.49 \\
 & TMDM & \textbf{0.002} & 0.065 & 0.390 & 0.477 & 0.043 & 0.109 & \textbf{3446.15} & 1986.95 & 1956.58 & 2102.96 & 3140.46 & 2221.45 \\
 & TSFlow & 0.003 & 0.117 & 0.121 & 0.142 & 0.077 & \textbf{0.002} & 2787.87 & 1758.87 & 1756.52 & 2051.49 & 2803.15 & \textbf{2191.19} \\
\midrule
\multirow{8}{*}{etth1} & CSDI & \textbf{0.003} & 0.007 & 0.054 & 0.024 & 0.099 & 0.027 & 10.52 & \textbf{7.53} & 7.51 & 11.49 & 20.79 & 11.69 \\
 & DeepAR & \textbf{0.002} & 0.015 & 0.041 & 0.028 & 0.058 & 0.041 & 9.79 & \textbf{7.76} & 7.86 & 11.13 & 15.68 & 11.56 \\
 & GP-Copula & \textbf{0.005} & 0.011 & 0.058 & 0.047 & 0.037 & 0.053 & 10.26 & \textbf{8.70} & 9.17 & 12.08 & 16.50 & 12.40 \\
 & NsDiff & \textbf{0.005} & 0.014 & 0.063 & 0.023 & 0.034 & 0.023 & 11.90 & \textbf{9.05} & 9.49 & 11.81 & 16.47 & 11.81 \\
 & TACTiS-2 & \textbf{0.002} & 0.005 & 0.064 & 0.027 & 0.034 & 0.034 & 10.30 & \textbf{7.66} & 8.18 & 11.49 & 15.11 & 11.67 \\
 & TimeGrad & \textbf{0.008} & 0.025 & 0.165 & 0.033 & 0.058 & 0.009 & 16.19 & 11.96 & 13.22 & 14.35 & 19.49 & \textbf{14.55} \\
 & TMDM & \textbf{0.000} & 0.002 & 0.057 & 0.031 & 0.060 & 0.043 & 11.46 & \textbf{7.67} & 8.13 & 11.81 & 16.61 & 12.16 \\
 & TSFlow & \textbf{0.003} & 0.012 & 0.063 & 0.030 & 0.100 & 0.037 & 10.37 & \textbf{7.87} & 8.14 & 11.19 & 20.87 & 11.63 \\
\midrule
\multirow{8}{*}{ettm1} & CSDI & \textbf{0.000} & 0.017 & 0.054 & 0.030 & 0.009 & 0.042 & 6.59 & \textbf{4.60} & 5.27 & 9.06 & 11.78 & 9.46 \\
 & DeepAR & \textbf{0.001} & 0.009 & 0.059 & 0.028 & 0.019 & 0.038 & 6.74 & \textbf{4.82} & 5.74 & 8.80 & 11.99 & 9.19 \\
 & GP-Copula & \textbf{0.000} & 0.005 & 0.049 & 0.021 & 0.006 & 0.038 & \textbf{5.53} & 5.95 & 6.59 & 9.74 & 12.44 & 10.21 \\
 & NsDiff & 0.005 & \textbf{0.002} & 0.046 & 0.016 & 0.009 & 0.029 & 9.04 & \textbf{7.28} & 7.57 & 9.68 & 13.78 & 10.00 \\
 & TACTiS-2 & \textbf{0.001} & 0.004 & 0.054 & 0.027 & \textbf{0.001} & 0.037 & 6.55 & \textbf{4.91} & 5.47 & 9.14 & 11.84 & 9.55 \\
 & TimeGrad & 0.011 & 0.021 & 0.117 & 0.015 & 0.076 & \textbf{0.007} & 13.51 & 9.18 & 10.30 & \textbf{12.40} & 18.36 & 12.55 \\
 & TMDM & \textbf{0.001} & 0.005 & 0.053 & 0.026 & 0.003 & 0.038 & 7.26 & \textbf{4.70} & 5.33 & 8.94 & 11.50 & 9.31 \\
 & TSFlow & \textbf{0.000} & 0.018 & 0.057 & 0.025 & 0.011 & 0.037 & 7.22 & \textbf{5.44} & 5.94 & 9.16 & 11.69 & 9.53 \\
\midrule
\multirow{8}{*}{ettm2} & CSDI & \textbf{0.000} & 0.009 & 0.019 & 0.004 & 0.004 & 0.019 & 10.59 & 6.82 & \textbf{6.33} & 10.02 & 12.37 & 10.33 \\
 & DeepAR & \textbf{0.000} & 0.006 & 0.036 & 0.004 & 0.007 & 0.010 & 11.17 & \textbf{7.50} & 7.68 & 10.43 & 12.61 & 10.64 \\
 & GP-Copula & \textbf{0.000} & 0.007 & 0.054 & 0.019 & 0.019 & 0.008 & 9.72 & \textbf{8.26} & 7.73 & 10.87 & 12.22 & 11.09 \\
 & NsDiff & \textbf{0.001} & \textbf{0.001} & 0.009 & 0.010 & 0.009 & 0.012 & 12.61 & 10.50 & \textbf{10.40} & 12.30 & 13.55 & 12.66 \\
 & TACTiS-2 & \textbf{0.000} & 0.010 & 0.009 & \textbf{0.000} & 0.009 & 0.017 & 10.84 & 6.65 & \textbf{6.33} & 9.57 & 12.52 & 9.85 \\
 & TimeGrad & \textbf{0.000} & 0.010 & 0.039 & 0.011 & 0.074 & 0.019 & 14.24 & \textbf{13.97} & 14.12 & 15.15 & 18.35 & 15.25 \\
 & TMDM & \textbf{0.000} & 0.004 & 0.012 & 0.004 & 0.005 & 0.015 & 10.76 & 7.23 & \textbf{6.91} & 10.04 & 12.24 & 10.28 \\
 & TSFlow & \textbf{0.000} & 0.005 & 0.009 & 0.001 & 0.026 & 0.007 & 19.02 & \textbf{15.57} & 16.04 & 16.59 & 19.81 & 16.55 \\
\midrule
\multirow{8}{*}{exchange} & CSDI & 0.014 & 0.018 & \textbf{0.001} & 0.041 & 0.044 & 0.041 & \textbf{-36.70} & -23.14 & -35.28 & -36.88 & -29.04 & -36.88 \\
 & DeepAR & \textbf{0.009} & 0.055 & 0.018 & 0.140 & 0.024 & 0.094 & \textbf{-29.75} & -18.29 & -28.40 & -30.11 & -22.54 & -29.78 \\
 & GP-Copula & 0.012 & 0.025 & \textbf{0.002} & 0.055 & 0.014 & 0.055 & \textbf{-34.64} & -20.83 & -31.97 & -32.78 & -27.08 & -32.78 \\
 & NsDiff & \textbf{0.005} & 0.048 & 0.044 & 0.134 & 0.051 & 0.134 & \textbf{-29.63} & -21.82 & -26.09 & -29.38 & -21.91 & -29.38 \\
 & TACTiS-2 & \textbf{0.014} & 0.018 & 0.024 & 0.048 & 0.064 & 0.048 & \textbf{-36.41} & -23.43 & -34.81 & -36.66 & -28.35 & -36.66 \\
 & TimeGrad & \textbf{0.009} & 0.045 & 0.551 & 0.818 & 0.821 & 0.818 & \textbf{-11.24} & -9.03 & -12.69 & -17.86 & -6.30 & -17.86 \\
 & TMDM & \textbf{0.009} & 0.012 & 0.047 & 0.439 & 0.196 & 0.439 & \textbf{-13.05} & -9.87 & -15.25 & -18.56 & -9.52 & -18.56 \\
 & TSFlow & \textbf{0.005} & 0.028 & 0.166 & 0.117 & 0.101 & 0.084 & \textbf{-33.11} & -20.75 & -33.39 & -35.09 & -27.04 & -34.92 \\
\midrule
\multirow{7}{*}{traffic} & DeepAR & \textbf{0.001} & 0.070 & 0.034 & 0.367 & 0.069 & 0.367 & \textbf{-2004.30} & -1655.92 & -1917.09 & -1110.90 & -1444.22 & -1110.90 \\
 & GP-Copula & \textbf{0.003} & 0.060 & 0.018 & 0.417 & 0.057 & 0.417 & \textbf{-2043.70} & -1485.45 & -1843.90 & -1047.90 & -1498.27 & -1047.90 \\
 & NsDiff & \textbf{0.004} & 0.007 & 0.058 & 0.503 & 0.027 & 0.503 & \textbf{-1666.08} & -1049.60 & -1774.45 & -1090.52 & -1513.42 & -1090.52 \\
 & TACTiS-2 & \textbf{0.006} & 0.054 & 0.023 & 0.419 & 0.057 & 0.419 & \textbf{-1947.96} & -1612.59 & -1837.45 & -1075.69 & -1471.75 & -1075.69 \\
 & TimeGrad & \textbf{0.006} & 0.070 & 0.078 & 0.659 & 0.039 & 0.659 & \textbf{-1419.68} & -1282.09 & -1501.35 & -955.67 & -1165.37 & -955.67 \\
 & TMDM & \textbf{0.000} & 0.020 & 0.035 & 0.436 & 0.015 & 0.436 & \textbf{-1524.18} & -704.16 & -1708.73 & -1086.10 & -1484.19 & -1086.10 \\
 & TSFlow & \textbf{0.005} & 0.041 & 0.055 & 0.421 & 0.059 & 0.421 & \textbf{-2008.51} & -1600.29 & -1888.35 & -1111.23 & -1576.92 & -1111.23 \\
\midrule
\multirow{8}{*}{weather} & CSDI & \textbf{0.002} & 0.007 & 0.059 & 0.115 & 0.021 & 0.045 & 41.87 & \textbf{5.26} & 2.86 & 25.90 & 100.70 & 28.92 \\
 & DeepAR & \textbf{0.003} & 0.004 & 0.036 & 0.157 & 0.027 & 0.125 & 54.96 & \textbf{35.45} & 39.73 & 49.29 & 113.17 & 50.42 \\
 & GP-Copula & \textbf{0.001} & 0.009 & 0.057 & 0.107 & 0.005 & 0.049 & 43.25 & \textbf{22.63} & 21.91 & 41.69 & 102.62 & 44.55 \\
 & NsDiff & \textbf{0.001} & 0.003 & 0.020 & 0.096 & 0.024 & 0.058 & 61.59 & \textbf{53.72} & 55.02 & 59.86 & 108.53 & 61.17 \\
 & TACTiS-2 & 0.001 & \textbf{0.000} & 0.020 & 0.051 & 0.020 & 0.045 & 43.82 & \textbf{8.22} & 4.96 & 28.99 & 100.33 & 29.32 \\
 & TimeGrad & \textbf{0.001} & 0.001 & 0.085 & 0.271 & 0.010 & 0.229 & 57.09 & \textbf{41.14} & 42.66 & 46.96 & 106.63 & 48.13 \\
 & TMDM & \textbf{0.001} & 0.004 & \textbf{0.001} & 0.127 & 0.022 & 0.068 & 50.32 & \textbf{35.10} & 36.39 & 43.23 & 102.61 & 45.18 \\
 & TSFlow & \textbf{0.000} & 0.170 & 0.034 & 0.264 & 0.026 & 0.221 & \textbf{60.52} & 67.85 & 79.83 & 43.48 & 104.69 & 44.51 \\
\end{longtable}

\endgroup

\begingroup
\setlength{\LTcapwidth}{\textwidth}
\tiny
\setlength{\tabcolsep}{2.0pt}
\renewcommand{\arraystretch}{0.88}
\setlength{\LTleft}{0pt plus 1fill}
\setlength{\LTright}{0pt plus 1fill}
\begin{longtable}{llcccccc|cccccc}
\caption{Detailed conformal wrapper comparison at the $50\%$ target. The table reports coverage gap and log-volume for each dataset--backbone pair across all conformal wrappers. Lower is better for both metrics.}
\label{tab:wrapper_detailed_50}\\
\toprule
& & \multicolumn{6}{c}{Coverage gap $\downarrow$} & \multicolumn{6}{c}{LogVol $\downarrow$} \\
\cmidrule(lr){3-8} \cmidrule(lr){9-14}
Dataset & Backbone
& SPACE & \shortstack{MultiDim\\SPCI} & \shortstack{Local\\Ell.} & \shortstack{Emp.\\Cop.} & CopulaCPTS & \shortstack{Gauss.\\Cop.}
& SPACE & \shortstack{MultiDim\\SPCI} & \shortstack{Local\\Ell.} & \shortstack{Emp.\\Cop.} & CopulaCPTS & \shortstack{Gauss.\\Cop.} \\
\midrule
\endfirsthead

\multicolumn{14}{c}{\tablename~\thetable{} continued from previous page}\\
\toprule
& & \multicolumn{6}{c}{Coverage gap $\downarrow$} & \multicolumn{6}{c}{LogVol $\downarrow$} \\
\cmidrule(lr){3-8} \cmidrule(lr){9-14}
Dataset & Backbone
& SPACE & \shortstack{MultiDim\\SPCI} & \shortstack{Local\\Ell.} & \shortstack{Emp.\\Cop.} & CopulaCPTS & \shortstack{Gauss.\\Cop.}
& SPACE & \shortstack{MultiDim\\SPCI} & \shortstack{Local\\Ell.} & \shortstack{Emp.\\Cop.} & CopulaCPTS & \shortstack{Gauss.\\Cop.} \\
\midrule
\endhead

\midrule
\multicolumn{14}{r}{Continued on next page}\\
\endfoot

\bottomrule
\endlastfoot
\multirow{7}{*}{electricity} & DeepAR & \textbf{0.004} & 0.111 & 0.122 & 0.197 & 0.458 & 0.121 & \textbf{1716.61} & 1696.69 & 1684.79 & 1884.26 & 2702.49 & 1914.07 \\
 & GP-Copula & \textbf{0.001} & 0.100 & 0.158 & 0.158 & 0.226 & 0.088 & \textbf{1765.49} & 1802.28 & 1766.75 & 1990.40 & 2564.03 & 2017.31 \\
 & NsDiff & \textbf{0.002} & 0.115 & 0.305 & 0.385 & 0.378 & 0.287 & \textbf{2694.55} & 1876.96 & 1834.83 & 1972.64 & 2799.25 & 2010.21 \\
 & TACTiS-2 & \textbf{0.005} & 0.106 & 0.179 & 0.140 & 0.159 & 0.007 & \textbf{1776.44} & 1743.45 & 1725.51 & 1964.06 & 2500.17 & 1993.67 \\
 & TimeGrad & \textbf{0.001} & 0.108 & 0.232 & 0.373 & 0.175 & 0.337 & \textbf{3304.03} & 2042.12 & 1995.94 & 2076.98 & 2819.30 & 2088.17 \\
 & TMDM & \textbf{0.002} & 0.106 & 0.262 & 0.331 & 0.095 & 0.195 & \textbf{2759.85} & 1960.32 & 1928.50 & 2040.14 & 2900.43 & 2071.59 \\
 & TSFlow & \textbf{0.001} & 0.105 & 0.103 & 0.193 & 0.424 & 0.098 & \textbf{2351.84} & 1714.78 & 1695.45 & 1907.09 & 2643.04 & 1933.06 \\
\midrule
\multirow{8}{*}{etth1} & CSDI & \textbf{0.001} & 0.012 & 0.160 & 0.065 & 0.499 & 0.105 & 4.63 & \textbf{4.16} & 3.79 & 5.78 & 20.79 & 6.30 \\
 & DeepAR & \textbf{0.005} & 0.038 & 0.216 & 0.081 & 0.458 & 0.147 & \textbf{4.39} & 4.33 & 4.43 & 5.69 & 15.68 & 6.43 \\
 & GP-Copula & \textbf{0.008} & 0.027 & 0.189 & 0.095 & 0.047 & 0.144 & \textbf{5.33} & 5.58 & 5.54 & 7.15 & 8.72 & 7.77 \\
 & NsDiff & \textbf{0.004} & 0.011 & 0.174 & 0.077 & 0.199 & 0.115 & 6.18 & \textbf{6.02} & 6.06 & 6.95 & 12.25 & 7.45 \\
 & TACTiS-2 & 0.005 & \textbf{0.001} & 0.183 & 0.113 & 0.094 & 0.156 & 4.68 & \textbf{4.18} & 4.18 & 6.26 & 8.28 & 6.72 \\
 & TimeGrad & \textbf{0.002} & 0.038 & 0.196 & 0.110 & 0.160 & 0.027 & \textbf{13.35} & 10.41 & 10.36 & 10.60 & 16.19 & 11.24 \\
 & TMDM & 0.008 & \textbf{0.006} & 0.128 & 0.060 & 0.065 & 0.115 & 6.05 & \textbf{4.85} & 4.77 & 6.77 & 9.49 & 7.26 \\
 & TSFlow & 0.006 & \textbf{0.001} & 0.183 & 0.081 & 0.500 & 0.140 & 4.70 & \textbf{4.44} & 4.33 & 5.98 & 20.87 & 6.58 \\
\midrule
\multirow{8}{*}{ettm1} & CSDI & \textbf{0.001} & 0.018 & 0.152 & 0.078 & 0.037 & 0.116 & 1.37 & \textbf{1.05} & 1.07 & 2.25 & 3.66 & 2.64 \\
 & DeepAR & \textbf{0.001} & 0.015 & 0.175 & 0.100 & 0.057 & 0.135 & 1.48 & \textbf{1.09} & 1.49 & 2.42 & 3.67 & 2.82 \\
 & GP-Copula & \textbf{0.001} & 0.020 & 0.152 & 0.061 & 0.047 & 0.091 & \textbf{1.04} & 2.51 & 2.22 & 3.26 & 4.51 & 3.64 \\
 & NsDiff & \textbf{0.000} & 0.004 & 0.110 & 0.016 & 0.004 & 0.074 & \textbf{4.41} & 4.71 & 4.35 & 4.85 & 7.80 & 5.30 \\
 & TACTiS-2 & 0.000 & \textbf{0.000} & 0.150 & 0.062 & 0.040 & 0.119 & 1.31 & \textbf{1.31} & 1.30 & 2.26 & 3.78 & 2.82 \\
 & TimeGrad & \textbf{0.003} & 0.063 & 0.204 & 0.113 & 0.240 & 0.043 & \textbf{10.58} & 7.31 & 6.70 & 8.15 & 14.17 & 8.80 \\
 & TMDM & \textbf{0.000} & 0.019 & 0.188 & 0.074 & 0.022 & 0.129 & 1.87 & \textbf{1.29} & 1.65 & 2.45 & 3.68 & 3.01 \\
 & TSFlow & \textbf{0.001} & 0.033 & 0.145 & 0.067 & 0.050 & 0.107 & \textbf{2.18} & 1.91 & 2.11 & 2.89 & 4.18 & 3.27 \\
\midrule
\multirow{8}{*}{ettm2} & CSDI & \textbf{0.001} & 0.002 & 0.010 & 0.027 & 0.049 & 0.069 & 5.15 & 4.03 & \textbf{2.89} & 5.45 & 6.89 & 5.98 \\
 & DeepAR & \textbf{0.002} & 0.008 & 0.062 & \textbf{0.002} & 0.023 & 0.039 & 6.07 & \textbf{5.62} & 4.46 & 6.61 & 7.16 & 7.03 \\
 & GP-Copula & 0.004 & 0.009 & 0.042 & 0.003 & \textbf{0.000} & 0.039 & \textbf{4.97} & 5.18 & 3.86 & 6.71 & 7.17 & 7.21 \\
 & NsDiff & \textbf{0.000} & 0.004 & 0.055 & 0.029 & 0.006 & 0.025 & 8.95 & \textbf{8.84} & 7.66 & 8.99 & 9.82 & 9.34 \\
 & TACTiS-2 & 0.001 & \textbf{0.001} & 0.007 & 0.013 & 0.047 & 0.069 & 4.64 & 4.19 & \textbf{2.62} & 4.16 & 6.81 & 5.09 \\
 & TimeGrad & \textbf{0.003} & 0.020 & 0.120 & 0.062 & 0.201 & 0.081 & \textbf{12.10} & 12.54 & 12.20 & 12.41 & 14.85 & 12.55 \\
 & TMDM & \textbf{0.002} & 0.005 & 0.015 & 0.019 & 0.028 & 0.047 & 6.09 & 5.13 & \textbf{4.01} & 6.17 & 6.92 & 6.41 \\
 & TSFlow & 0.002 & 0.003 & 0.041 & \textbf{0.001} & 0.024 & 0.016 & 13.64 & 14.05 & 13.38 & \textbf{13.63} & 16.48 & 13.75 \\
\midrule
\multirow{8}{*}{exchange} & CSDI & \textbf{0.000} & 0.026 & 0.036 & 0.128 & 0.326 & 0.102 & \textbf{-41.58} & -25.69 & -39.72 & -42.43 & -31.77 & -42.04 \\
 & DeepAR & 0.016 & 0.023 & 0.082 & 0.049 & 0.411 & \textbf{0.003} & \textbf{-33.27} & -19.66 & -32.02 & -33.12 & -23.03 & -32.77 \\
 & GP-Copula & 0.010 & 0.010 & 0.122 & \textbf{0.003} & 0.345 & 0.023 & \textbf{-39.91} & -23.05 & -36.72 & -38.24 & -29.11 & -37.83 \\
 & NsDiff & \textbf{0.010} & 0.026 & 0.013 & 0.086 & 0.434 & 0.013 & \textbf{-32.50} & -23.40 & -30.04 & -32.25 & -23.10 & -31.88 \\
 & TACTiS-2 & \textbf{0.016} & 0.046 & 0.118 & 0.086 & 0.464 & 0.056 & \textbf{-42.47} & -25.88 & -39.44 & -42.62 & -28.35 & -42.14 \\
 & TimeGrad & \textbf{0.003} & 0.066 & 0.490 & 0.500 & 0.480 & 0.500 & \textbf{-12.22} & -9.98 & -14.18 & -19.46 & -6.49 & -19.43 \\
 & TMDM & \textbf{0.023} & \textbf{0.023} & 0.046 & 0.230 & 0.204 & 0.227 & -13.76 & -10.77 & -16.31 & -19.98 & -9.52 & -19.94 \\
 & TSFlow & \textbf{0.007} & 0.049 & 0.283 & 0.266 & 0.214 & 0.161 & \textbf{-36.90} & -22.61 & -37.73 & -39.89 & -27.48 & -38.94 \\
\midrule
\multirow{7}{*}{traffic} & DeepAR & \textbf{0.001} & 0.078 & 0.094 & 0.205 & 0.111 & 0.174 & \textbf{-2751.54} & -2041.51 & -2780.99 & -1496.57 & -2275.99 & -1413.46 \\
 & GP-Copula & \textbf{0.001} & 0.090 & 0.094 & 0.226 & 0.209 & 0.188 & \textbf{-2763.05} & -1871.26 & -2697.83 & -1431.22 & -2037.54 & -1349.32 \\
 & NsDiff & \textbf{0.001} & 0.053 & 0.117 & 0.359 & 0.117 & 0.103 & \textbf{-2252.30} & -1306.73 & -2347.17 & -1488.17 & -2020.75 & -1090.52 \\
 & TACTiS-2 & \textbf{0.003} & 0.090 & 0.110 & 0.232 & 0.115 & 0.187 & \textbf{-2633.07} & -2013.00 & -2766.84 & -1456.24 & -2197.27 & -1373.58 \\
 & TimeGrad & \textbf{0.006} & 0.093 & 0.127 & 0.369 & 0.323 & 0.356 & \textbf{-1977.18} & -1547.09 & -2157.70 & -1253.83 & -1439.78 & -1193.78 \\
 & TMDM & \textbf{0.000} & 0.001 & 0.075 & 0.315 & 0.321 & 0.036 & \textbf{-2036.92} & -905.44 & -2123.10 & -1467.98 & -1649.90 & -1086.10 \\
 & TSFlow & \textbf{0.011} & 0.067 & 0.103 & 0.271 & 0.125 & 0.238 & \textbf{-2693.86} & -1980.92 & -2595.17 & -1519.05 & -2249.78 & -1434.20 \\
\midrule
\multirow{8}{*}{weather} & CSDI & \textbf{0.004} & 0.021 & 0.178 & 0.144 & 0.074 & 0.044 & \textbf{24.26} & -2.85 & -10.86 & -3.75 & 53.21 & 5.73 \\
 & DeepAR & \textbf{0.004} & 0.014 & 0.086 & 0.179 & 0.045 & 0.114 & 37.54 & \textbf{30.15} & 30.30 & 32.18 & 67.44 & 35.49 \\
 & GP-Copula & \textbf{0.004} & 0.004 & 0.114 & 0.139 & 0.074 & 0.052 & 26.74 & \textbf{13.03} & 7.31 & -34.41 & 58.39 & 0.05 \\
 & NsDiff & \textbf{0.004} & 0.009 & 0.039 & 0.081 & 0.367 & 0.015 & 49.90 & \textbf{49.44} & 48.36 & 50.91 & 104.03 & 52.31 \\
 & TACTiS-2 & 0.002 & \textbf{0.001} & 0.155 & 0.158 & 0.070 & 0.074 & 23.38 & \textbf{-0.54} & -8.53 & 0.08 & 53.40 & 6.62 \\
 & TimeGrad & \textbf{0.004} & 0.019 & 0.016 & 0.143 & 0.035 & 0.103 & 41.73 & 37.48 & \textbf{35.29} & 35.94 & 81.61 & 37.35 \\
 & TMDM & \textbf{0.002} & 0.040 & 0.014 & 0.133 & 0.072 & 0.057 & 34.83 & 30.26 & \textbf{30.21} & 32.24 & 77.09 & 34.27 \\
 & TSFlow & \textbf{0.001} & 0.090 & 0.031 & 0.284 & 0.055 & 0.225 & \textbf{47.89} & 66.87 & 76.82 & 25.99 & 71.28 & 29.01 \\
\end{longtable}

\endgroup

\begingroup
\setlength{\LTcapwidth}{\textwidth}
\tiny
\setlength{\tabcolsep}{2.0pt}
\renewcommand{\arraystretch}{0.88}
\setlength{\LTleft}{0pt plus 1fill}
\setlength{\LTright}{0pt plus 1fill}
\begin{longtable}{llcccccc|cccccc}
\caption{Detailed conformal wrapper comparison at the $95\%$ target. The table reports coverage gap and log-volume for each dataset--backbone pair across all conformal wrappers. Lower is better for both metrics.}
\label{tab:wrapper_detailed_95}\\
\toprule
& & \multicolumn{6}{c}{Coverage gap $\downarrow$} & \multicolumn{6}{c}{LogVol $\downarrow$} \\
\cmidrule(lr){3-8} \cmidrule(lr){9-14}
Dataset & Backbone
& SPACE & \shortstack{MultiDim\\SPCI} & \shortstack{Local\\Ell.} & \shortstack{Emp.\\Cop.} & CopulaCPTS & \shortstack{Gauss.\\Cop.}
& SPACE & \shortstack{MultiDim\\SPCI} & \shortstack{Local\\Ell.} & \shortstack{Emp.\\Cop.} & CopulaCPTS & \shortstack{Gauss.\\Cop.} \\
\midrule
\endfirsthead

\multicolumn{14}{c}{\tablename~\thetable{} continued from previous page}\\
\toprule
& & \multicolumn{6}{c}{Coverage gap $\downarrow$} & \multicolumn{6}{c}{LogVol $\downarrow$} \\
\cmidrule(lr){3-8} \cmidrule(lr){9-14}
Dataset & Backbone
& SPACE & \shortstack{MultiDim\\SPCI} & \shortstack{Local\\Ell.} & \shortstack{Emp.\\Cop.} & CopulaCPTS & \shortstack{Gauss.\\Cop.}
& SPACE & \shortstack{MultiDim\\SPCI} & \shortstack{Local\\Ell.} & \shortstack{Emp.\\Cop.} & CopulaCPTS & \shortstack{Gauss.\\Cop.} \\
\midrule
\endhead

\midrule
\multicolumn{14}{r}{Continued on next page}\\
\endfoot

\bottomrule
\endlastfoot
\multirow{7}{*}{electricity} & DeepAR & \textbf{0.004} & 0.111 & 0.068 & 0.171 & 0.042 & 0.052 & \textbf{1823.07} & 1759.72 & 1790.24 & 2072.63 & 2949.63 & 2166.29 \\
 & GP-Copula & \textbf{0.000} & 0.116 & 0.098 & 0.304 & 0.040 & 0.080 & \textbf{1854.07} & 1866.89 & 1864.68 & 2082.68 & 3051.98 & 2186.00 \\
 & NsDiff & \textbf{0.000} & 0.057 & 0.369 & 0.635 & 0.050 & 0.174 & \textbf{3683.00} & 1928.18 & 1895.89 & 2046.48 & 3144.25 & 2201.14 \\
 & TACTiS-2 & \textbf{0.003} & 0.121 & 0.110 & 0.130 & 0.042 & 0.078 & \textbf{1903.58} & 1805.75 & 1821.25 & 2134.99 & 2957.35 & 2187.38 \\
 & TimeGrad & \textbf{0.003} & 0.067 & 0.402 & 0.630 & 0.044 & 0.349 & \textbf{4202.56} & 2073.73 & 2041.09 & 2126.94 & 3188.55 & 2236.49 \\
 & TMDM & \textbf{0.002} & 0.065 & 0.376 & 0.527 & 0.037 & 0.159 & \textbf{3757.88} & 1997.86 & 1963.54 & 2102.96 & 3220.00 & 2221.45 \\
 & TSFlow & \textbf{0.001} & 0.110 & 0.085 & 0.192 & 0.040 & 0.052 & \textbf{2967.57} & 1777.25 & 1781.26 & 2051.49 & 2915.85 & 2191.19 \\
\midrule
\multirow{8}{*}{etth1} & CSDI & 0.003 & \textbf{0.000} & 0.034 & 0.011 & 0.049 & 0.021 & 12.96 & \textbf{8.25} & 9.33 & 12.86 & 20.79 & 13.36 \\
 & DeepAR & \textbf{0.000} & 0.008 & 0.024 & 0.011 & 0.030 & 0.024 & 11.97 & \textbf{8.60} & 9.25 & 12.45 & 17.49 & 13.10 \\
 & GP-Copula & 0.005 & \textbf{0.000} & 0.030 & 0.026 & 0.024 & 0.031 & 12.31 & \textbf{9.45} & 10.60 & 13.08 & 18.69 & 13.46 \\
 & NsDiff & \textbf{0.000} & 0.010 & 0.027 & 0.014 & 0.030 & 0.024 & 13.53 & \textbf{9.95} & 10.56 & 12.99 & 18.42 & 13.68 \\
 & TACTiS-2 & \textbf{0.000} & 0.003 & 0.033 & 0.018 & 0.028 & 0.026 & 12.75 & \textbf{8.58} & 9.64 & 12.75 & 17.76 & 13.28 \\
 & TimeGrad & \textbf{0.003} & 0.010 & 0.109 & 0.038 & 0.010 & 0.029 & 16.81 & \textbf{12.42} & 13.86 & 14.95 & 20.49 & 15.13 \\
 & TMDM & 0.002 & \textbf{0.001} & 0.028 & 0.013 & 0.028 & 0.018 & 13.72 & \textbf{8.43} & 9.43 & 12.84 & 18.38 & 13.45 \\
 & TSFlow & \textbf{0.000} & \textbf{0.000} & 0.030 & 0.020 & 0.050 & 0.024 & 13.29 & \textbf{8.77} & 9.61 & 12.57 & 20.87 & 12.97 \\
\midrule
\multirow{8}{*}{ettm1} & CSDI & \textbf{0.000} & 0.002 & 0.027 & 0.013 & 0.007 & 0.024 & 9.32 & \textbf{5.91} & 6.57 & 10.49 & 14.63 & 11.21 \\
 & DeepAR & \textbf{0.000} & 0.002 & 0.028 & 0.009 & 0.002 & 0.022 & 9.18 & \textbf{5.92} & 7.19 & 10.18 & 14.06 & 10.97 \\
 & GP-Copula & \textbf{0.001} & 0.003 & 0.028 & 0.024 & 0.005 & 0.024 & 7.55 & \textbf{7.00} & 8.26 & 11.77 & 14.89 & 11.77 \\
 & NsDiff & 0.007 & \textbf{0.003} & 0.023 & 0.009 & 0.006 & 0.013 & 11.44 & \textbf{8.33} & 8.43 & 11.27 & 15.32 & 11.47 \\
 & TACTiS-2 & \textbf{0.000} & 0.006 & 0.026 & 0.009 & 0.001 & 0.019 & 9.45 & \textbf{6.18} & 7.07 & 10.58 & 14.39 & 11.32 \\
 & TimeGrad & 0.019 & \textbf{0.004} & 0.067 & 0.026 & 0.038 & 0.016 & 14.26 & \textbf{9.89} & 10.95 & 13.45 & 19.58 & 13.76 \\
 & TMDM & \textbf{0.000} & 0.003 & 0.025 & 0.008 & 0.002 & 0.023 & 9.51 & \textbf{5.73} & 6.72 & 10.31 & 14.23 & 11.10 \\
 & TSFlow & 0.000 & 0.004 & 0.032 & 0.013 & \textbf{0.000} & 0.025 & 9.55 & \textbf{6.53} & 7.45 & 10.49 & 14.21 & 11.31 \\
\midrule
\multirow{8}{*}{ettm2} & CSDI & \textbf{0.001} & 0.005 & 0.016 & 0.007 & 0.003 & 0.008 & 11.95 & 7.65 & \textbf{7.36} & 10.87 & 13.51 & 11.34 \\
 & DeepAR & \textbf{0.000} & 0.001 & 0.027 & 0.011 & 0.003 & 0.003 & 12.64 & \textbf{8.16} & 8.62 & 11.33 & 13.79 & 11.75 \\
 & GP-Copula & \textbf{0.000} & 0.002 & 0.029 & 0.016 & 0.011 & 0.006 & 11.36 & \textbf{8.83} & 8.88 & 11.97 & 13.67 & 12.87 \\
 & NsDiff & 0.002 & 0.005 & 0.005 & \textbf{0.002} & 0.011 & 0.004 & 13.81 & \textbf{11.13} & 11.17 & 13.53 & 14.49 & 13.59 \\
 & TACTiS-2 & \textbf{0.001} & 0.009 & 0.020 & 0.004 & 0.011 & 0.009 & 12.34 & 7.39 & \textbf{7.26} & 10.87 & 13.79 & 11.27 \\
 & TimeGrad & 0.002 & \textbf{0.001} & 0.024 & 0.013 & 0.037 & 0.015 & 14.95 & \textbf{14.38} & 14.84 & 16.01 & 19.11 & 16.09 \\
 & TMDM & \textbf{0.000} & 0.002 & 0.011 & 0.009 & 0.002 & 0.004 & 12.25 & 7.96 & \textbf{7.72} & 10.89 & 13.42 & 11.29 \\
 & TSFlow & \textbf{0.001} & 0.002 & 0.002 & 0.008 & 0.004 & 0.005 & 20.47 & \textbf{16.18} & 16.64 & 17.26 & 20.82 & 17.31 \\
\midrule
\multirow{8}{*}{exchange} & CSDI & 0.004 & 0.026 & 0.004 & \textbf{0.003} & 0.027 & \textbf{0.003} & -34.50 & -22.56 & -33.74 & \textbf{-35.03} & -26.83 & \textbf{-35.03} \\
 & DeepAR & 0.012 & 0.065 & \textbf{0.006} & 0.114 & 0.009 & 0.114 & \textbf{-28.47} & -17.98 & -27.50 & -29.07 & -21.76 & -29.07 \\
 & GP-Copula & \textbf{0.004} & 0.042 & 0.016 & 0.088 & 0.026 & 0.088 & \textbf{-32.85} & -20.47 & -30.81 & -32.31 & -26.16 & -32.31 \\
 & NsDiff & \textbf{0.004} & 0.042 & 0.011 & 0.128 & 0.011 & 0.091 & \textbf{-28.63} & -21.37 & -25.40 & -28.80 & -21.38 & -28.37 \\
 & TACTiS-2 & \textbf{0.004} & 0.036 & 0.020 & 0.006 & 0.024 & 0.006 & -34.39 & -22.68 & -33.41 & \textbf{-34.39} & -27.13 & \textbf{-34.39} \\
 & TimeGrad & \textbf{0.009} & 0.026 & 0.473 & 0.845 & 0.838 & 0.845 & \textbf{-10.88} & -8.83 & -12.23 & -17.64 & -6.17 & -17.64 \\
 & TMDM & \textbf{0.009} & 0.026 & 0.034 & 0.374 & 0.246 & 0.364 & \textbf{-12.82} & -9.70 & -14.77 & -18.17 & -9.52 & -18.01 \\
 & TSFlow & \textbf{0.003} & 0.039 & 0.101 & 0.118 & 0.095 & 0.078 & \textbf{-31.84} & -20.47 & -32.35 & -34.26 & -26.32 & -33.38 \\
\midrule
\multirow{7}{*}{traffic} & DeepAR & \textbf{0.003} & 0.067 & 0.027 & 0.417 & 0.049 & 0.417 & \textbf{-1857.94} & -1533.90 & -1786.11 & -1110.90 & -1279.82 & -1110.90 \\
 & GP-Copula & \textbf{0.002} & 0.057 & \textbf{0.002} & 0.467 & 0.047 & 0.467 & \textbf{-1920.81} & -1373.97 & -1677.17 & -1047.90 & -1245.06 & -1047.90 \\
 & NsDiff & \textbf{0.006} & 0.028 & 0.034 & 0.553 & 0.026 & 0.553 & \textbf{-1537.94} & -968.88 & -1653.99 & -1090.52 & -1389.24 & -1090.52 \\
 & TACTiS-2 & \textbf{0.004} & 0.054 & 0.018 & 0.469 & 0.040 & 0.469 & \textbf{-1766.47} & -1485.68 & -1695.04 & -1075.69 & -1308.90 & -1075.69 \\
 & TimeGrad & \textbf{0.001} & 0.050 & 0.033 & 0.709 & 0.031 & 0.709 & \textbf{-1292.86} & -1205.61 & -1273.84 & -955.67 & -929.23 & -955.67 \\
 & TMDM & \textbf{0.003} & 0.043 & 0.027 & 0.486 & 0.026 & 0.486 & \textbf{-1414.25} & -629.93 & -1600.10 & -1086.10 & -1365.39 & -1086.10 \\
 & TSFlow & \textbf{0.002} & 0.035 & 0.031 & 0.471 & 0.039 & 0.471 & \textbf{-1866.32} & -1488.75 & -1755.73 & -1111.23 & -1445.21 & -1111.23 \\
\midrule
\multirow{8}{*}{weather} & CSDI & \textbf{0.001} & 0.015 & 0.022 & 0.065 & 0.007 & 0.035 & 49.05 & \textbf{8.05} & 8.75 & 31.59 & 111.77 & 33.00 \\
 & DeepAR & 0.008 & \textbf{0.007} & 0.020 & 0.137 & 0.029 & 0.078 & 60.59 & \textbf{37.36} & 42.74 & 51.66 & 121.18 & 54.18 \\
 & GP-Copula & \textbf{0.000} & 0.001 & 0.007 & 0.071 & 0.005 & 0.040 & 49.41 & \textbf{25.51} & 27.73 & 46.84 & 112.24 & 49.05 \\
 & NsDiff & \textbf{0.001} & 0.007 & 0.002 & 0.094 & 0.002 & 0.068 & 66.01 & \textbf{55.47} & 57.71 & 61.58 & 113.15 & 62.93 \\
 & TACTiS-2 & \textbf{0.001} & 0.003 & 0.011 & 0.060 & 0.011 & 0.036 & 50.75 & 10.75 & \textbf{9.63} & 31.75 & 112.39 & 33.28 \\
 & TimeGrad & 0.007 & \textbf{0.004} & 0.105 & 0.273 & 0.023 & 0.241 & 61.43 & \textbf{42.84} & 46.47 & 48.32 & 114.37 & 50.19 \\
 & TMDM & \textbf{0.001} & 0.004 & 0.003 & 0.107 & 0.005 & 0.057 & 55.34 & \textbf{37.29} & 38.28 & 45.55 & 109.12 & 47.81 \\
 & TSFlow & \textbf{0.001} & 0.178 & 0.021 & 0.211 & 0.006 & 0.165 & \textbf{64.55} & 68.04 & 80.36 & 45.53 & 114.72 & 46.73 \\
\end{longtable}

\endgroup

\section{Non-stationarity and Performance Gains Analysis}
\label{app:covdrift_performance}

In this appendix, we examine whether the performance gains of \textsc{Space} are associated with the severity of time-varying multivariate dependence structure. Our goal is to test whether datasets with stronger and more rapidly changing joint covariance geometry are precisely the settings in which modeling multivariate uncertainty structure is most beneficial.

\paragraph{Local covariance drift index.}
For each dataset, let $Y_t \in \mathbb{R}^d$ denote the standardized multivariate target at time $t$. We quantify local covariance instability by comparing adjacent rolling windows. For a window size $w$, define
\begin{equation}
W_t^{L} = \{Y_{t-w}, \dots, Y_{t-1}\},
\qquad
W_t^{R} = \{Y_t, \dots, Y_{t+w-1}\},
\end{equation}

with empirical covariance matrices $\Sigma_t^{L}$ and $\Sigma_t^{R}$. Let $\Sigma_{\mathrm{full}}$ denote the covariance matrix of the full standardized series, and define a whitening operator
\begin{equation}
A = (\Sigma_{\mathrm{full}} + \lambda I)^{-1/2},
\end{equation}
where $\lambda > 0$ is a small ridge term used only for numerical stability. We then compute the local covariance-drift score
\begin{equation}
\delta_t
=
\frac{\left\|A(\Sigma_t^{L} - \Sigma_t^{R})A\right\|_F}{\sqrt{d}}.
\end{equation}
This quantity is dimension-normalized and measures how much the local covariance structure changes from one temporal regime to the next, relative to the dataset's own global covariance geometry. To obtain a single dataset-level summary, we aggregate the largest drift episodes using the mean of the top decile of $\{\delta_t\}_t$:
\begin{equation}
\mathrm{LCI}
=
\operatorname{Top10Mean}\bigl(\{\delta_t\}_t\bigr),
\end{equation}
where larger values indicate stronger local covariance drift. In the plots below, datasets are ordered from left to right by decreasing $\mathrm{LCI}$.

\paragraph{Interpretation.}
The local covariance drift index is intended to capture a specific form of nonstationarity: instability in the joint second-order geometry of the target. This is directly relevant for \textsc{Space}, whose construction relies on adaptive multivariate covariance structure when forming calibrated uncertainty sets. If joint covariance geometry is both meaningful and time-varying, then methods that explicitly account for such geometry should benefit most.

\paragraph{Results and ranking.}
Using the local covariance drift index defined above, the datasets are ranked as follows:
\texttt{traffic} ($4.939$),
\texttt{weather} ($3.326$),
\texttt{electricity} ($3.108$),
\texttt{etth1} ($1.738$),
\texttt{exchange} ($1.487$),
\texttt{ettm2} ($1.075$),
and \texttt{ettm1} ($0.867$).
Thus, the left-hand side of Figure~\ref{fig:local_covdrift_all_baselines} corresponds to datasets with substantially stronger local covariance deformation over time, while the right-hand side contains the smaller and more stationary regimes. In particular, \texttt{ettm1} has the smallest local covariance drift index, consistent with its comparatively stable dependence structure.

\paragraph{Association with coverage-gap improvement.}
This ranking is informative for understanding where \textsc{Space} helps most. At the pair level, the local covariance drift index is positively associated with coverage-gap improvement, with Pearson correlation $r=0.357$ ($p=0.0080$) and Spearman correlation $\rho=0.281$ ($p=0.0396$). At the dataset level, where the sample size is necessarily small ($n=7$), the association remains positive but is less statistically stable, with Pearson $r=0.596$ ($p=0.158$) and Spearman $\rho=0.679$ ($p=0.0938$). Looking directly at the dataset means, the strongest gains appear in more difficult regimes such as \texttt{electricity}, whose local covariance drift index is $3.108$ and whose mean coverage-gap improvement is $0.0344$, and \texttt{traffic}, whose local covariance drift index is $4.939$ with mean improvement $0.0217$. By contrast, smaller and more stationary datasets exhibit much smaller gains; for example, \texttt{ettm1} has the lowest local covariance drift index ($0.867$) and also one of the smallest mean improvements ($0.0035$). Overall, the figure supports the interpretation that \textsc{Space} is most beneficial when the target exhibits stronger time-varying multivariate dependence geometry, while performance becomes much closer to the baselines in lower-drift settings.

\paragraph{SPACE versus all baselines.}
To make this trend visible at a dataset level, Figure~\ref{fig:local_covdrift_all_baselines} compares \textsc{Space} against the mean performance of all baseline methods, with datasets ordered by decreasing local covariance drift. The top panel reports mean coverage gap, and the bottom panel reports mean rolling gap. In both panels, points correspond to dataset means across dataset--forecaster pairs, and error bars show mean $\pm 2$ standard errors. The raw points in the background show the individual dataset--forecaster pairs. The same qualitative pattern appears in both panels: \textsc{Space} offers the clearest advantage in the high-drift regime, while performance differences narrow substantially in the low-drift regime.

\begin{figure}[t]
    \centering
    \includegraphics[width=\linewidth]{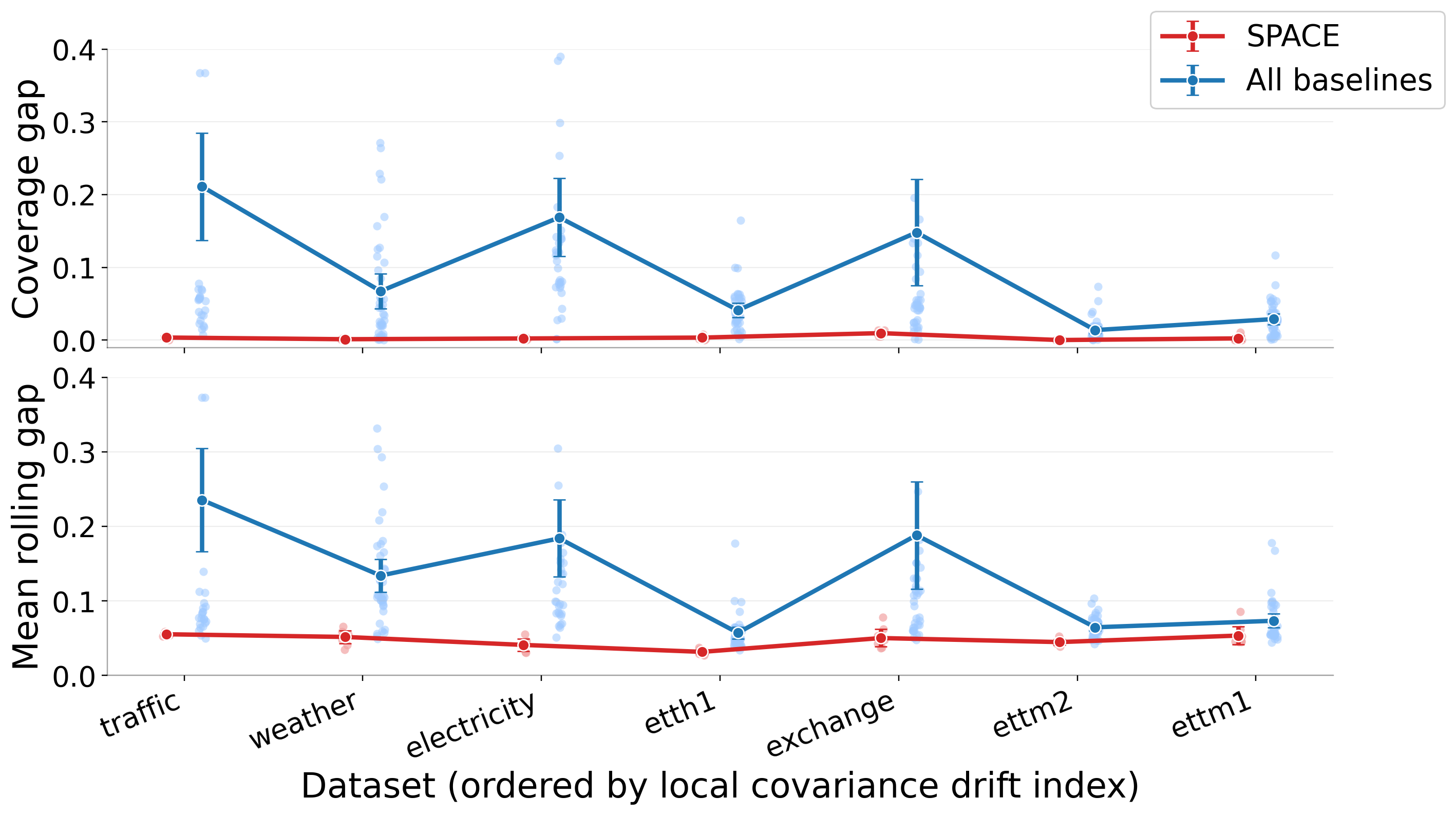}
    \caption{Performance of \textsc{Space} versus the mean over all baselines, with datasets ordered from left to right by decreasing local covariance drift index. Top: mean coverage gap. Bottom: mean rolling gap. Error bars are constructed based on two standard errors of the mean across the dataset--forecaster pairs. The left side of the figure corresponds to higher-dimensional and more nonstationary datasets with stronger local covariance drift, where the advantage of \textsc{Space} is largest; the gap narrows for smaller and more stationary datasets such as \texttt{ettm1}.}
    \label{fig:local_covdrift_all_baselines}
\end{figure}

Overall, this appendix analysis is consistent with the view that \textsc{Space} is especially effective in datasets whose multivariate dependence geometry is both strong and temporally unstable. In contrast, when the target is lower-dimensional and closer to stationary covariance structure, \textsc{Space} remains competitive but its advantage over simpler baselines naturally becomes smaller.

\section{Dynamic Window-Length Diagnostics}
\label{app:dynamicL}

In this appendix, we provide a visual diagnostic for the dynamic window length selected by \textsc{Space}, denoted by \(L_t\). The goal is to illustrate the intended mechanism behind the dynamic rule. When the underlying target process changes regime, the selected memory length is designed to contract so that stale calibration history is downweighted or discarded. When the process becomes more stable again, the selected memory length can recover, allowing the method to reuse longer and more stable calibration windows.

To avoid circularity, we detect regime changes directly from the original multivariate target stream \(y_t\), rather than from the selected \(L_t\) path or from \textsc{Space} diagnostics. Specifically, for each dataset--forecaster pair at the \(90\%\) target level, we align the valid test times, standardize the multivariate target stream, reduce it by PCA to a low-dimensional representation, and then apply the energy-divisive nonparametric multivariate change-point method of \citet{matteson2014nonparametric}. This provides a target-side notion of regime change that is external to the dynamic window-selection rule itself.

Figure~\ref{fig:dynamicL_examples} is constructed to visualize three aspects of the dynamic rule:
\begin{enumerate}[topsep=1pt, itemsep=1pt]
    \item \textbf{Contraction:} whether the selected window length becomes smaller around a detected change point;
    \item \textbf{Recovery:} whether the selected window length increases again after the immediate post-change region;
    \item \textbf{Long-term behavior:} whether the selected window length tends to grow as the detected regime becomes older and more stable.
\end{enumerate}

Each panel in Figure~\ref{fig:dynamicL_examples} contains three aligned subplots. The top subplot shows the selected dynamic window length \(L_t\) over valid test times. The middle subplot shows \emph{Target PC1}, the first principal component of the standardized multivariate target stream, which provides a one-dimensional summary of the dominant variation in the target process. The bottom subplot shows target-side change points detected by the energy-divisive procedure.

The intended visual pattern is as follows. Around a vertical change-point marker, a local downward reset in \(L_t\) indicates that the dynamic rule shortens its calibration memory when the target process changes. Over a more stable segment, \(L_t\) may increase again, indicating that the method can use a longer historical window when the local regime persists.

Taken together, these illustrative plots are consistent with the intended interpretation of the dynamic window-selection mechanism. The selected memory length is not static; rather, it changes over time, contracts around detected target-side disturbances, and in the clearer examples rebuilds during more stable segments. This behavior reflects the design motivation for dynamic \(L_t\): \textsc{Space} uses shorter memory when the recent past becomes unreliable and longer memory when the local regime appears stable enough to benefit from a broader calibration history.

\begin{figure}[t]
    \centering
    \begin{subfigure}[t]{0.49\textwidth}
        \centering
        \includegraphics[width=\linewidth]{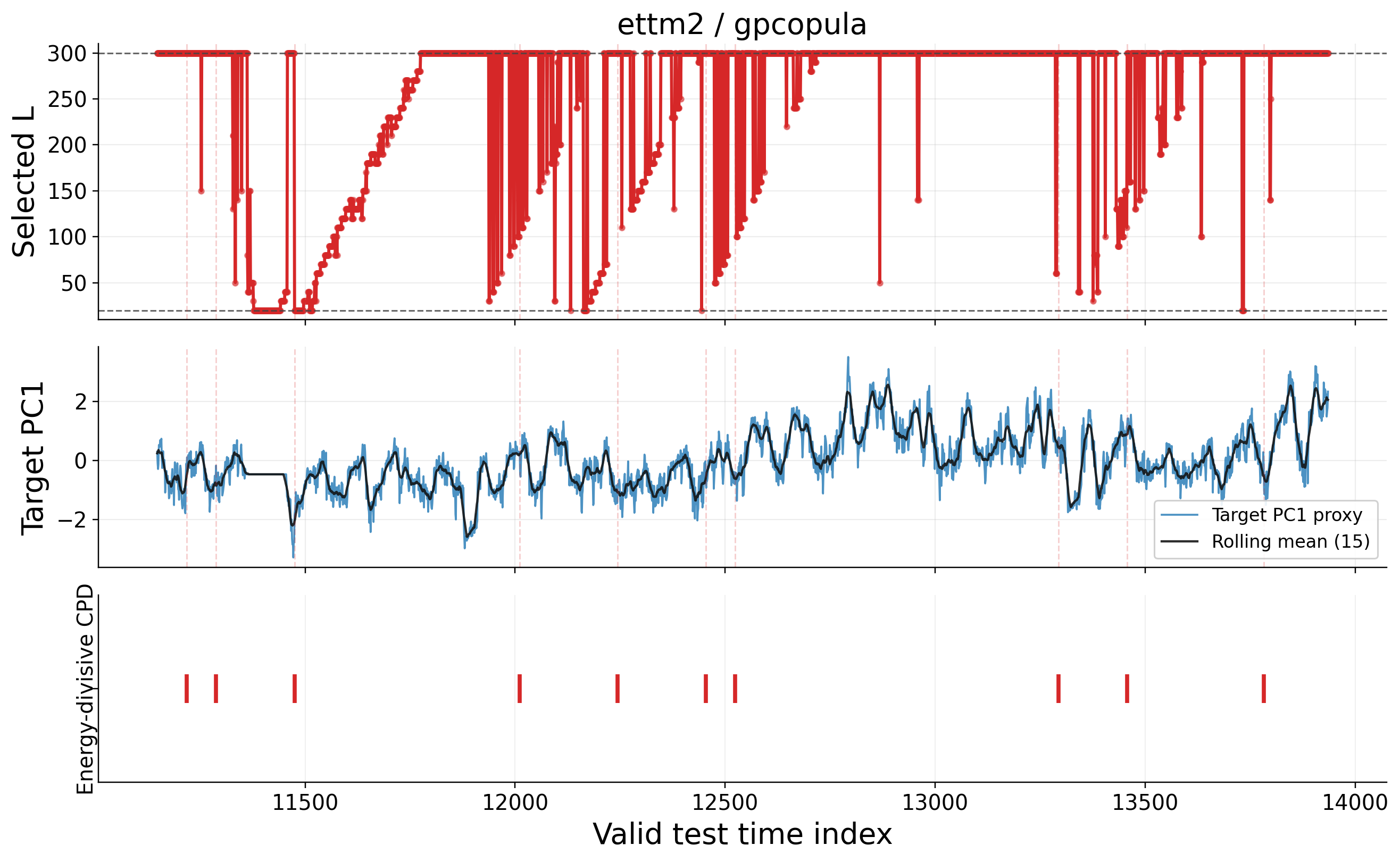}
        \caption{\texttt{ettm2/gpcopula}}
    \end{subfigure}
    \hfill
    \begin{subfigure}[t]{0.49\textwidth}
        \centering
        \includegraphics[width=\linewidth]{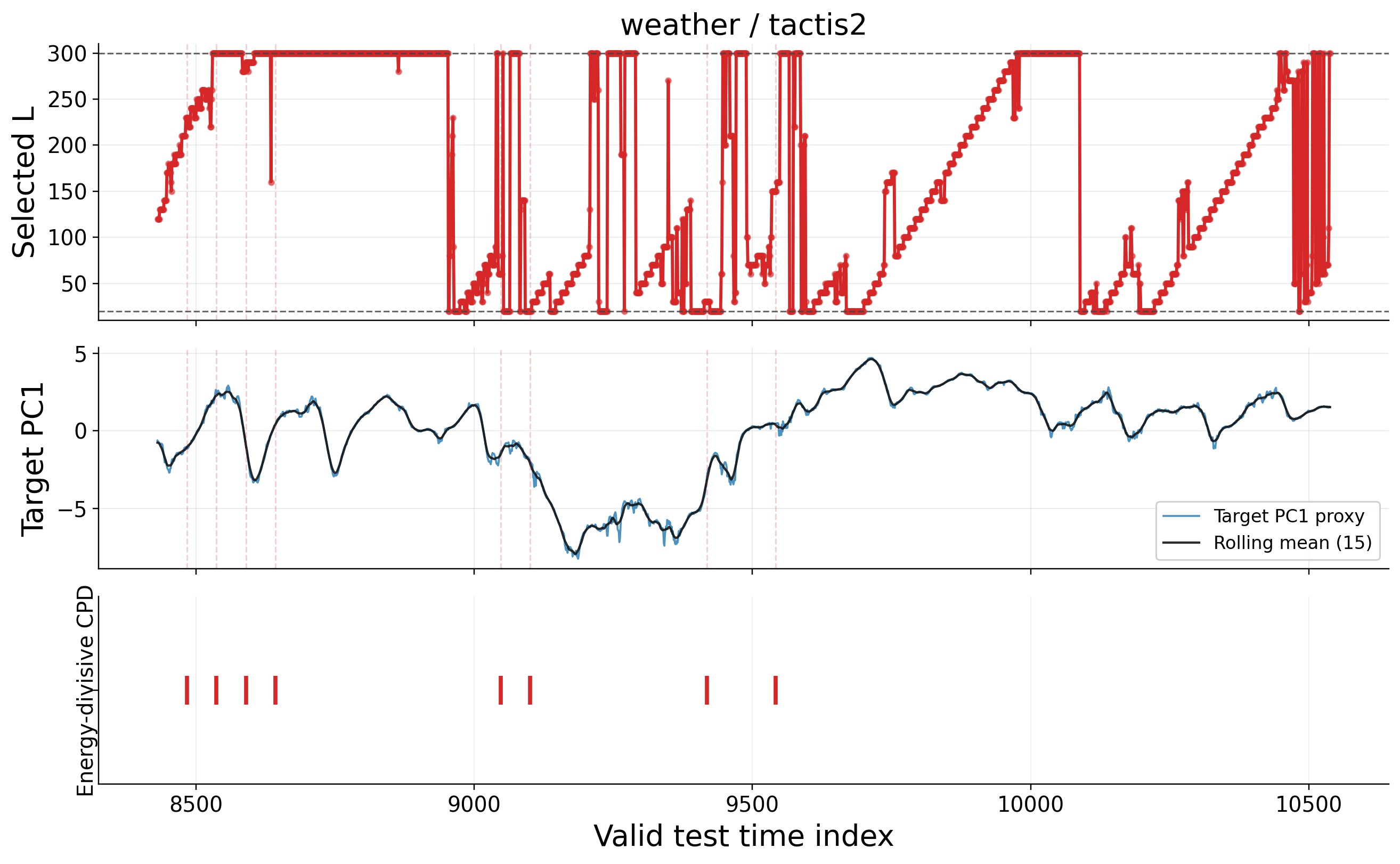}
        \caption{\texttt{weather/tactis2}}
    \end{subfigure}
    \caption{Illustrative dynamic-\(L\) overlays at the \(90\%\) target level using energy-divisive target-side change-point detection~\citep{matteson2014nonparametric}. In each plot, the top panel shows the selected window length \(L_t\), the middle panel shows the first principal component of the standardized valid-test target stream (Target PC1), and the bottom panel shows detected target-side change points. Across these examples, the selected dynamic memory length exhibits active interior adaptation and contraction-and-rebuilding behavior around detected target-side regime changes.}
    \label{fig:dynamicL_examples}
\end{figure}

\section{Implementation details of \textsc{Space}}
\label{app:space_impl}

\subsection{Metric definitions}
\label{app:metric_definitions}

For a fixed target miscoverage level $\alpha$, empirical joint coverage is defined as
\begin{equation}
    \widehat{\mathrm{Cov}}
    =
    \frac{1}{N}\sum_{t=1}^N
    \mathbb{I}\{y_t \in \mathcal{C}_t(\alpha)\}.
\end{equation}
The global coverage gap is the absolute deviation from the target coverage level,
\begin{equation}
    \mathrm{Gap}
    =
    \left|\widehat{\mathrm{Cov}} - (1-\alpha)\right|.
\end{equation}

For rolling diagnostics, let $w$ denote the sliding-window length and let $\mathcal{T}$ denote the set of valid rolling-window endpoints. The rolling joint coverage at endpoint $t \in \mathcal{T}$ is
\begin{equation}
    \widehat{\mathrm{RC}}_t
    =
    \frac{1}{w}\sum_{s=t-w+1}^{t}
    \mathbb{I}\{y_s \in \mathcal{C}_s(\alpha)\}.
\end{equation}
The corresponding rolling coverage gap is
\begin{equation}
    g_t
    =
    \left|\widehat{\mathrm{RC}}_t - (1-\alpha)\right|.
\end{equation}

We summarize these local deviations using three rolling metrics. The mean rolling gap is
\begin{equation}
    \mathrm{Mean\ Rolling\ Gap}
    =
    \frac{1}{|\mathcal{T}|}
    \sum_{t \in \mathcal{T}} g_t
    =
    \frac{1}{|\mathcal{T}|}
    \sum_{t \in \mathcal{T}}
    \left|\widehat{\mathrm{RC}}_t - (1-\alpha)\right|.
\end{equation}
The P90 rolling gap is the $90$th percentile of the rolling coverage gaps,
\begin{equation}
    \mathrm{P90\ Rolling\ Gap}
    =
    Q_{0.9}\!\left(\{g_t : t \in \mathcal{T}\}\right).
\end{equation}
The fraction of bad windows is the proportion of rolling windows whose gap exceeds a tolerance threshold $\delta$,
\begin{equation}
    \mathrm{Frac.\ Bad\ Windows}
    =
    \frac{1}{|\mathcal{T}|}
    \sum_{t \in \mathcal{T}}
    \mathbb{I}\{g_t > \delta\}.
\end{equation}
In all main experiments, we use $w=30$ and $\delta=0.10$.

For efficiency, we report the mean log-volume of the prediction regions,
\begin{equation}
    \textsc{LogVol}
    =
    \frac{1}{N}\sum_{t=1}^N
    \log \mathrm{Vol}\!\left(\mathcal{C}_t(\alpha)\right).
\end{equation}
Lower values are better for coverage gap, rolling diagnostics, and \textsc{LogVol}. The size metric should be interpreted together with calibration, since small sets are meaningful only when coverage is close to the target.

\subsection{Compute resources}
\label{app:compute_resources}

All wrapper experiments were run on a single local workstation equipped with a 13th Gen Intel(R) Core(TM) i9-13980HX CPU, an NVIDIA GeForce RTX 4080 Laptop GPU and 32GB system memory. The reported benchmark evaluates all dataset--forecaster pairs and all target levels included in the full result table in Section~\ref{app:full_result_tables}. Since \textsc{Space} is a post-hoc wrapper, the reported time refers to wrapper execution and evaluation on saved predictive sample streams, excluding the original training of the forecasting backbones. The complete \textsc{Space} run took 1h19m38s. MultiDimSPCI was the most computationally intensive among the baselines, requiring 14h58m33s under the same benchmark scope. The remaining conformal baselines were substantially lighter and completed in comparatively small additional time. Additional exploratory runs and preliminary tuning experiments were performed during development, but the above times describe the compute required to reproduce the reported final benchmark from the saved sample streams.

\subsection{Experimental configuration}
Unless stated otherwise, all forecasters use context window $96$, and one-step-ahead prediction (\texttt{pred\_len}$=1$) to simulate one-step-ahead forecasting. The saved predictive sample streams have $M=100$ forecast samples per time point. Note that, in principle, increasing the number of forecast samples reduces sample-geometry error. In the main benchmark, \textsc{Space} is evaluated under a single fixed configuration. Specifically, the candidate calibration lengths are
\begin{equation}
L \in \{20,30,\dots,300\},
\end{equation}
so that $L_{\min}=20$, $L_{\max}=300$, and the backward extension step size is $h=10$. The probe-block length is set to $p=20$, and the adaptive conformal inference update uses step size $\eta=0.01$.

For numerical robustness, the benchmark implementation uses a shrinkage-stabilized version of the sample-derived covariance estimator, with identity shrinkage parameter $\lambda_{\text{shrinkage}}=0.30$. This shrinkage choice is an empirical implementation device introduced to improve stability of the local covariance estimate in practice, especially when the effective sample size is limited or the sample cloud is ill-conditioned. It is not a separate conceptual component of \textsc{Space}, and it is not required by the theoretical results in Section~\ref{sec:theory}. Our theory is stated at the level of the sample-derived local geometry and its estimation error, so the guarantees do not rely on this particular shrinkage form or on the specific value of $\rho$.

For dynamic same-regime calibration window selection, \textsc{Space} uses a DKW-style KS acceptance threshold to instantiate the same-regime test described in Section~\ref{subsec:dynamic_window}. At time $t$, the threshold is
\begin{equation}
\tau_t(c_\tau)
=
c_\tau
\sqrt{
\frac{
\log\!\left(\frac{2K_tJ}{\delta_\tau}\right)
}{
2p
}
},
\end{equation}
where $c_\tau=2.0$ is a multiplicative tuning constant, $K_t$ is the number of feasible candidate window lengths at time $t$, $J=3$ is the number of diagnostics in the test bundle, $\delta_\tau = 0.05$, and $p=20$ is the probe-block length. The factor $2K_tJ$ accounts for the two block comparisons, the feasible candidate lengths, and the diagnostic bundle. This threshold is the empirical implementation of the KS-style same-regime test; the theoretical analysis abstracts its behavior through Assumption~(A2), which controls false rejection on clean blocks and false acceptance on contaminated extensions.

\section{Dataset Details}
\label{app:asset_licenses}
We use the following seven publicly available datasets for academic research and evaluation purposes. The details of the datasets are listed in Table~\ref{tab:dataset_summary}.

\begin{table}[t]
\caption{Dataset summary for the saved predictive sample streams used in wrapper evaluation. Here $d$ is the target dimension, $T$ is the saved stream length, and Test denotes the \textsc{Space}/baseline wrapper test split.}
\label{tab:dataset_summary}
\centering
\small
\setlength{\tabcolsep}{5pt}
\begin{tabular}{lcccc}
\toprule
Dataset & Freq. & $d$ & $T$ & Test \\
\midrule
electricity & H & 321 & 5261 & 1053 \\
ETTh1 & H & 7 & 3484 & 697 \\
ETTm\{1,2\} & 15 min & 7 & 13936 & 2788 \\
exchange & D & 8 & 1518 & 304 \\
traffic & H & 862 & 3509 & 702 \\
weather & 10 min & 21 & 10539 & 2108 \\
\bottomrule
\end{tabular}
\end{table}

\begin{enumerate}
    \item \texttt{Electricity}. 
    The dataset records electricity consumption for 321 customers and is used as a standard
    multivariate forecasting benchmark.
    URL: \url{https://archive.ics.uci.edu/dataset/321/electricityloaddiagrams20112014}.

    \item \texttt{ETTh1, ETTm1, and ETTm2}. 
    The datasets contain electricity transformer variables, including load and oil temperature,
    recorded at hourly or 15-minute resolution~\cite{zhou2021informer}.
    URL: \url{https://github.com/zhouhaoyi/ETDataset}.

    \item \texttt{Exchange}. 
    The dataset records daily exchange rates of eight countries and is commonly used in
    multivariate time-series forecasting benchmarks~\cite{lai2018modeling}.
    URL: \url{https://github.com/laiguokun/multivariate-time-series-data}.

    \item \texttt{Traffic}. 
    The dataset contains hourly road occupancy rates measured by sensors on San Francisco Bay
    Area freeways and is commonly distributed through multivariate time-series benchmark
    collections~\cite{lai2018modeling}. 
    Raw data are associated with the Caltrans Performance Measurement System (PeMS), which
    requires an account for access.
    URL: \url{https://pems.dot.ca.gov/}; benchmark mirror:
    \url{https://github.com/laiguokun/multivariate-time-series-data}.

    \item \texttt{Weather}. 
    The dataset is commonly distributed through the Autoformer/Time-Series-Library benchmark
    collection~\cite{wu2021autoformer}.
    URL: \url{https://github.com/thuml/Autoformer}.
\end{enumerate}

\end{document}